\def\IJCAIPROCEEDINGSONLY#1{}
\theoremstyle{plain}
\newtheorem{theorem}{Theorem}[section]
\newtheorem{proposition}[theorem]{Proposition}
\newtheorem{lemma}[theorem]{Lemma}
\newtheorem{corollary}[theorem]{Corollary}
\theoremstyle{definition}
\newtheorem{definition}[theorem]{Definition}
\theoremstyle{remark}
\newtheorem{remark}[theorem]{Remark}
\theoremstyle{remark}
\newtheorem{example}[theorem]{Example}
\theoremstyle{definition}
\newclass{\sharpP}{\text{\rm\#}P}
\newcommand\myldots{\!\makebox[1em][c]{.\hfil.\hfil.}}  
\renewcommand{\R}{\mathbb{R}}
\newcommand{\DeclareMathActive}[2]{%
  \expandafter\edef\csname keep@#1@code\endcsname{\mathchar\the\mathcode`#1 }
  \begingroup\lccode`~=`#1\relax
  \lowercase{\endgroup\def~}{#2}%
  \AtBeginDocument{\mathcode`#1="8000 }%
}
\newcommand{\std}[1]{\csname keep@#1@code\endcsname}
\patchcmd{\newmcodes@}{\mathcode`\-\relax}{\std@minuscode\relax}{}{\ddt}
\let\IsInPP=\relax
\newcommand{\compactEquals}[1]{\let\IsInPP=1#1\let\IsInPP=\relax}
\renewcommand{\PP}[1]{\mathbb{P}(\compactEquals{#1})}
\newcommand{\fM}{{\mathfrak M}}
\newcommand{\fx}{{\mathfrak x}}
\newcommand{\fu}{{\mathfrak u}}
\newcommand{\ff}{{\mathfrak f}}
\newcommand{\ft}{{\mathfrak t}}
\newcommand{\cG}{{\mathcal G}}
\newcommand{\cF}{{\mathcal F}}
\renewcommand{\cL}{{\mathcal L}}
\newcommand{\bI}{{\bf I}}
\newcommand{\bL}{{\bf L}}
\newcommand{\bO}{{\bf O}}
\newcommand{\bR}{{\bf R}}
\newcommand{\bU}{{\bf U}}
\newcommand{\bV}{{\bf V}}
\newcommand{\bY}{{\bf Y}}
\newcommand{\bX}{{\bf X}}
\newcommand{\be}{{\bf e}}
\newcommand{\bi}{{\bf i}}
\newcommand{\bl}{{\bf l}}
\newcommand{\bo}{{\bf o}}
\newcommand{\br}{{\bf r}}
\newcommand{\bu}{{\bf u}}
\newcommand{\bv}{{\bf v}}
\newcommand{\by}{{\bf y}}
\newcommand{\bx}{{\bf x}}
\def\dop{\textit{do}}
\def\false{\textsc{false}}
\def\true{\textsc{true}}
\def\termdef{\equiv}
\def\pfalse{\ff}
\def\ptrue{\ft}
\def\maxvaluecount{c}
\def\complexityclass#1{{\tt #1}}
\def\ccNP{\complexityclass{NP}}
\def\ccPSPACE{\complexityclass{PSPACE}}
\def\ccEXPSPACE{\complexityclass{EXPSPACE}}
\def\ccNEXPTIME{\complexityclass{NEXPTIME}}
\newcommand{\subscript}{interven}
\newcommand{\SATprobpolysum}{\mbox{\sc Sat}_{\textit{prob}}}
\newcommand{\SATcausalpolysum}{\mbox{\sc Sat}_{\textit{\subscript}}}
\def\Lprop{\cL_{\textit{prop}}}
\def\Lint{\cL_{\textit{int}}}
\def\Lfull{\cL_{\textit{post-int}}}
\def\Tpolysum{T}
\def\Lprobpolysum{\cL_{\textit{prob}}}
\def\Lcausalpolysum{\cL_{\textit{causal}}^{\textit{poly}+\Sigma}}
\def\Lcausalpolysum{\cL_{\textit{\subscript}}}
\def\Ext{\textit{Ext}}
\newcommand{\cETR}{\ensuremath{\exists\Rset}}
\newcommand{\csuccETR}{\ensuremath{\mathtt{succ}\exists\Rset}}
\newcommand{\ETR}{\ensuremath{\mathrm{ETR}}}
\newcommand{\succETR}{\ensuremath{\mathrm{succETR}}}
\newcommand{\FEASfour}{\ensuremath{4\text{-}\mathrm{FEAS}}}
\newcommand{\succFEASfour}{\ensuremath{\mathrm{succ}\text{-}4\text{-}\mathrm{FEAS}}}
\newcommand{\QUAD}{\ensuremath{\mathrm{QUAD}}}
\newcommand{\succQUAD}{\ensuremath{\mathrm{succQUAD}}}
\newcommand{\lep}{\le_p}
\newcommand{\sigmaETR}{\ensuremath{\Sigma_{\textit{vi}}\text{-}\mathrm{ETR}}}
\newcommand{\sigmapiETR}{\ensuremath{\Sigma\Pi_{\textit{vi}}\text{-}\mathrm{ETR}}}
\newcommand{\ETRc}[1]{\ensuremath{\mathrm{ETR}^{#1,+,\times}}}
\newcommand{\succETRc}[1]{\ensuremath{\mathrm{succETR}^{#1,+,\times}}}
\newcommand{\ETRcR}[2]{\ensuremath{\mathrm{ETR}^{#1,+,\times}_{[#2]}}}
\newcommand{\succETRcR}[2]{\ensuremath{\mathrm{succETR}^{#1,+,\times}_{[#2]}}}
\newcommand{\Rset}{\mathbb{R}}
\def\mmid{ | }
\definecolor{ba.yellow}{RGB}{252,190,18}
\definecolor{ba.gray}{RGB}{153,153,156}
\definecolor{ba.blue}{RGB}{6,123,164}
\definecolor{ba.red}{RGB}{213,96,98}
\definecolor{ba.orange}{RGB}{233,116,81}
\definecolor{ba.pine}{RGB}{67,154,134}
\definecolor{ba.green}{RGB}{196,247,161}
\definecolor{ba.violet}{RGB}{88, 53, 94}
\tikzset{exp/.style={color=green!50!black}}
\tikzset{res/.style={color=blue}}
\tikzset{lat/.style={color=gray}}
\tikzset{
  pico/.style = {
    every node/.style = {
      draw,
      circle,
      semithick,
      inner sep = 0pt,
      minimum width = 0.7ex,
      fill = white
    },
    semithick
  },
  edge/.style = {
    semithick
  },
  arc/.style = {
    edge,
    ->,
    >={[round,sep]Stealth}
  },
}
\tikzset{
  axis/.style = {
    semithick,
    ->,
    >={[round,sep]Stealth},
  },
  tick/.style = {
    thin,
    font=\small
  },
  timeout/.style = {
    semithick,
    densely dashed,
    color = ba.red,
    font=\small,
  },
  mean_dot/.style = {
    draw, fill,
    circle,
    inner sep = 0pt,
    minimum width = 1mm
  }
}
\title{The Hardness of Reasoning about Probabilities and Causality}
\author{
  Benito van der Zander$^1$\and
  Markus Bl\"{a}ser$^{2,\,*}$\and
  Maciej Li\'{s}kiewicz$^{1,}$\footnote{Equally last authors.}
\affiliations
$^1$Institute of Theoretical Computer Science, University of L\"{u}beck, Germany\\
$^2$Saarland University, Saarland Informatics Campus, Saarbr\"{u}cken, Germany
\emails
   benito@tcs.uni-luebeck.de,
   mblaeser@cs.uni-saarland.de,
   liskiewi@tcs.uni-luebeck.de
}
\newcommand{\citet}{\cite}
\newcommand{\citep}{\cite}
\begin{document}

\maketitle

\begin{abstract}
We study formal languages which are capable of fully expressing quantitative
probabilistic reasoning and do-calculus reasoning for causal effects,
from a computational complexity perspective. 
We focus on satisfiability problems whose instance formulas allow expressing many 
tasks in probabilistic and causal inference.  
The main contribution of 
this work is establishing the exact computational complexity of these 
satisfiability problems. 
We introduce a new natural complexity class, 
named $\csuccETR$, which can be viewed as a succinct variant of the 
well-studied class $\exists\R$, and show that the problems we consider are complete 
for $\csuccETR$. Our results imply even stronger algorithmic limitations 
than were proven by Fagin, Halpern, and Megiddo (1990) and 
Moss\'{e}, Ibeling, and Icard (2022) for some variants of the standard
languages used commonly in probabilistic and causal inference.
\end{abstract}

\section{Introduction}
Satisfiability problems play a key role in a broad range of research fields, including AI, 
since many real-life tasks can be reduced in a natural and efficient way to instances 
of satisfiability expressed in a suitable formal language. A prominent example is the 
Boolean satisfiability problem ({\sc Sat}) whose instances represent Boolean formulas 
of propositional logic with the goal to determine whether there exists an assignment that 
satisfies a given formula. It is well known that any decision problem in the complexity 
class $\ccNP$ can be reduced in polynomial time to the {\sc Sat} problem,
see \cite{michael1979computersNP}. 

In this work, we investigate satisfiability problems (and their validity counterparts)
whose instance formulas allow expressing many tasks in probabilistic and causal inference. The formulas are specified 
in languages commonly used in pure probabilistic and interventional reasoning.
In particular, the probabilistic language, called in  
Pearl's Causal Hierarchy\footnote{See \cite{pearl2018book} for a first-time introduction to the topic.}
the \emph{associational} one, consists of Boolean combinations of 
(in)equalities involving pure probabilities as, e.g., 
$\PP{y \mmid x}$ which\footnote{In our paper, by 
$\PP{y\mmid x}$, $\PP{y\mmid\dop(x)}$,  etc., we mean $\PP{Y=y\mmid X=x}$, 
$\PP{Y=y\mmid\dop(X=x)}$, etc.} 
can ask  \emph{how likely is $Y=y$ given that observing $X=x$?}
An example formula in this language is the following single equality
\begin{align}\label{eq:factor}
& \sum_{u,x,z,y}(\PP{u,x,z,y} - \PP{u} \PP{x\mmid u} \PP{z\mmid x} \PP{y\mmid z,u})^2 = 0
 \end{align}
 which expresses the fact that the joint distribution can be factorized as
 $
   \PP{u,x,z,y} = \PP{u} \PP{x\mmid u} \PP{z\mmid x} \PP{y\mmid z,u},
$ 
for all values $u,x,z,y$.
The causal language extends the probabilistic language by allowing additionally to use terms involving 
Pearl's do-operator \cite{Pearl2009} as, e.g., $\PP{y\mmid\dop(x)}$
which can ask hypothetical questions such as
\emph{how likely would $Y = y$ be given an intervention setting $X$ to $x$?}
An example formula in this language is the single equality 
 \begin{equation}
  \mbox{$\PP{y\mmid\dop(x)} = \sum_{z} \PP{z\mmid x}\sum_{x'}\PP{y\mmid x', z}\PP{x'}$}
\label{eqn:front:door}
\end{equation}
which allows estimating the (total) causal effect of the intervention $X=x$ on outcome 
variable $Y=y$ via the prominent front-door adjustment  
\cite{pearl1995causal}. 
It is well known that this formalism enables inference of properties that are impossible to 
reason about from correlational data using a purely probabilistic framework
\cite{shpitser2008complete,bareinboim2022pearl}.

Similarly as in the classical {\sc Sat}, both languages allow natural, polynomial-time reductions of many tasks
in probabilistic and causal inference to the satisfiability (or validity) problems. 
Prominent examples can be establishing properties of 
\emph{structural causal models} \cite{glymour2014discovering,Pearl2009,koller2009probabilistic,Elwert2013}
which endow researchers with graphical structures (also called causal Bayesian networks) 
to encode conditional independences and causal assumptions as a directed acyclic graph (DAG).
For instance, the problem to decide, for a given DAG $\cG$ and variables $X,Y,Z$, 
whether in every structural causal model compatible with the structure $\cG$, the 
causal effect of $X$ on $Y$ can be inferred via front-door 
adjustment~\eqref{eqn:front:door} can be reduced in polynomial time  to the validity problem.
Indeed, for any $\cG,X,Y,Z$, one can compute a formula $\varphi$ 
whose size is bounded by a polynomial in the number of nodes of $\cG$, 
such that the front-door adjustment is applicable, if and only if, $\varphi$ is valid.
E.g., for the DAG~$\cG$:
\begin{center}
\begin{tikzpicture}[xscale=1.5,yscale=1.0]
\tikzstyle{every node}=[inner sep=1pt,outer sep=1pt, minimum size=10pt,];
\tikzstyle{every edge}=[draw,->,thick]
\node (X) at (-0.8,0) {$X$};
\node (Z) at (0,0) {$Z$};
\node (Y) at (0.8,0,0) {$Y$};
\node (U) at (0.0,0.8) {$U$};
\draw (X) edge (Z); 
\draw (Z) edge (Y);
\draw (U) edge (X);
\draw (U) edge (Y);
\end{tikzpicture}
\end{center}
the formula $\varphi$ looks as follows $(\psi_1  \wedge \psi_2) \Rightarrow \varphi_{\textit{fda}}$, 
where $\varphi_{\textit{fda}}$ is the equality~\eqref{eqn:front:door} representing the front-door adjustment,
$\psi_1$ is the equality~\eqref{eq:factor} encoding the probability factorization, 
and $\psi_2$ is a conjunction of equalities, whose conjuncts encode edge orientations.
E.g., 
$X\to Z$ can be expressed as 
 \begin{equation} \label{eqn:front:orientation}
	  \mbox{$  \sum_{x,z} (\PP{x \mmid \dop(z)} - \PP{x})^2=0.$}
\end{equation}
The correctness of this reduction follows from the fact, that a
structural causal model over $X,Y,Z,U$ is supplemented with the above DAG  $\cG$ if 
$\psi_1  \wedge \psi_2$ encodes $\cG$. 
Interestingly, using do-calculus, one can prove that the front-door 
adjustment is applicable for the discussed instance which means that 
the formula $(\psi_1  \wedge \psi_2) \Rightarrow \varphi_{\textit{fda}}$ is valid.

\paragraph{Our Contribution.} 
Despite its importance, computational complexity aspects
of the satisfiability problems for the general probabilistic and 
causal languages
remain unexplored. In our work, we investigate 
these issues and establish, for the first time, the exact computational 
complexity of the satisfiability problem for probabilistic languages, 
denoted as $\SATprobpolysum$, 
and for the interventional level of the causal hierarchy, denoted as $\SATcausalpolysum$.
 They simultaneously indicate  
the complexity of the validity problems for these languages.

Our 
results are based on a novel extension of the well-studied 
complexity class $\exists\R$ which, loosely speaking, consists of problems 
that are polynomial time reducible to deciding whether a system of 
polynomial (in)equalities over real unknowns has a solution.
The class $\exists \mathbb{R}$ includes, in an obvious way, $\ccNP$
and, what is highly nontrivial, it is contained in $\ccPSPACE$
\cite{grigoriev1988solving,existentialTheoryOfRealsCanny1988some,existentialTheoryOfRealsSchaefer2009complexity}.
However, none of the inclusions  $\ccNP \subseteq \exists\R \subseteq \ccPSPACE$
are known to be strict.
Our new complexity class,  named $\csuccETR$,  can be 
viewed as a succinct variant of $\exists\R$ and it is an intermediate 
class between the exponential versions of $\ccNP$ and $\ccPSPACE$:
\begin{align}\label{eq:inclusions:succETR}
	&\ccNEXPTIME \subseteq \csuccETR \subseteq \ccEXPSPACE.
\end{align}
 The main contribution of this paper is to show that deciding the satisfiability 
of both probabilistic reasoning and  reasoning about causal interventions 
is complete for $\csuccETR$:  
\begin{theorem}[Main]\label{thm:main:result}
	The satisfiability problems $\SATprobpolysum$ and $\SATcausalpolysum$ are $\csuccETR$-complete.
 \end{theorem}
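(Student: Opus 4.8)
We prove membership and hardness separately. Since $\Lprobpolysum$ is a syntactic fragment of $\Lcausalpolysum$ (and the semantics agree on it), it suffices to show that $\SATcausalpolysum \in \csuccETR$ and that $\SATprobpolysum$ is $\csuccETR$-hard.

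\textbf{Membership.} The plan is to reduce $\SATcausalpolysum$ to $\succETR$. The first ingredient is a bounded-model lemma: if a formula $\varphi \in \Lcausalpolysum$ over variables with value sets of size at most $\maxvaluecount$ is satisfied by some structural causal model, then it is satisfied by one in a canonical form, specified by a real parameter vector of length $2^{O(|\varphi|)}$ (a probability distribution over the exogenous domain together with value tables for the deterministic structural functions). The second ingredient is that every term occurring in $\varphi$ --- in particular each interventional probability $\PP{\by \mmid \dop(\bx)}$, whose value is obtained by unrolling the structural equations in topological order under the intervention --- evaluates, in a canonical model, to a polynomial in these parameters with at most $2^{O(|\varphi|)}$ monomials, and this polynomial is \emph{uniform}: a polynomial-size Boolean circuit maps the binary-encoded index of a monomial to the monomial itself. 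Combining these circuits according to the Boolean and arithmetic structure of $\varphi$ produces, in polynomial time, a single circuit that succinctly describes a system of polynomial (in)equalities over the reals which is solvable if and only if $\varphi$ is satisfiable. This is exactly a $\succETR$ instance, so $\SATcausalpolysum \in \csuccETR$.

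\textbf{Hardness.} We reduce from a succinct normal-form problem that is $\csuccETR$-complete, e.g.\ (a range-restricted version of) $\succFEASfour$: its instances are Boolean circuits $C$ that succinctly encode the --- exponentially many --- coefficients $c_{ijkl}$ of a degree-$4$ polynomial over unknowns $x_0,\dots,x_{N-1}$ with $N = 2^n$, confined to $[0,1]$, and one asks whether the polynomial has a root in the box. Given $C$ we build, in time $\mathrm{poly}(|C|)$, a formula $\varphi_C \in \Lprobpolysum$ over $O(n)$ binary random variables as follows. A block of $n$ variables carries an index $i \in \{0,1\}^n$, so that the $N$ joint-probability atoms over that block are (scaled) copies of the $x_i$; a further $O(|C|)$ auxiliary variables play the role of the gates of $C$, and indicator-type polynomial equalities force them to compute $C$ correctly on the index bits, in the spirit of the Cook--Levin encoding, thereby making the coefficients $c_{ijkl}$ available as further probability expressions. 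The polynomial equation ``$\sum_{i,j,k,l} c_{ijkl}\, x_i x_j x_k x_l = 0$'' is then imposed by a single equality built with the $\sum$-operator as in~\eqref{eq:factor}, whose summation ranges over all $4$-tuples of indices with a polynomial-size expression; a standard rescaling handles the fact that probabilities sum to one. One checks that solutions of the succinct system correspond to distributions satisfying $\varphi_C$. Hence $\SATprobpolysum$, and therefore also $\SATcausalpolysum$, is $\csuccETR$-hard, and together with the membership argument both problems are $\csuccETR$-complete.

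\textbf{Main obstacle.} The technical core is the hardness reduction: one must simulate an arbitrary Boolean circuit on the ``address'' variables \emph{inside} a single polynomial-size formula of the restricted language while keeping the exponentially many per-index arithmetic constraints faithful, and one must cope with the fact that the language speaks only about genuine probabilities --- nonnegative and summing to one --- which is why we start from a range-restricted normal form. For membership, the counterpart difficulty is proving the canonical-form lemma at the interventional layer and verifying that the unrolled interventional polynomials, and hence the whole induced system, are polynomial-time uniform.
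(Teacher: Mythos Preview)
Your high-level decomposition---show $\SATcausalpolysum\in\csuccETR$ and $\SATprobpolysum$ $\csuccETR$-hard---is exactly the paper's, and your hardness sketch is close in spirit to what the paper does: encode the real unknowns as (affine transforms of) conditional probabilities, introduce fresh random variables for the gates of the index circuit, arithmetize the circuit so that gate probabilities propagate $0/1$ values, and collapse the exponentially many per-index equations into one sum-of-squares equality using the $\Sigma$-operator. The paper, however, does not start from range-restricted $\succFEASfour$ but first isolates a very rigid normal form, $\succETRcR{1/8}{-1/8,1/8}$ (equations only of the shapes $x=1/8$, $x+y=z$, $xy=z$, with all variables in $[-1/8,1/8]$), proves it $\csuccETR$-complete, and reduces from there. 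The payoff is that each unknown maps to a single conditional-probability expression with the range constraint for free, and the ``selector'' that turns the circuit output into the right variable is a short closed-form product rather than a Cook--Levin style tableau; your route should also go through, but you will have to work harder to keep the degree-$4$ monomials and their coefficients aligned inside one $\Lprobpolysum$ formula.

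The more substantial divergence is on membership. The paper does \emph{not} build a $\succETR$ circuit directly from a canonical SCM; instead it introduces an intermediate problem $\sigmaETR$---ordinary $\ETR$ augmented with a summation operator $\sum_{x_j=a}^b$ and binary variable indexing $x_{n(e_1,\dots,e_m)}$---proves $\sigmaETR$ is $\csuccETR$-complete, and then reduces $\SATcausalpolysum$ to $\sigmaETR$ by taking one real variable per post-interventional atomic event and letting the $\Sigma$'s of the input formula become $\Sigma$'s of $\sigmaETR$ verbatim. This buys two things your sketch has to earn the hard way: (i) no explicit small-model theorem for SCMs is needed, since one works with the interventional distributions themselves and only needs that any family of such distributions (each summing to~$1$ over events consistent with its intervention) is realized by some SCM; and (ii) there is no ``uniform monomial-enumeration circuit'' to construct for interventional probabilities---the step you flag as the main obstacle---because the exponentially large polynomials never have to be written out, even succinctly. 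Your direct route is plausible, but making the uniformity claim precise (encoding the function tables as $0/1$ reals, unrolling the structural equations under every intervention, and showing the resulting monomial map is computed by a poly-size circuit) is real work that the paper's $\sigmaETR$ detour avoids entirely.
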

 
 Since $\csuccETR$ contains all problems decidable 
in non-deterministic exponential time, this shows a significant jump in complexity 
compared to the classic {\sc Sat} problem for propositional logic which is known 
to be $\ccNP$-complete.

\paragraph{The Existential Theory of the Reals.}
The Existential Theory of the Reals  (ETR) asks, given a Boolean combination of (in)equalities of multivariate polynomials, called a sentence, whether there exists an assignment of real numbers to the variables of the polynomials that satisfies the combination of (in)equalities?
The example
$$
\exists x\exists y\quad   x^2 -y =0 \ \wedge \ x^3 -x=0 \ \wedge \ (x> 0\  \vee\ y\le -1)
$$
illustrates  a \emph{yes} instance of ETR because $(x,y)=(1,1)$ is a solution of the sentence.
The theory forms its own complexity class $\exists \mathbb{R}$ which is 
defined as the closure of the ETR under polynomial time many-one reductions.
The importance of $\exists \mathbb{R}$ is underlined by the fact, that 
many  meaningful problems including algebraic, geometric, graph- and 
game-theory, machine learning, and continuous constraint satisfaction problems, 
have been classified as complete for $\exists \mathbb{R}$, see e.g.\
\cite{DBLP:journals/mst/SchaeferS17,abrahamsen2018art,garg2018r,abrahamsen2021training,miltzow2022classifying}.
Moreover, assuming $\ccNP\not= \exists \mathbb{R}$ which is widely believed,
the $\exists \mathbb{R}$-completeness of a problem reflects the limitation on the use
of algorithmic approaches to solve the problem, 
as, e.g., dynamic programming, divide and conquer, 
 tree-width based algorithms,  or {\sc Sat}-solver approaches
which are applicable for instances of $\ccNP$-complete problems.
The $\csuccETR$-completeness  of~$\SATprobpolysum$ and $\SATcausalpolysum$
(Theorem~\ref{thm:main:result}) 
imply even stronger algorithmic limitations than could be deduced 
from the $\exists \mathbb{R}$-completeness results of 
\cite{ibeling2022mosse} (discussed below) which 
concerns, however, some restricted variants of the languages.

 \paragraph{Previous Work on the Hardness of Satisfiability Problems.}
In the past decades, several research groups have studied the computational
complexity aspects of satisfiability problems for some limited languages of
probabilistic and causal inference.
In their pioneering work, \citeauthor{fagin1990logic}~[\citeyear{fagin1990logic}]
consider a language for 
reasoning about pure probabilities, which consists of Boolean combinations 
of \emph{linear} (in)equalities over probabilities, like 
$\PP{X=1\vee X=2}+2\PP{Y=1}\le \sfrac{1}{3}
\ \wedge\ \PP{Y=1}\ge \sfrac{2}{3}$.
The authors provide a complete axiomatization for the logic and show that the 
problem of deciding satisfiability
is NP-complete which, surprisingly, 
is no worse than that of propositional logic.
Later on, further studies explored  the complexity aspects of probability logics
 \cite{abadi1994decidability,speranski2013complexity} and reasoning about causal models
\cite{halpern2000axiomatizing,eiter2002complexity,aleksandrowicz2017computational}.

Most germane to our work are the recent studies of 
\citeauthor{ibeling2020probabilistic}~[\citeyear{ibeling2020probabilistic}]
and \citeauthor{ibeling2022mosse}~[\citeyear{ibeling2022mosse}]
which extended the formalism of \cite{fagin1990logic},
providing the more expressive languages 
for the second (\emph{interventional}) and third (\emph{counterfactual}) level of Pearl's Causal Hierarchy.
The~languages consist of Boolean combinations of \emph{polynomial} (in)equalities 
over pure probabilities or over probabilities involving the do-operator, respectively.
In particular, they allow to express \emph{conditioning}, as, e.g.,
$\PP{y\mmid x}=\PP{y, x}/\PP{x}$. 
But  the languages limit the  use  of \emph{marginalization} since 
the summation operator $\Sigma$ over the domain of random 
variables, as used, e.g., in Eq.~\eqref{eq:factor}--\eqref{eqn:front:orientation}, 
is not allowed.
Consequently,  to express the marginal distribution of  a variable $Y$ over 
a subset of variables $\{Z_1,\ldots,Z_m\}\subseteq \{X_1,\ldots, X_n\}$, as
$\sum_{z_1,\ldots,z_m} \PP{y,z_1,\ldots,z_m}$, in the language without summation
requires an extension 
$\PP{y,Z_1=0,\ldots,Z_m=0} + \ldots +\PP{y,Z_1=1,\ldots,Z_m=1}$
of exponential size in $m$. (In the expression, we assume that the $Z_i$ are  binary variables). 
Similarly,  $\PP{y \mmid  \dop(x)}  =  \sum_z \PP{y  \mmid x, z} \PP{z}$
is expressed 
equivalently as
$\PP{y \mmid  \dop(x)}  =  \PP{y  \mmid x, Z=0} \PP{Z=0} +  \PP{y  \mmid x, Z=1} \PP{Z=1}$.
While  this might be acceptable for one variable $Z$, the 
expansion of the sums grows exponentially when we have several variables.
Therefore, having an explicit summation operator is highly desirable.

\citeauthor{ibeling2020probabilistic}~[\citeyear{ibeling2020probabilistic}] and
 \citeauthor{ibeling2022mosse}~[\citeyear{ibeling2022mosse}]
give complete axiomatizations of these languages and investigate the 
computational complexity of the satisfiability problem for each language. 
In particular, they prove that,  although the languages for 
interventional and counterfactual inference 
are more expressive 
than the ones for probabilistic reasoning, the computational complexities of the
satisfiability problem for the logics
are both $\exists\R$-complete.
Combining these results with ours, we obtain 
a precise complexity theoretic classification of the satisfiability problems for 
all three levels of Pearl's Causal Hierarchy combined with 
various languages for terms  involving  probabilities
(proven by $^{\dagger}$: \citeauthor{fagin1990logic}~[\citeyear{fagin1990logic}],
$^{\ddagger}$: \citeauthor{ibeling2022mosse}~[\citeyear{ibeling2022mosse}]):
%


\begin{center}\footnotesize
\setlength\extrarowheight{3pt}
\begin{tabular}{|l|c|c|c|l|}
\hline
\emph{Terms} & \emph{prob.} & \emph{interven.} & \emph{counterfact.} & \emph{Source}\\[3pt]\hline
 \emph{lin} &  \multicolumn{3}{c|}{$\ccNP$-complete} &  $^{\dagger}$, $^{\ddagger}$\\[3pt]\hline
 \emph{poly} &  \multicolumn{3}{c|}{$\exists\R$-complete} &   $^{\ddagger}$\\[3pt]\hline
 \emph{poly} \& $\Sigma$ &  \multicolumn{2}{c|}{$\csuccETR$-complete} &$\csuccETR$-hard& Thm.~\ref{thm:main:result}\\[3pt] \hline 
\end{tabular}
\end{center}

While the languages of \citet{ibeling2022mosse} are capable of fully expressing quantitative
probabilistic reasoning, 
respectively,  do-calculus reasoning for causal effects, as discussed above, due to the lack of 
the summation operator $\Sigma$,  
they do not capture the standard notation used commonly in probabilistic and causal inference.
%
Consequently, to analyze the computational complexity aspects of the inference,
languages that exploit the standard notation need to be used. Indeed,
for the computational complexity, the key issue is how the instances are represented
since the time or space complexities are functions in the length of the input.
For example, if the language as in \citet{ibeling2022mosse} would be used 
(instead of the succinct one with summation operator), then the time complexity of many algorithms, 
including, e.g., the seminal 
\citeauthor{ShpitserIDCAlgorithm} [\citeyear{ShpitserIDCAlgorithm}]  algorithm to estimate the interventional 
distribution, would jump from polynomial to exponential.
Another problem would also be to construct polynomial time reductions to the satisfiability problems
since using the succinct encodings as, e.g., in 
Eq.~\eqref{eq:factor}--\eqref{eqn:front:orientation}, would not be allowed.

\paragraph{Structure of the Paper.}
The remainder of this work is dedicated to proving Theorem~\ref{thm:main:result}.
After providing preliminaries (Sec.~\ref{sec:lang} and \ref{sec:sat:problems}),
we introduce the class $\csuccETR$ in Sec.~\ref{sec:succETR} and provide 
the first complete problems for $\csuccETR$ in Sec.~\ref{sec:first:compl:problems}.
In Sec.~\ref{sec:completeness}, we prove 
the membership of $\SATprobpolysum$ and $\SATcausalpolysum$ 
in the class $\csuccETR$ and their $\csuccETR$-hardness.
The omitted proofs can be found in the appendix\IJCAIPROCEEDINGSONLY{ of  a companion paper \citep{zander2023ijcai-arxiv}}.

\section{Preliminaries}\label{sec:lang}

\paragraph{Syntax of Probabilistic and Causal Languages.}
The syntax 
used in this paper 
extends the definitions in  \cite{ibeling2020probabilistic,ibeling2022mosse}
for the languages of the first two levels of Pearl’s Causal Hierarchy. 
We consider discrete distributions and represent 
the values of the random variables as $\mathit{Val} = \{0,1,\myldots, \maxvaluecount - 1\}$ 
and denote by $\bX$ the set of all (endogenous) random variables.
By capital letters $X_1,X_2, \myldots$, we denote the individual variables 
and assume, w.l.o.g.,~that they all share the same domain  $\mathit{Val}$.
A value of $X_i$ is denoted by $x_i$ or a natural number.

In order to reason about the (in)equalities of arithmetic terms involving probability expressions, we need to define languages describing probabilities, languages describing arithmetic terms, and languages describing (in)equalities. 
The first languages characterize probabilistic and causal events as 
Boolean conditions over the values of 
(endogenous) random variables:
\begin{align*}
 \Lprop &::= X = x  
          \mid \neg \Lprop \mid \Lprop \wedge \Lprop 
          \\
\Lint &::= \top \mid  X = x  
     \mid  \Lint \wedge \Lint  
        \\
 \Lfull &::= [\Lint] \Lprop 
\end{align*}
where $X\in \bX$, and $x$ is either in $\mathit{Val}$ or is a summation variable as defined below.

To make the notation more 
convenient for our analysis, we use $[\Lint]$ 
in the syntax above to describe an \emph{intervention} on certain variables. 
The operator $[\cdot]\delta$ creates a new \emph{post-intervention} model
and the assigned propositional formula $\delta$
only applies to the new model.
Thus, for example, $\PP{[X=x]Y=y}$, which, using do-notation 
can be expressed as $\PP{Y=y\mmid \dop(X=x)}$,  
denotes the probability that the variable $Y$ equals $y$ in the model after the 
intervention $\compactEquals{\dop(X=x)}$.
Since $\top$ means that no intervention has been applied,
 we can assume that $\Lprop \subseteq \Lfull$.


Inserting the primitives 
into arithmetic expressions, we get 
the language 
$\Tpolysum(\cL)$, where, for $\cL\in \{\Lprop,\Lfull\}$, 
 $\delta \in\cL$,  and  $\delta'\in\Lprop$, 
 any $\be \in \Tpolysum(\cL)$ is formed by the grammar\footnote{This is a recursive definition where, e.g., $\be+\be'$ means, for $e,e'\in \Tpolysum(\cL)$, the expression $e+e'$ is also in $\Tpolysum(\cL)$. }:
\begin{equation}\label{eq:gramm:expr}
 \be::=  
 \PP{\delta\mmid \delta'}
 	\mid \be + \be' \mid \be  \cdot \be' \mid 
 \mbox{$\sum_{x}\be$.}
\end{equation}
The probabilities of the form $\PP{\delta}$  or $\PP{\delta\mmid \delta'}$, 
are called \emph{primitives} or \emph{(atomic) terms}.


In the summation operator $\sum_{x}$ in definition~\eqref{eq:gramm:expr}, we have
a dummy variable $x$ which ranges over all values $0,1, \myldots,\maxvaluecount - 1$.
The summation $\sum_{x} \be$ is a purely syntactical 
concept which represents the sum 
$\be[\sfrac{0}{ x}]  +\be[\sfrac{1}{x}]+\myldots +\be[\sfrac{\maxvaluecount - 1}{x}]$,
where, by $\be[\sfrac{v}{x}]$, we mean the expression in which all occurrences of $x$
are replaced with value $v$.
E.g., for  $\mathit{Val} = \{0,1\}$,
the expression\footnote{As usually, $\PP{Y=1, X=x}$, etc., means $\PP{Y=1 \wedge X=x}$, etc.}
$\sum_{x} \PP{Y=1, X=x}$
 semantically represents $\PP{Y=1, X=0} + \PP{Y=1, X=1}$.
%
We note that the dummy variable $x$ is not a (random) variable in the usual sense
and that its scope is defined in the standard way.


Finally, we define the languages of Boolean combinations of inequalities, following the grammar,
where  $ \be,\be'$ are expressions in  $ T(\Lprop) $ for \emph{prob} and $ T(\Lfull) $ for \emph{causal},
respectively:
\begin{align*}
    \Lprobpolysum&::= \be \le \be' \mid \neg \Lprobpolysum \mid \Lprobpolysum \wedge \Lprobpolysum \\
    \Lcausalpolysum&::= \be \le \be' \mid \neg \Lcausalpolysum \mid \Lcausalpolysum \wedge \Lcausalpolysum .
\end{align*}

%
%

Together with 
conjunctions, the comparisons yield an equality relation. Moreover,
for the atomic formula 
$\let\IsInPP=1 X=x$ in probabilities as, e.\,g. $\PP{X=x}$, we use a common 
abbreviation $\PP{x}$ if this does not lead to confusion. Thus, for example, 
Eq.~\eqref{eq:factor} builds the correct formula in the language 
$\Lprobpolysum$ and Eq.~\eqref{eqn:front:door} and \eqref{eqn:front:orientation} are
the correct formulas in  $\Lcausalpolysum$.

Although the language and its operations can appear rather restricted, all the usual elements of probabilistic and causal formulas can be encoded. Namely, equality is encoded as greater-or-equal in both directions, e.g. $\PP{x} = \PP{y}$ means $\PP{x} \geq \PP{y} \wedge \PP{y} \geq \PP{x}$.
Subtraction or divisions can be encoded by moving a term to the other side of the equation, e.g., 
$\PP{x} - \PP{y} = \PP{z}$ means $\PP{x} = \PP{z} + \PP{y}$, and $\PP{x} / \PP{y} = \PP{z}$ means $\PP{x} = \PP{z} \PP{y}$.
Any positive integer can be encoded from the fact $\PP{\top} \equiv 1$, e.g. $4 \equiv (1 + 1) (1 + 1) \equiv (\PP{\top} + \PP{\top}) (\PP{\top} + \PP{\top})$. The number~$0$ can be encoded as inconsistent probability,
i.e.,
$\PP{X=1 \wedge X=2}$. 
Note that these encodings barely change the size of the expressions, so allowing or disallowing these additional operators does not affect any complexity results involving these expressions.

\paragraph{Semantics.} 
\label{sec:mdels:norm}
We define a structural causal model (SCM) as in \cite[Sec.~3.2]{Pearl2009}.
An SCM 
is a tuple $\fM=(\cF, P, $ $\bU, \bX)$, such that $\bV = \bU \cup \bX$ is a set of 
variables partitioned into 
exogenous (unobserved) variables $\bU=\{U_1,U_2,\myldots \}$ and 
endogenous variables $\bX$.
The tuple $\cF=\{F_1,\myldots,F_n\}$ consists of
functions such that function $F_i$ calculates the value of variable $X_i$ from the values 
$(\bx, \bu)$ of other variables in $\bV$ as  
$F_i(x_{i_1},\myldots,x_{i_k},\fu_i)$ \footnote{We consider recursive models, 
that is, we assume the endogenous variables 
are ordered such that variable $X_i$ (i.e. function $F_i$) is not affected by any 
 $X_j$ with $j > i$.}. 
$P$ specifies a probability distribution 
of all exogenous  variables $\bU$. Since variables $\bX$  depend deterministically on 
 the exogenous variables via functions $F_i$,
 $\cF$ and $P$ define 
the obvious joint probability distribution of 
$\bX$.
 
 For any atomic  $\Lint$-formula $\let\IsInPP=1 X_i=x_i$ 
 (which, in our notation, means $\let\IsInPP=1 \dop(X_i=x_i)$),  we denote 
by $\cF_{X_i=x_i}$ the functions obtained from $\cF$ by replacing $F_i$
with the constant function $F_i(\bv):=x_i$. 
We generalize this definition for any interventions specified by
$\alpha\in \Lint$ in a natural way and denote as 
$\cF_{\alpha}$ the resulting functions.

For any $\varphi\in \Lprop$,  we write $\cF, \bu \models \varphi$
if $\varphi$ is satisfied for values of $\bX$ calculated from the values $\bu$.
For $[\alpha]\varphi \in \Lfull$, we write $\cF, \bu \models [\alpha]\varphi$ if  
$\cF_{\alpha}, \bu \models \varphi$.
  %
Finally, for $\psi\in \Lfull$, let $S_{\fM}(\psi)=\{\bu \mid \cF, \bu \models \psi\}$.
We assume, as is standard, the measurability of $\fM$ which guarantees 
that  $S_{\fM}(\psi)$ is always measurable. 

We define $\llbracket \be \rrbracket_{\fM} $
recursively in a natural way, 
starting with atomic terms as follows: $\llbracket \PP{\psi} \rrbracket_{\fM} = P(S_{\fM}(\psi))$,
resp.~$\llbracket \PP{[\alpha]\varphi\mmid \varphi'} \rrbracket_{\fM} = 
P(S_{\fM}([\alpha](\varphi\wedge \varphi'))) / P(S_{\fM}([\alpha]\varphi'))$.
For two expressions $\be_1$ and $\be_2$, we define 
$ \fM \models  \be_1 \le  \be_2$,  if and only if, 
$\llbracket \be_1 \rrbracket_{\fM}\le \llbracket \be_2 \rrbracket_{\fM}.$
The semantics for negation and conjunction are defined in the usual way,
giving the semantics for $\fM \models \varphi$ for any formula.

\paragraph{Complexity Notation.}\label{sec:prelim}
We use the well-known complexity classes
$\ccNP{}$, 
$\ccPSPACE{}$, 
$\ccNEXPTIME{}$, 
$\ccEXPSPACE{}$,  and $\cETR$ 
\cite{arora2009computational}.  
%
For two computational problems $A,B$, we will write $A\lep B$ if $A$ can be reduced to $B$ in polynomial time, which means $A$ is not harder to solve than $B$. A problem $A$ is complete for a complexity class $\complexityclass{C}$, if $A \in \complexityclass{C}$ and, for every other problem $B\in\complexityclass{C}$, it holds $B\lep A$. By ${\tt co}\mbox{-}\complexityclass{C}$ we denote the class of all problems
$A$ such that its complements $\overline{A}$ belong to $\complexityclass{C}$.

\section{Satisfiability Problems}\label{sec:sat:problems}
The decision problems $\SATcausalpolysum$ and $\SATprobpolysum$
takes as input a formula $\varphi$ 
in  
language $\Lcausalpolysum$
and in $\Lprobpolysum$, respectively,
and  asks whether there exists a model 
$\fM$ such that $\fM \models\varphi$.
Analogously, we define the validity problems for languages 
$\Lcausalpolysum$ and $\Lprobpolysum$ 
of deciding whether, for a given $\varphi$,
$\fM \models\varphi$ holds
for all models  $\fM$.
%
%
From the definitions, it is obvious that \emph{causal} variants of the problems 
are at least as hard as the \emph{prob}  counterparts. 
%

To give a first intuition about the expressive power of the $\SATprobpolysum$  problem,
we prove the following result which shows that 
$\SATprobpolysum$ can encode any problem in $\ccNEXPTIME{}$ efficiently.
\begin{proposition}\label{thm:exist:equal:nexptime:hard}
The $\SATprobpolysum$ problem
is $\ccNEXPTIME{}$-hard. 
\end{proposition}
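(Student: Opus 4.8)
The plan is to reduce from the classical $\ccNEXPTIME$-complete problem \textsc{Succinct-3-Sat}. An instance there is a Boolean circuit $\gamma$ with $m$ input gates ($m$ polynomial in the instance length) that, on a clause index $i\in\{0,1\}^m$, outputs a description of the $i$-th clause of a $3$-CNF formula $\Phi$ over $2^m$ variables $z_0,\dots,z_{2^m-1}$ --- namely the three variable indices $j^i_1,j^i_2,j^i_3\in\{0,1\}^m$ and the three signs $s^i_1,s^i_2,s^i_3\in\{0,1\}$ of its literals --- and one asks whether $\Phi$ is satisfiable. (Alternatively one could encode an accepting computation tableau of a nondeterministic $2^{\mathrm{poly}(n)}$-time Turing machine; the reduction would be essentially the same, only with $4$-cell rather than $3$-literal lookups.) From $\gamma$ I would build, in polynomial time, a formula $\varphi_\gamma\in\Lprobpolysum$ that is satisfiable if and only if $\Phi$ is. For notational ease I would assume $\maxvaluecount=2$; if $\maxvaluecount>2$, one first restricts the variables used below to $\{0,1\}$ with conjuncts such as $\PP{X=0}+\PP{X=1}=\PP{\top}$ and adjusts the summands of the uniformity gadget accordingly.

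The decisive point is that the summation operator turns a polynomial-size formula into a \emph{universal quantifier over the $2^m$ clauses}: a property $Q(i)$ required to hold for all clause indices $i$ is enforced by the single equation $\sum_{b_1}\cdots\sum_{b_m}\bigl(\chi_{\lnot Q}\bigr)^2=0$, where $\chi_{\lnot Q}$ is a short expression over probability primitives that evaluates to $1$ exactly when $Q$ fails and to $0$ otherwise. I would introduce endogenous $0/1$-variables for: the bits $I_1,\dots,I_m$ of a ``current'' clause index; the (polynomially many) gate values of $\gamma$ evaluated on $I_1,\dots,I_m$; the index bit-vectors $J_1,J_2,J_3$ and signs $S_1,S_2,S_3$ read off from the output gates; the looked-up truth values $T_1,T_2,T_3$; and match flags $M_1,M_2,M_3$. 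Then $\varphi_\gamma$ would be the conjunction of: (i) \emph{uniformity}, $\sum_{b_1,\dots,b_m}\bigl(2^m\cdot\PP{I_1=b_1,\dots,I_m=b_m}-1\bigr)^2=0$, with $2^m$ spelled out as a product of $m$ copies of $(\PP{\top}+\PP{\top})$, so that every clause index occurs with positive probability; (ii) \emph{circuit correctness}, i.e.\ each gate equals the right Boolean function of its $O(1)$ inputs --- for an $\wedge$-gate $G=G_a\wedge G_b$ this reads $\PP{G=0,G_a=1,G_b=1}=0\wedge\PP{G=1,G_a=0}=0\wedge\PP{G=1,G_b=0}=0$ --- plus equalities tying $J_\ell,S_\ell$ to the appropriate output gates (no summation is needed here, as each gate depends on only constantly many others); (iii) \emph{consistent lookups}, i.e.\ for all $\ell,\ell'\in\{1,2,3\}$ the equation $\sum_{b_1,\dots,b_m}\PP{T_\ell=0,J_\ell=b}\cdot\PP{T_{\ell'}=1,J_{\ell'}=b}=0$ (where $J_\ell=b$ abbreviates $J_{\ell,1}=b_1\wedge\dots\wedge J_{\ell,m}=b_m$), which forces all three $T_\ell$ to realise a single common partial assignment $a$ on the indices they reference; and (iv) \emph{satisfaction}, i.e.\ $M_\ell$ equal to the indicator of $T_\ell=S_\ell$ (forced by $O(1)$ primitive equations such as $\PP{M_\ell=0,T_\ell=1,S_\ell=1}=0$), together with $\PP{M_1=0,M_2=0,M_3=0}=0$.

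Correctness would then be routine. If $a$ satisfies $\Phi$, take the recursive SCM in which $(I_1,\dots,I_m)$ is uniform over $\{0,1\}^m$ and every other endogenous variable is the deterministic function of $I_1,\dots,I_m$ prescribed above, with $T_\ell:=a(j^I_\ell)$; this model satisfies every conjunct. Conversely, in any $\fM\models\varphi_\gamma$, conjunct (i) gives $\PP{I=i}=2^{-m}>0$ for every $i$; conjuncts (ii)--(iii) then imply that on each positive-probability event ``$I=i$'' the gates --- hence $J_\ell$ and $S_\ell$ --- almost surely take their correct values and $T_\ell$ almost surely equals $a(J_\ell)$ for one partial assignment $a$, which in particular is defined at every $j^i_\ell$; and (iv) forces $a$ to satisfy clause $i$. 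Since $i$ was arbitrary, any total extension of $a$ satisfies $\Phi$. Finally $\varphi_\gamma$ has size $O(\mathrm{poly}(|\gamma|))$ --- each displayed sum is written with only $O(m)$ nested $\sum$-signs although it denotes an exponentially long expression --- so the reduction runs in polynomial time.

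I expect the main obstacle to be exactly the enforcement of a \emph{single, globally consistent} assignment: whenever two clauses reference the same variable $z_j$, the lookups must return the same truth value, which is what the cross-consistency sums of conjunct (iii) are designed for. Two further technical points need care. Conditional primitives $\PP{\cdot\mid\cdot}$ cannot be used freely, since a zero denominator is ill-defined, so the determinism and circuit-propagation constraints must be phrased entirely with joint probabilities equal to $0$ or $1$. And the universal checks only bite when the clause-index distribution has full support, which is why conjunct (i) pins it to be exactly uniform rather than merely nonzero --- there is no product operator with which to state ``$\PP{I=b}>0$ for all $b$'' directly.
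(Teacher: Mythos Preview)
Your proof is correct and takes a genuinely different route from the paper's. Both arguments exploit the same key insight---the summation operator acts as a universal quantifier over an exponential domain---but the source problem and the encoding differ substantially.

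The paper reduces from satisfiability of \emph{Sch\"{o}nfinkel--Bernays sentences} (EPR), not from \textsc{Succinct-3-Sat}. Its construction introduces a dedicated ``consistency'' variable $C$ whose \emph{conditional} probability $\PP{C=\ptrue\mid\ldots}$ is forced into $\{1/3,2/3\}$ to mark assignments of the remaining variables as consistent or inconsistent; relations $R_i$ are likewise stored as conditional probabilities in $\{1/3,2/3\}$, and logical gates are propagated via further consistency constraints. The universal quantifier $\forall\by$ is captured by requiring a sum over $\by$ to hit exactly $2^{|\by|}$. By contrast, you store the hidden assignment implicitly in the lookup variables $T_\ell$ and enforce global coherence by the cross-terms $\sum_b \PP{T_\ell=0,J_\ell=b}\PP{T_{\ell'}=1,J_{\ell'}=b}=0$ (the diagonal case $\ell=\ell'$ making each $T_\ell$ a deterministic function of $J_\ell$, the off-diagonal cases forcing these functions to agree), and you phrase every constraint using only \emph{joint} probabilities equal to zero.

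What each approach buys: your reduction is more elementary and sidesteps conditional primitives altogether, so the zero-denominator issue you flag never arises; the correctness argument is short and transparent. The paper's EPR-based encoding is heavier machinery but gives a uniform template for encoding arbitrary first-order relations, which may be convenient if one wants to extend the reduction. Both proofs handle the $\maxvaluecount>2$ case by the same trick of restricting the relevant variables to $\{0,1\}$; as you note, this requires a small adjustment to the uniformity gadget so that summands with $b_j\ge 2$ vanish (e.g., multiply by $\prod_j(\PP{E=b_j}+\PP{E'=b_j})$ for auxiliary variables with $\PP{E=0}=\PP{E'=1}=1$), which is routine.
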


The remaining part of the paper is devoted to proving the main result (Theorem~\ref{thm:main:result})
showing that the satisfiability problems are complete for the new  class 
$\csuccETR$. Since a formula $\varphi$ is \emph{not} valid, if and only if $\neg\varphi$ 
is satisfiable and $\neg\varphi$ is in $\Lprobpolysum$ or in $\Lcausalpolysum$, respectively,
we can conclude from Theorem~\ref{thm:main:result}:
\begin{corollary}\label{corr:main}
The validity problems for the languages $\Lprobpolysum$ and
$\Lcausalpolysum$  are complete for ${\tt co}\mbox{-}\csuccETR$,
which is related to standard classes as follows:
	${\tt co}\mbox{-}\ccNEXPTIME \subseteq {\tt co}\mbox{-}\csuccETR \subseteq \ccEXPSPACE$\,.
\end{corollary}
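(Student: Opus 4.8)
The plan is to obtain Corollary~\ref{corr:main} directly from the Main Theorem (Theorem~\ref{thm:main:result}), which we may assume here since it is stated earlier in the excerpt, via the standard observation that validity is the complement of satisfiability. Concretely, for a formula $\varphi$ in $\Lprobpolysum$ (resp.\ $\Lcausalpolysum$) one has $\fM \models \varphi$ for all models $\fM$ if and only if there is no model $\fM$ with $\fM \models \neg\varphi$. Since the grammars defining $\Lprobpolysum$ and $\Lcausalpolysum$ are closed under negation, $\neg\varphi$ is again a legal instance of the corresponding satisfiability problem, and the map $\varphi \mapsto \neg\varphi$ is computable in linear time; conversely $\psi \mapsto \neg\psi$ maps an unsatisfiable $\psi$ to a valid formula. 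Hence the validity problem for each language is polynomial-time many-one equivalent to the complement of the corresponding satisfiability problem, i.e.\ to ${\tt co}\mbox{-}\SATprobpolysum$, resp.\ ${\tt co}\mbox{-}\SATcausalpolysum$.

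Next I would invoke Theorem~\ref{thm:main:result}, which asserts that $\SATprobpolysum$ and $\SATcausalpolysum$ are $\csuccETR$-complete. Completeness under polynomial-time many-one reductions is preserved under complementation: if $L$ is $\complexityclass{C}$-complete then $\overline{L}$ is ${\tt co}\mbox{-}\complexityclass{C}$-complete, because a reduction witnessing $A \lep L$ simultaneously witnesses $\overline{A} \lep \overline{L}$, and membership $L \in \complexityclass{C}$ is equivalent to $\overline{L} \in {\tt co}\mbox{-}\complexityclass{C}$ by definition. Combining this with the previous paragraph yields that the two validity problems are ${\tt co}\mbox{-}\csuccETR$-complete.

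It then remains to locate ${\tt co}\mbox{-}\csuccETR$ relative to the standard classes. Applying the ${\tt co}$ operator to the inclusions~\eqref{eq:inclusions:succETR}, namely $\ccNEXPTIME \subseteq \csuccETR \subseteq \ccEXPSPACE$, gives ${\tt co}\mbox{-}\ccNEXPTIME \subseteq {\tt co}\mbox{-}\csuccETR \subseteq {\tt co}\mbox{-}\ccEXPSPACE$. Since $\ccEXPSPACE$ is a deterministic space class it is trivially closed under complement, so ${\tt co}\mbox{-}\ccEXPSPACE = \ccEXPSPACE$, and we obtain ${\tt co}\mbox{-}\ccNEXPTIME \subseteq {\tt co}\mbox{-}\csuccETR \subseteq \ccEXPSPACE$, as claimed in the corollary.

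I do not expect a genuine obstacle here: the statement is a routine consequence of the Main Theorem, so the only points that warrant care are bookkeeping ones. One should check that the languages $\Lprobpolysum$ and $\Lcausalpolysum$ are syntactically closed under $\neg$ so that $\neg\varphi$ is an admissible instance; that "valid" in Section~\ref{sec:sat:problems} quantifies over exactly the same class of SCMs as the satisfiability problem, so that the validity and satisfiability problems are genuinely complementary with no domain or promise mismatch; and that the polynomial-time reductions witnessing $\csuccETR$-completeness (both the hardness reductions and the membership algorithm) commute with taking complements, which they do since many-one reductions and deterministic/nondeterministic acceptance behave in the expected way under complementation.
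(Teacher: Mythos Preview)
Your proposal is correct and follows essentially the same approach as the paper: both exploit that $\varphi$ is valid iff $\neg\varphi$ is unsatisfiable (using closure of the languages under negation), transfer the $\csuccETR$-completeness of Theorem~\ref{thm:main:result} to ${\tt co}\mbox{-}\csuccETR$-completeness of validity, and derive the displayed inclusions by applying ${\tt co}$ to~\eqref{eq:inclusions:succETR} together with $\ccEXPSPACE = {\tt co}\mbox{-}\ccEXPSPACE$. The paper merely unpacks the general principle ``$L$ is $\complexityclass{C}$-complete $\Rightarrow$ $\overline{L}$ is ${\tt co}\mbox{-}\complexityclass{C}$-complete'' into explicit reductions, whereas you invoke it directly.
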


%

\section{The Existential Theory of the Reals, Succinctly}\label{sec:succETR}
The existential theory of the reals $\ETR$ is the set of true sentences
of the form
\begin{equation} \label{eq:etr:1}
   \exists x_1 \dots \exists x_n \varphi(x_1,\dots,x_n).
\end{equation}
$\varphi$ is a quantifier-free Boolean formula over the basis $\{\vee, \wedge, \neg\}$
and a signature consisting of the constants $0$ and $1$, the functional symbols
$+$ and $\cdot$, and the relational symbols $<$, $\le$, and $=$. The sentence
is interpreted over the real numbers in the standard way.


All operations have an arity of at most two, so we can represent $\varphi$ as a binary
tree. The leaves of the tree are labeled with variables or constants,
the inner nodes are labeled with $+$, $\cdot$, $=$, $<$, $\le$, $\wedge$, $\vee$,
or $\neg$. $\neg$-nodes have in-degree one, and all other inner nodes have in-degree two.

We now define succinct encodings of instances for $\ETR$. Succinct encodings
have been studied for problems in $\ccNP$, see e.g.\ \cite{papadimitriou}.  
The input is now a Boolean circuit $C$ computing a function $\{0,1\}^N \to \{0,1\}^M$,
which encodes the input instance. For example, an instance of the well-known 3-{\sc Sat} problem
is some standard encoding of a given 3-CNF formula. In the succinct version of 3-{\sc Sat},
we are given a circuit $C$. $C$ encodes a 3-CNF formula $\psi$ in the following
way: $C(i)$ is an encoding of the $i$th clause of $\psi$.
In this way, one can encode an exponentially large 3-{\sc Sat} formula $\psi$ 
(with up to $2^N$ clauses) .
The succinct encoding typically induces a complexity jump:
The succinct version of the 3-{\sc Sat} problem is $\ccNEXPTIME$-complete. 
The same is true for many other $\ccNP$-complete problems \cite{papadimitriou}.

In the case of $\ETR$, the formula $\varphi$ 
is now given succinctly by a Boolean circuit $C$. The circuit $C$
computes a function $\{0,1\}^N \to \{0,1\}^M$. The input
is (an encoding of) a node~$v$ of the formula and the output $C(v)$ contains the label
of the node as well as its parent and its children (encoded as a string in binary).
In this way, we can represent a formula that is exponentially large
in the size of the circuit. 

\begin{example}
Consider the formula $\exists x_1: x_1^2 = 1 + 1$ (``$2$ has a square root over the reals''). 
As an instance
of $\ETR$, this formula would be encoded as a binary string 
in some standard way. 
Figure~\ref{fig:succETR:ex} shows the binary tree underlying the formula. 
There are seven nodes, which we can represent as integers $1,\ldots, 7$. 
As a $\succETR$ instance, the formula will be given by 
a circuit $C$, computing a function 
$\{1,\ldots,7\}\to \{=,+,\cdot,0,1,x_1\} \times \{0, 1,\ldots,7\}^3$
describing the tree locally. For instance, $C(1) = (=,0,2,3)$,
means the following:  the label of the node $1$ is $=$. It
has no parent, that is, it is the root, which is indicated by giving $0$ as the parent.
The left child is $2$ and the right child is $3$. 
In the same way $C(4) = (x_1,2,0,0)$,
which means that the node $4$ is a leaf with parent $2$ and  it is labeled with the variable $x_1$.
We can represent the integers $\le 7$ as bit strings of length $N=3$
and the tuples $C(i)$,  e.g., $(=,0,2,3)$ or $(x_1,2,0,0)$,
can be encoded as bit strings of some length $M$.
Thus, $C$ can be considered a function $C:\{0,1\}^N \to \{0,1\}^M$.
Note that we cannot use more variables than the formula has nodes, thus
to encode the tuple $C(i)$, we need at most $M\le (3+N) + 3N$ bits: 
in the first entry, we need $3 + N$ bits to encode a label 
$+$, $\cdot$, $=$, $<$, $\le$, $\wedge$, $\vee$, and $\neg$ or 
an index $j$ of a variable $x_j$; to encode each of the remaining components,
$N$ bits suffice.
It is clear that one can construct such a circuit $C$. When the formula is large but also structured, then the circuit $C$ can be much smaller than the formula itself. 
\end{example}

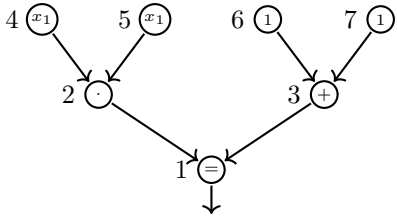
\begin{figure} 
\begin{center}
\begin{tikzpicture}[xscale=1.5,yscale=1.0]
\tikzstyle{every node}=[circle, draw=black, thick, fill=white, inner sep=1pt,outer sep=1pt, minimum size=10pt,];
\tikzstyle{every edge}=[draw,->,thick]
\node (u1) at (-0.5,0) {\tiny$x_1$};
  \node[xshift=-.4cm,fill=none,draw=none] at (-0.5,0) {$4$};
\node (u2) at (0.5,0) {\tiny$x_1$};
  \node[xshift=-.4cm,fill=none,draw=none] at (0.5,0) {$5$};
\node (u3) at (2.5,0) {\tiny$1$};
  \node[xshift=-.4cm,fill=none,draw=none] at (2.5,0) {$7$};
\node (u4) at (1.5,0) {\tiny$1$};
  \node[xshift=-.4cm,fill=none,draw=none] at (1.5,0) {$6$};  
\node (v) at (0,-1) {\tiny$\cdot$};
  \node[xshift=-.4cm,fill=none,draw=none] at (0,-1) {$2$};
\node (w) at (2,-1) {\tiny$+$};
  \node[xshift=-.4cm,fill=none,draw=none] at (2,-1) {$3$};
\node  (o) at (1,-2) {\tiny$=$};
  \node[fill=none,draw=none]  (out) at (1,-2.8) {$ $};
  \node[xshift=-.4cm,fill=none,draw=none] at (1,-2) {$1$};
\draw (u1) edge (v);
\draw (u2) edge (v);
\draw (u4) edge (w);
\draw (u3) edge (w);
\draw (v) edge (o);
\draw (w) edge (o);
\draw (o) edge (out);
\end{tikzpicture}
\vspace{-5mm}
\end{center}
\caption{A tree representing the $\ETR$ instance $\exists x_1: x_1^2 = 1 + 1$.
The nodes set is $\{1,\dots,7\}$, which we can represent by bit strings of length $3$.
The labels of the nodes are drawn in the circles.}
\label{fig:succETR:ex}
\end{figure}

\begin{definition}[$\succETR$]
$\succETR$ is the set of all Boolean circuits $C$ that encode a true sentence $\varphi$
as in $(\ref{eq:etr:1})$ as follows. Assume that $C$ computes a function 
$\{0,1\}^N \to \{0,1\}^M$. Then $\{0,1\}^N$ is the node set of the tree underlying $\varphi$
and $C(i)$ is the encoding of the description of node $i$, consisting of the label
of $i$, its parent, and its two children. The variables in the formula
are $x_1,\dots,x_{2^N}$. 
\end{definition}

The number of variables is formally always a power of $2$. If we need fewer variables,
we simply do not use them in the formula. We still quantify over all of them,
but this is no problem. In the same way, the number of nodes is always
a power of $2$. If we need fewer nodes, we can define a special value for $C(i)$,
which marks node $i$ as unused.

We will always assume that all negations are pushed to the lowest level 
of the formula and then they are absorbed in the arithmetic terms, i.e.,
$\neg(s = t)$ becomes $(s < t) \vee (t < s)$, $\neg (s < t)$ becomes
$t \le s$ and $\neg (s \le t)$ becomes $t < s$. Furthermore, we can remove
the comparisons $\le$, since we can replace $s \le t$ by $(s < t) \vee (s = t)$.  

\begin{remark} \label{remark:pushdown}
In the non-succinct case, eliminating the negations by pushing them to the bottom
is no problem, as it can be done explicitly.
In the succinct case, it seems that we need to assume
that already the given input has this property.
\end{remark}

Finally, based on the $\succETR$ problem, we define the succinct version of $\exists\R$
as follows:
\begin{definition}
$\csuccETR$ is the class of all problems that are polynomial time reducible
to $\succETR$.
\end{definition}

$\csuccETR$ is related to known complexity classes as follows.

\begin{theorem}\label{thm:NEXPTIME:sub:succETR:sub:EXPSPACE}
$\ccNEXPTIME \subseteq \csuccETR \subseteq \ccEXPSPACE$.
\end{theorem}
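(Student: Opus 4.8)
The plan is to establish the two inclusions separately, treating $\csuccETR$ via its defining complete problem $\succETR$.

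For the lower bound $\ccNEXPTIME \subseteq \csuccETR$, I would use the fact (mentioned in the text) that the succinct version of $3$-{\sc Sat} is $\ccNEXPTIME$-complete, so it suffices to give a polynomial-time many-one reduction from $\succ 3\text{-}\mathrm{SAT}$ to $\succETR$. Given a circuit $C$ encoding a $3$-CNF formula $\psi$ with up to $2^N$ clauses over up to $2^N$ variables, I need to produce, in polynomial time, a new circuit $C'$ that encodes the $\ETR$ formula $\exists x_1 \dots \exists x_m\, \varphi'$, where $\varphi'$ is the standard arithmetization of $\psi$: each Boolean variable $v_j$ becomes a real variable $x_j$ constrained by $x_j(x_j-1)=0$, a literal $v_j$ is represented by $x_j$ and $\neg v_j$ by $1-x_j$, a clause $(\ell_a \vee \ell_b \vee \ell_c)$ becomes an inequality forcing at least one literal-term to be $1$ (e.g.\ the sum of the three literal-terms is $\ge 1$, conjoined with the Boolean constraints this is equivalent to satisfiability), and $\varphi'$ is the conjunction of all the clause constraints together with all the $0/1$ constraints. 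The point is that this translation is \emph{local}: the $i$th clause of $\psi$ maps to a small fixed-size subtree of $\varphi'$ whose node labels and adjacency depend only on $C(i)$, and the layout of the big conjunction (a balanced binary tree of $\wedge$-nodes) is a simple explicit indexing function. Hence $C'(v)$ can be computed by a polynomial-size circuit that calls $C$ a constant number of times and otherwise does bookkeeping arithmetic on node indices. One must be careful that the output of the reduction already has negations pushed to the bottom (as required by the definition of $\succETR$ and noted in Remark~\ref{remark:pushdown}), but the arithmetization above never introduces a $\neg$-node, so this is automatic.

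For the upper bound $\csuccETR \subseteq \ccEXPSPACE$, it suffices to show $\succETR \in \ccEXPSPACE$, since $\ccEXPSPACE$ is closed under polynomial-time reductions. Given a circuit $C$ of size $s$ computing $\{0,1\}^N \to \{0,1\}^M$ with $N, M \le \mathrm{poly}(s)$, the encoded $\ETR$ sentence $\varphi$ has at most $2^N$ nodes and at most $2^N$ variables, so its explicit description has size $2^{O(N)}$, which is exponential in $s$. In exponential space we can unpack $C$ into the explicit formula $\varphi$ by evaluating $C(i)$ for all $i \in \{0,1\}^N$, and then invoke the known $\ccPSPACE$ decision procedure for $\ETR$ \cite{grigoriev1988solving,existentialTheoryOfRealsCanny1988some,existentialTheoryOfRealsSchaefer2009complexity} on $\varphi$; this uses space polynomial in $|\varphi| = 2^{O(N)}$, i.e.\ $2^{O(N)} = 2^{\mathrm{poly}(s)}$ space overall, which is within $\ccEXPSPACE$. (Alternatively one can run the $\ETR$ decision procedure directly on the succinct input, querying $C$ on demand instead of materializing $\varphi$, but materializing it is conceptually cleanest and the space bound is the same up to constants.)

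The main obstacle is the lower-bound direction: one has to verify carefully that the arithmetized formula can be generated \emph{succinctly}, i.e.\ that there is a uniform, polynomial-time-computable local description of the target tree, matching $\succETR$'s format in which $C'(v)$ must return the label of $v$ together with its parent and its two children. This requires fixing a clean global addressing scheme for the nodes of $\varphi'$ (e.g.\ reserve a high-order bit-block of the node index to select which clause-gadget one is inside, and the low-order bits to select a position within the fixed-size gadget or within the $\wedge$-tree skeleton) so that, given $v$, one can decode in polynomial time which gadget and which internal slot $v$ refers to, call $C$ to learn the relevant clause, and emit the correct local description — including correctly wiring the roots of the clause-gadgets into the conjunction tree. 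Everything else (the $\ccEXPSPACE$ upper bound, the $0/1$-and-clause arithmetization itself) is routine once this uniformity bookkeeping is in place.
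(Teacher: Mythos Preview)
Your proposal is correct and follows essentially the same approach as the paper: for the lower bound, reduce from a succinct version of Boolean satisfiability by arithmetizing with the constraints $x_j(x_j-1)=0$ and observing that the embedding is local, and for the upper bound, unfold the circuit into an explicit exponential-size $\ETR$ formula and apply the $\ccPSPACE$ decision procedure. The paper's proof is considerably terser and omits the explicit addressing-scheme bookkeeping you spell out, but the underlying argument is the same.
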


\section{First Complete Problems for $\csuccETR$}
\label{sec:first:compl:problems}
In this section, we present the first problems which are complete for $\csuccETR$.
The key role plays the problem  $\sigmaETR$ which we need to prove 
the membership of $\SATcausalpolysum$ in the class $\csuccETR$
in Proposition~\ref{prop:SAT:causal:in:sigmaETR}.

$\QUAD$ is the problem to decide whether a given family
of quadratic multivariate polynomials has a common root. Here the polynomials are given as lists
of monomials. The coefficient is represented as a quotient of two integers.
$\QUAD$ is a complete problem for the class $\cETR$
of all problems that are reducible to $\ETR$ \cite{DBLP:journals/mst/SchaeferS17}.

The succinct version $\succQUAD$ 
is defined as follows: 
We get a Boolean circuit $C$ as input computing a function 
$\{0,1\}^K \times \{0,1\}^N \to \{0,1\}^M$. $C(x,y)$ is an encoding
of the $y$th monomial of the $x$th equation.

We allow that a monomial appears several times in these lists
and the coefficients all add up, so the final polynomials are
the sum of all monomials in the list.

\begin{lemma} \label{lem:succQUAD-hard}
$\succETR \lep \succQUAD$. Furthermore, the reduction only creates quadratic polynomials
with at most four monomials and coefficients $\pm 1$.
\end{lemma}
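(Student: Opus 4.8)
The plan is to take the classical polynomial-time reduction $\ETR \lep \QUAD$ from \cite{DBLP:journals/mst/SchaeferS17} and show that it is ``local'' enough to be performed on the succinct (circuit) encodings node-by-node, blowing up each instance-component only by a constant factor. Recall that in the non-succinct setting one first normalizes an $\ETR$ sentence $\varphi$ — by our standing assumption negations are already pushed to the leaves and $\le$ has been eliminated, so $\varphi$ is a positive Boolean combination (over $\wedge,\vee$) of atoms of the forms $s<t$ and $s=t$ built from $+$ and $\cdot$; then one introduces a fresh variable for every internal arithmetic node, so that each $+$-node becomes an equation $z_w = z_u + z_v$ and each $\cdot$-node becomes $z_w = z_u\cdot z_v$, i.e.\ quadratics with at most three monomials and coefficients $\pm 1$; the Boolean structure $\wedge,\vee$ and the strict inequalities $s<t$ are then handled in the standard way, introducing a slack variable $\sigma$ with $\sigma\cdot\tau = 1$ to encode $\sigma>0$ and using sums of squares to merge a conjunction of equations into a single equation, and a product/auxiliary-variable trick to encode a disjunction, again producing only quadratic polynomials with $O(1)$ monomials and $\pm1$ coefficients. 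The key observation is that each of these gadgets depends only on the label of a single node of the formula tree and on the identities of its parent and children — exactly the information that the circuit $C$ outputs on input $v$.

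Concretely, I would do the following. Given the $\succETR$ circuit $C\colon\{0,1\}^N\to\{0,1\}^M$ describing the formula tree of $\varphi$, I build a circuit $C'$ for $\succQUAD$ whose equation-index space is (a constant number of copies of) the node space $\{0,1\}^N$ and whose monomial-index space has constant size. On input $(w,j)$, the new circuit first runs $C(w)$ to recover the label of node $w$, its parent, and its two children; depending on which of the $O(1)$ cases applies (arithmetic $+$, arithmetic $\cdot$, atom $s<t$, atom $s=t$, Boolean $\wedge$, Boolean $\vee$, leaf with a constant $0/1$, leaf with a variable $x_i$), it outputs the $j$th monomial of the corresponding gadget equation, reading off the indices of the auxiliary variables it needs from $w$ and from the indices returned by $C(w)$ (we fix a simple convention, e.g.\ the variable attached to node $w$ is $z_w$, and any slack variables introduced for node $w$ live in a disjoint, easily-indexed block). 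One also needs a single ``output'' equation forcing the variable of the root to encode ``$\varphi$ is true'' — say $z_{\text{root}} = 1$ in the $1$/$0$ truth encoding — which is a constant-size extra equation. Since $C'$ only needs to simulate $C$ once and then branch on a bounded case analysis, $C'$ has size polynomial (in fact linear up to the size of $C$) in $|C|$, and by construction every equation it encodes is a quadratic with at most four monomials and coefficients $\pm1$. Correctness is inherited directly from the correctness of the non-succinct reduction of \cite{DBLP:journals/mst/SchaeferS17}, applied to the (exponentially large) formula that $C$ describes.

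The main obstacle I expect is bookkeeping of variable names rather than any genuine mathematical difficulty: in the succinct world we never get to ``allocate fresh variables'' on the fly, so we must commit up front to an explicit, circuit-computable addressing scheme that assigns to each node $w$ its principal variable and its bounded set of auxiliary/slack variables in disjoint ranges, so that the gadget at node $w$ can name exactly the variables used by the gadgets at $w$'s parent and children without any global renumbering. Since each node needs only $O(1)$ variables and there are $2^N$ nodes, padding the variable count to the next power of two (as permitted by the definition of $\succETR$/$\succQUAD$) and using a fixed-width tagged encoding $(\text{node index},\text{local slot})$ makes this scheme computable by a small circuit. The secondary point to check carefully is that the sum-of-squares trick for conjunctions and the disjunction gadget, when unrolled over a possibly exponential tree, still yield only $O(1)$ monomials \emph{per equation} — which they do, because we do not form one giant square of a big sum but instead keep one equation per Boolean node, each combining only its (at most two) children's ``truth variables'', exactly as in the bounded-fan-in non-succinct construction.
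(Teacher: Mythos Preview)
Your proposal is correct and follows essentially the same Tseitin-style, node-local construction as the paper: introduce $O(1)$ auxiliary variables per tree node addressed by a tag $(\text{node index},\text{local slot})$, emit one or two quadratic equations per node depending on its label, and add a single root equation; the new circuit just calls $C$ once and does a bounded case analysis. One small technical fix: $\sigma\tau=1$ only encodes $\sigma\neq 0$, not $\sigma>0$; the paper (and the standard trick) additionally forces the slack to be a square, concretely $y_v'=(y_v'')^2$ together with $(y_{v_2}-y_{v_1})\,y_v' = 1 - y_v$, and then uses the monotonicity of the negation-free Boolean part so that only the ``if $y_v=0$ then $y_{v_1}<y_{v_2}$'' direction of the $<$-gadget is required.
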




$\sigmaETR$ is defined like $\ETR$, but we add to the signature an additional summation operator. 
This is a unary operator $\sum_{x_j = a}^b$.
Consider an arithmetic term given by a tree with the top gate $\sum_{x_j = a}^b$.
Let $t(x_1,\dots,x_n)$ be the term computed at the child of the top gate. Then
the new term computes
$
   \sum_{e = a}^b t(x_1,\dots,x_{j-1},e,x_{j+1},\dots,x_n),
$
that is, we replace the variable $x_j$ by a summation variable~$e$, which then
runs from $a$ to $b$.

Furthermore, we allow a new form of variable indexing in $\sigmaETR$,
namely variables of the form $x_{n(x_{j_1},\dots,x_{j_m})}$:
This can only be used when variables $x_{j_1},\dots,x_{j_m}$ occur in the scope
of summation operators and are
replaced by summation variables $e_1,\dots,e_m$ with summation range $\{0,1\}$.
$x_{n(x_{j_1},\dots,x_{j_m})}$ is interpreted as the variable
with index given by $e_1,\dots,e_m$ interpreted as a number in binary. 

Instances of $\sigmaETR$ are not given succinctly by a circuit. However, the ability to
write down exponential sums succinctly makes the problem as hard as $\succETR$. We get

\begin{lemma}\label{lemma:seq:of:reductions}
	$\succQUAD \lep \sigmaETR 
	\lep \succETR$
\end{lemma}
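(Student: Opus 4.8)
The goal is to prove the chain $\succQUAD \lep \sigmaETR \lep \succETR$. I will handle the two reductions separately, as they go in "opposite directions" in spirit: the first encodes a succinct (circuit-given) instance as an explicit $\sigmaETR$ formula using summation to simulate the circuit's enumeration; the second compresses an explicit $\sigmaETR$ formula back into a circuit-given $\succETR$ instance by unfolding the summations.

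\textbf{Step 1: $\succQUAD \lep \sigmaETR$.} Let $C\colon\{0,1\}^K\times\{0,1\}^N\to\{0,1\}^M$ be a $\succQUAD$ instance, where $C(x,y)$ encodes the $y$th monomial of the $x$th equation; by Lemma~\ref{lem:succQUAD-hard} I may further assume (after composing with that reduction) that every polynomial is quadratic with at most four monomials and coefficients $\pm 1$ — but for this reduction it is cleaner to keep the general $\succQUAD$ form and use the indexing feature of $\sigmaETR$. The real unknowns of the polynomial system become variables $x_1,\dots,x_{2^{?}}$ of the $\sigmaETR$ formula. The key point is that the $\sigmaETR$ formula must not itself mention the circuit $C$; instead, since $C$ is a fixed polynomial-size Boolean circuit, I can write down (in polynomial time) an arithmetic term $\beta_C(e_1,\dots,e_K,f_1,\dots,f_N)$ over $\{0,1\}$-valued summation variables that computes the coefficient, the list of variable-indices, and the exponents of the monomial $C(e,f)$: each gate of $C$ is a polynomial of degree $\le 2$ in its inputs over $\{0,1\}$ (e.g.\ $\mathrm{AND}(a,b)=ab$, $\mathrm{NOT}(a)=1-a$, $\mathrm{OR}(a,b)=a+b-ab$), so the whole circuit is simulated by introducing one summation variable per gate and forcing consistency. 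Then the monomial value itself is $\prod_i x_{n(\cdot)}^{\varepsilon_i}$, retrieved via the $x_{n(x_{j_1},\dots,x_{j_m})}$ indexing primitive. Finally the $x$th equation is $\sum_{f_1=0}^1\cdots\sum_{f_N=0}^1 (\text{value of monomial } C(x,f)) = 0$, and the whole system is $\bigwedge$ over all $x\in\{0,1\}^K$, itself written as nested $\sum$'s forced to equal $0$ via squaring (so that the conjunction over $2^K$ equations becomes $\sum_{e}(\text{eq}_e)^2 = 0$). Wrapping with the existential quantifiers over all $x_i$ completes the instance; correctness is immediate from the construction, and the size is polynomial in $|C|$.

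\textbf{Step 2: $\sigmaETR \lep \succETR$.} Given an explicit $\sigmaETR$ sentence $\exists x_1\cdots\exists x_n\,\varphi$, I build a Boolean circuit $C'$ describing the tree of an ordinary $\ETR$ formula obtained by fully expanding every $\sum$-gate. A node of the expanded tree is naturally addressed by a pair (node $v$ of the original $\sigmaETR$ tree, a tuple of current values for the summation variables whose scope contains $v$); since each summation range has length at most, say, $2^{\mathrm{poly}}$ described by its endpoints $a,b$ and the nesting depth is at most the size of the original formula, this address is a bit string of polynomial length. The circuit $C'$, on input such an address, (i) looks up the label and the local tree structure of $v$ in the (constant-size-per-node, polynomially-many-nodes) original formula — this is hardwired; (ii) if $v$ is a $\sum_{x_j=a}^b$ gate, it emits a balanced binary tree of $+$-gates over the $b-a+1$ instantiated copies of the subtree, routing each child address to $(\text{child of }v, \text{tuple extended by the appropriate value})$; (iii) for leaves that are plain variables $x_i$ it outputs the corresponding $\ETR$-variable index, and for the indexed form $x_{n(x_{j_1},\dots,x_{j_m})}$ it reads the current summation values $e_1,\dots,e_m$ from the address, interprets them as a binary number, and outputs that variable index (this is where the "index by summation variables" feature is discharged — it becomes a concrete index once the address is known). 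Because the resulting formula already has all negations pushed down (the original $\sigmaETR$ formula can be preprocessed into that normal form explicitly, as in Remark~\ref{remark:pushdown}, before it is handed to the circuit), $C'$ is a legal $\succETR$ instance, and it accepts iff $\varphi$ (expanded) is true iff the $\sigmaETR$ sentence holds.

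\textbf{Main obstacle.} The delicate part is the bookkeeping in Step 2: making the node-addressing scheme precise so that (a) addresses stay polynomial in length even with nested summations whose ranges are exponentially long, (b) the circuit can compute, in the allotted size, which instantiation tuple each child of a $\sum$-gate receives and correctly splices the balanced $+$-tree into the surrounding formula (getting the parent/child pointers consistent at the "seams"), and (c) the indexed-variable primitive is resolved to an honest $\ETR$ variable without blowing up the address. None of this is conceptually hard, but it is exactly the kind of careful circuit-construction argument where a naive encoding breaks the polynomial bound, so the proof must specify the address format and the pointer arithmetic with some care. Step 1 is comparatively routine once the gate-by-gate arithmetization of $C$ is set up, since $\sigmaETR$'s summation and indexing operators were designed precisely to make circuit-simulation expressible.
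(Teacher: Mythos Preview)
Your proposal is correct, and Step~1 is essentially the paper's argument compressed into a single reduction: the paper factors it as $\succQUAD \lep \succFEASfour \lep \sigmaETR$, where the first step is exactly your ``square and sum the equations'' trick and the second is exactly your arithmetize-the-circuit-plus-variable-selector construction (the paper's ``constant selector'' and ``variable selector'' using the $x_{n(e_1,\dots,e_N)}$ primitive). One small terminological slip: what you call ``one summation variable per gate and forcing consistency'' is really Tseitin's trick with \emph{existential} real variables, not summation variables; the paper makes this explicit at the end of its $\succFEASfour \lep \sigmaETR$ proof.

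Your Step~2 takes a genuinely different route. You build the circuit $C'$ so that it describes, node by node, the fully unfolded $\ETR$ syntax tree, with addresses of the form (original node, tuple of in-scope summation values) and an explicit balanced $+$-tree spliced in for each $\sum$-gate. The paper instead passes through $\sigmapiETR$ and then uses Tseitin again: for an operator $\sum_{e_1}\cdots\sum_{e_N} F$ it introduces a family of fresh variables $y_z$, $z\in\{0,1\}^{\le N}$, with equations $y_z = F(z)$ at the leaves and $y_z = y_{z0} + y_{z1}$ (or $y_{z0}\cdot y_{z1}$ for products) at internal nodes, and the circuit merely has to output these very regular equations. Your approach is more direct and avoids the extra variables, at the price of exactly the seam-and-pointer bookkeeping you flag as the main obstacle; the paper's approach sidesteps that bookkeeping entirely because the Tseitin equations have a uniform shape indexed by bit strings, so the circuit is almost trivial to describe. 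Both work, but the paper's version makes the ``why is the circuit polynomial-size'' question disappear, whereas in your version that is the heart of the argument.
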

which yields:
\begin{theorem}\label{thm:three:complete:problems}
The problems $\succQUAD$ and $\sigmaETR$ are $\csuccETR$-complete.
\end{theorem}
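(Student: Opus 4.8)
The plan is to obtain the theorem as an essentially formal consequence of the reductions already in place, so the bulk of the ``proof'' is a short diagram chase. First I would record the relevant completeness bookkeeping: by definition $\csuccETR$ is the downward closure of $\{\succETR\}$ under $\lep$, so $\succETR$ itself is $\csuccETR$-complete (it lies in $\csuccETR$ via the identity reduction, and every $B\in\csuccETR$ satisfies $B\lep\succETR$ by definition). Since polynomial-time many-one reductions compose, establishing $\csuccETR$-completeness of a problem $A$ reduces to exhibiting two polynomial-time reductions: a membership reduction $A\lep\succETR$, and a hardness reduction $\succETR\lep A$ (hardness of $A$ for the whole class then follows by composing $B\lep\succETR\lep A$ for arbitrary $B\in\csuccETR$).

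For $\succQUAD$ I would argue as follows. Membership $\succQUAD\in\csuccETR$ is obtained by composing the chain $\succQUAD\lep\sigmaETR\lep\succETR$ supplied by Lemma~\ref{lemma:seq:of:reductions}. Hardness is immediate from Lemma~\ref{lem:succQUAD-hard}, which gives $\succETR\lep\succQUAD$ directly (the extra structural guarantees on the output polynomials there are not needed for the completeness statement, though they will be useful later). Hence $\succQUAD$ is $\csuccETR$-complete.

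For $\sigmaETR$ the situation is symmetric. Membership $\sigmaETR\in\csuccETR$ is exactly the second reduction $\sigmaETR\lep\succETR$ of Lemma~\ref{lemma:seq:of:reductions}. For hardness I would compose $\succETR\lep\succQUAD$ (Lemma~\ref{lem:succQUAD-hard}) with the first reduction $\succQUAD\lep\sigmaETR$ of Lemma~\ref{lemma:seq:of:reductions}, yielding $\succETR\lep\sigmaETR$; together with the completeness of $\succETR$ this makes $\sigmaETR$ $\csuccETR$-hard, and hence $\csuccETR$-complete.

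I do not expect a genuine obstacle here: every substantive ingredient—the succinct Cook--Levin-style encoding underlying $\succETR\lep\succQUAD$, and the handling of succinctly written exponential sums and binary-indexed variables underlying $\succQUAD\lep\sigmaETR\lep\succETR$—is already encapsulated in Lemmas~\ref{lem:succQUAD-hard} and~\ref{lemma:seq:of:reductions}. The only thing to be mildly careful about is that the composed reductions remain polynomial-time (i.e.\ that the intermediate instances have size polynomial in the original input), which is immediate since each individual reduction is polynomial-time and $\lep$ is transitive.
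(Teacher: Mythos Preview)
Your proposal is correct and matches the paper's own argument: the theorem is derived purely formally from the cycle of reductions $\succETR \lep \succQUAD$ (Lemma~\ref{lem:succQUAD-hard}) and $\succQUAD \lep \sigmaETR \lep \succETR$ (Lemma~\ref{lemma:seq:of:reductions}), together with the trivial $\csuccETR$-completeness of $\succETR$. The paper presents this as an immediate consequence (``which yields'') of the two lemmas, exactly as you do.
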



\section{Proof of the Main Theorem} 
\label{sec:completeness}
Equipped with the tools provided in the previous sections, 
we are ready to prove our main result (Theorem~\ref{thm:main:result})
by showing:
\begin{align}
& \SATprobpolysum, \SATcausalpolysum \in \csuccETR \  \ \mbox{and}  \label{eq:memb}\\
& \SATprobpolysum, \SATcausalpolysum \ \mbox{are $\csuccETR$-hard}. \label{eq:hard}
\end{align}
We first show the membership of $\SATprobpolysum$ in the class $\csuccETR$ 
proving the following reduction:
\begin{proposition}\label{prop:SAT:causal:in:sigmaETR}
  $\SATcausalpolysum \lep \sigmaETR$.  
\end{proposition}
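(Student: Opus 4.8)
The plan is to translate a satisfiability instance $\varphi \in \Lcausalpolysum$ into a $\sigmaETR$ sentence that is true exactly when $\varphi$ has a model. The central observation is that an SCM over endogenous variables $\bX = \{X_1,\dots,X_n\}$ with value set $\mathit{Val}=\{0,\dots,\maxvaluecount-1\}$ and exogenous variables can, for the purpose of evaluating any $\Lfull$-formula, be replaced by a ``canonical'' model whose exogenous variable is a single random source ranging over the (exponentially many, but finitely many) deterministic response functions $\bX$-tuples; equivalently, one can quantify directly over the finitely many joint post-intervention distributions. Concretely, the satisfiability of $\varphi$ is equivalent to the existence of real unknowns specifying, for every intervention $\alpha$ appearing in $\varphi$, a probability table $p_\alpha(\bx)$ over $\mathit{Val}^n$, subject to: (i) nonnegativity and normalization of each table, and (ii) the mutual consistency constraints that make all these tables arise from one SCM — i.e., the tables must be jointly realizable, which by a standard argument reduces to requiring that the interventional tables be the marginals/pushforwards of a single table over response-function profiles. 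The cleanest route is the one implicit in the causal-hierarchy literature: quantify over one ``counterfactual-complete'' distribution on the space of functions $\cF$, but since that space has doubly-exponential size, instead I would quantify over a family of post-intervention tables indexed by the interventions syntactically occurring in $\varphi$, using the summation operator and the new indexed-variable construct $x_{n(x_{j_1},\dots,x_{j_m})}$ of $\sigmaETR$ to address a probability-table entry by its coordinate vector.

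First I would fix notation: enumerate the atomic intervention formulas $\alpha$ occurring in $\varphi$ (there are polynomially many), and for each introduce a block of real variables representing $p_\alpha(\bx)$ for $\bx \in \mathit{Val}^n$; these are accessed as $x_{n(\dots)}$ with the index computed in binary from summation variables ranging over $\{0,1\}$ — here I use the fact that $n$ variables over $\mathit{Val}$ can be encoded by $n\lceil\log_2\maxvaluecount\rceil$ bits. Second, I would write the well-formedness constraints: $\sum p_\alpha(\bx) \ge 0$ termwise (one asserts $0 \le x_{n(\vec e)}$ under a cascade of $\{0,1\}$-summation operators, or rather a conjunction — note $\sum$ gives sums, so nonnegativity of each entry must be phrased via the indexed-variable comparison inside the quantifier-free part, iterated; I would instead enforce $\sum_{\bx} p_\alpha(\bx) = 1$ and enforce nonnegativity by substituting $p_\alpha(\bx) = q_\alpha(\bx)^2$, a trick that keeps the formula polynomial-sized and sidesteps needing a universally-quantified nonnegativity assertion). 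Third, and this is where the real content lies, I would write the consistency constraints linking the tables: for each pair consisting of a ``larger'' intervention and the interventions obtained by intervening on additional variables, the recursive structure of SCM semantics forces $p_{\alpha}$ to determine $p_{\alpha'}$ when $\alpha'$ extends $\alpha$ by fixing one more variable — using the recursiveness/topological-order assumption one can express $p_{\alpha \cup \{X_i = v\}}$ as a summation over the table $p_\alpha$ (this is exactly the kind of marginalization the $\sum$ operator in $\sigmaETR$ was designed to capture). Fourth, I would translate $\varphi$ itself: each primitive $\PP{[\alpha]\varphi'\mid\delta'}$ becomes a ratio of two sums over $p_\alpha$ restricted to the cells satisfying $\varphi'\wedge\delta'$ resp.\ $\delta'$; multiply through to clear denominators (handling the zero-denominator convention as in the semantics), and expressions $\be + \be'$, $\be\cdot\be'$, $\sum_x \be$ map homomorphically onto $\sigmaETR$'s $+$, $\cdot$, and summation operator. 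Finally, the Boolean structure of $\varphi$ (conjunction, negation) maps directly, after pushing negations inward as the paper requires.

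The main obstacle I expect is the consistency step: justifying that requiring only the polynomially-many syntactically-occurring interventional tables to be mutually consistent (rather than exhibiting a full SCM) is both sufficient and necessary for $\varphi$'s satisfiability. Sufficiency needs a lemma that any coherent family of interventional tables over a fixed DAG-like ordering extends to a genuine SCM — essentially the completeness of the ``canonical SCM'' / response-function representation for the interventional layer, which I would cite or prove via the standard construction (each exogenous profile picks a response function per variable; the distribution over profiles is reconstructed from the observational table together with the intervention tables using the topological order). Necessity is routine. A secondary technical point is bookkeeping: ensuring the resulting $\sigmaETR$ instance has size polynomial in $|\varphi|$ — the number of summation operators and the depth of nested $\{0,1\}$-summations used for indexing are both $O(n\log\maxvaluecount)$ per primitive, so this is fine, but it must be checked that the consistency constraints (one family per pair of nested interventions) also stay polynomial, which they do since only interventions mentioned in $\varphi$ matter. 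Once these pieces are in place, the reduction is clearly polynomial-time computable, and correctness follows by structural induction on $\varphi$ matching the $\llbracket\cdot\rrbracket_{\fM}$ semantics against the $\sigmaETR$ interpretation.
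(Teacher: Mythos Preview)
Your overall architecture matches the paper's: one block of real unknowns per interventional antecedent $\alpha_i$ encoding the post-intervention table, nonnegativity enforced by writing each entry as a square, per-table normalization, and a homomorphic translation of $\varphi$. Concretely the paper takes variables $x_\delta$ for every atomic event $\delta=[\alpha_i]\delta_{\textit{prop}}$ (with $x_\delta^2$ standing for the probability), variables $x_f$ and $x_{f\mid h}$ for the primitives of $\varphi$, a constraint $x_f=\sum_\delta E(f,\delta)\,x_\delta^2$ where $E$ is an arithmetized circuit testing ``same antecedent and $\delta'\Rightarrow f'$'', a constraint $x_{f\mid h}\cdot x_h=x_{f\wedge h}$ for conditioning, and one normalization equation per $\alpha_i$. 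Indexing is handled through the extended variable-addressing of Remark~\ref{remark:extended-sigma-ETR}; your ``compute the index in binary from summation variables'' sketch needs exactly that device.

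The divergence is your third step. The paper imposes \emph{only} effectiveness (the table for $\alpha_i$ is supported on outcomes consistent with $\alpha_i$) and normalization; it adds no cross-intervention constraint linking $p_\alpha$ to $p_{\alpha'}$ at all. So the ``main obstacle'' you flag is simply absent from the paper's argument --- the paper is, in effect, treating any effectiveness-respecting family of tables as SCM-realizable at the interventional layer. Whether that is justified by results in the cited causal-hierarchy literature or is a gap in the paper's own write-up, your proposed way of enforcing consistency is wrong: $p_{\alpha\cup\{X_i=v\}}$ is \emph{not} obtained from $p_\alpha$ by summation (an additional intervention is not a marginalization), and no topological order can be fixed a priori because the SCM itself is what is being existentially chosen. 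If realizability really must be encoded, the route you mention and then discard --- quantify directly over an SCM with polynomially bounded exogenous support via a small-model lemma, addressing the exponentially many function-table entries through indexed variables --- is the one that actually fits inside $\sigmaETR$; the doubly-exponential blowup you feared is precisely what the small-model lemma removes.
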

Due to the transitivity of $\lep$ and  by
Theorem~\ref{thm:three:complete:problems},
we get that $\SATcausalpolysum \lep \succETR$. Moreover, since 
$\SATprobpolysum \lep \SATcausalpolysum $,
 \eqref{eq:memb} follows.

For~\eqref{eq:hard}, it is sufficient to show the 
$\csuccETR$-hardness of  $\SATprobpolysum$. 
We prove this in Proposition~\ref{prop:red:to:satprob}.
To this aim, we introduce the problem $\succETRcR{\sfrac{1}{8}}{-\sfrac{1}{8},{\sfrac{1}{8}}}$
and 
we prove the $\csuccETR$-hardness of this problem.

\begin{definition}[\citeauthor{abrahamsen2017artTR}, \citeyear{abrahamsen2017artTR}]
 In the problem $\ETRc{c}$, where $c \in \Rset$, we are given a set of real variables 
 $\{x_1,\myldots, x_n\}$ and a set of equations of the form $x_i=c, \ x_{i_1} + x_{i_2} = x_{i_3}, \ x_{i_1} x_{i_2}=x_{i_3},$
 for $i,i_1,i_2,i_3 \in [n]$. The goal is to decide whether the system of equations has a solution.
The problem, where we also require that $x_1,\myldots, x_n \in [a, b]$, for some
$a,b \in \Rset$, is denoted by  $\ETRcR{c}{a,b}$.
\end{definition}

\begin{lemma}[\citeauthor{abrahamsen2017artTR}, \citeyear{abrahamsen2017artTR}]
  $\ETRc{1}$ and $\ETRcR{\sfrac{1}{8}}{-\sfrac{1}{8},{\sfrac{1}{8}}}$ are $\cETR$-complete. 
\end{lemma}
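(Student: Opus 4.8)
I would handle the two claims in parallel, using the unbounded problem $\ETRc{1}$ as a stepping stone toward the bounded problem $\ETRcR{\sfrac{1}{8}}{-\sfrac{1}{8},\sfrac{1}{8}}$. Membership in $\cETR$ is immediate in both cases: an instance of $\ETRc{1}$ is a conjunction of polynomial equations, and an instance of $\ETRcR{\sfrac{1}{8}}{-\sfrac{1}{8},\sfrac{1}{8}}$ is such a conjunction together with the polynomial inequalities $-\sfrac{1}{8}\le x_i\le\sfrac{1}{8}$; in both cases this is an $\ETR$ sentence, so the problem lies in $\cETR$.

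For the $\cETR$-hardness of $\ETRc{1}$ I would reduce from $\QUAD$, which is $\cETR$-complete. The two things to arrange are the constants and the decomposition into the three primitive equation forms. First I synthesise every needed constant from the single allowed constant~$1$: the value $0$ is the unique solution of $z+z=z$, the value $-1$ is obtained from $m+u=z$ with $u=1$ and $z=0$, an integer of bit-length $L$ is built from $1$ and $-1$ by an addition chain using repeated doubling ($O(L)$ fresh variables and equations), and a rational $p/q$ is obtained from a fresh $r$ via the single multiplication $q\cdot r=p$. Then I decompose the $\QUAD$ instance: every monomial of degree $\le 2$, say $a\,x_ix_j$, is realised by the multiplications $x_ix_j=w$ and $a\cdot w=w'$, and every equation ``$\sum(\text{monomials})=0$'' is realised by an addition chain that sums the monomial variables and equates the total to the variable standing for $0$. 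This is a polynomial-time reduction, so $\ETRc{1}$ is $\cETR$-hard, hence $\cETR$-complete.

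The $\cETR$-hardness of $\ETRcR{\sfrac{1}{8}}{-\sfrac{1}{8},\sfrac{1}{8}}$ is the delicate part; I would start from the $\ETRc{1}$ instance just built and, beyond reproducing the constant $\sfrac{1}{8}$ (and $-\sfrac{1}{8}$, which is a legal value since it lies in the range, obtained from $\sfrac{1}{8}$ and $0$), confine all variables to $[-\sfrac{1}{8},\sfrac{1}{8}]$ by a \emph{range reduction}. One natural approach is to homogenise the system: introduce one ``scale'' variable $s$ and replace each variable $x$ by a new variable $y$ meant to stand for $x\cdot s$, so that $x=1$ becomes $y=s$, $x+x'=x''$ becomes $y+y'=y''$, and $xx'=x''$ becomes (after clearing the denominator $s$) the relation $yy'=s\,y''$, which splits into two primitive multiplications with one fresh variable. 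A homogeneous system has a nonzero solution iff it has one inside a fixed box, and via $x=y/s$ its solutions with $s\neq 0$ correspond exactly to the solutions of the original instance, so it remains to add a nontriviality constraint excluding the all-zero solution; all remaining auxiliary constants are built from the available $\sfrac{1}{8}$ and $0$. Finally one rescales all variables by a common small factor so that every value that occurs lies in $[-\sfrac{1}{8},\sfrac{1}{8}]$.

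I expect the \emph{main obstacle} to be the magnitude bookkeeping in this last step: two values in $[-\sfrac{1}{8},\sfrac{1}{8}]$ can sum outside the interval, so each equation has to be decomposed into primitive $+$-gates along a balanced addition tree, exploiting that after scaling every monomial of degree $\ge 2$ already has magnitude at most $\sfrac{1}{64}$, so that every intermediate partial sum provably stays in the box; likewise the nontriviality constraint must be expressed using only equalities of the permitted forms together with auxiliary squared slack variables. The value $\sfrac{1}{8}$ is chosen precisely so that the available constant, the target interval, and the scaled (homogeneous) arithmetic are mutually consistent. Combining the two hardness reductions with the membership observation yields that $\ETRc{1}$ and $\ETRcR{\sfrac{1}{8}}{-\sfrac{1}{8},\sfrac{1}{8}}$ are $\cETR$-complete.
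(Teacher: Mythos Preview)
This lemma is not proved in the paper; it is quoted from Abrahamsen, Adamaszek, and Miltzow. The paper does, however, prove the succinct analogues in the appendix (Lemmas~\ref{lem:succQUAD:to:succETRc1} and~\ref{lem:succETRc:1:to:succETRc:1_8}), following the same ideas as the original source, so I compare your plan against those. Your membership argument and your reduction $\QUAD\lep\ETRc{1}$ are correct and essentially the standard construction used there.

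The gap is in your reduction to $\ETRcR{\sfrac{1}{8}}{-\sfrac{1}{8},\sfrac{1}{8}}$. Homogenisation does not close: the all-zero assignment always satisfies a homogeneous system and lies in the box, so without your ``nontriviality constraint'' the target instance is a yes-instance regardless of the source. But that constraint cannot be expressed with the available tools. What you actually need is $s\neq 0$ (there may well be nonzero solutions with $s=0$, so excluding only the all-zero point is not enough), and $\{s\neq 0\}$ is an open condition that no conjunction of polynomial equalities can cut out; the box does not help, and ``squared slack variables'' cannot turn an open condition into an equational one. Pinning $s$ to a fixed nonzero value destroys homogeneity and hence the scaling freedom you were relying on---and then the question becomes how small that value must be, which is precisely the issue you were trying to avoid. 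The approach that actually works, used by Abrahamsen et al.\ and reproduced in the paper's succinct proof, is quantitative: invoke the solution-size bound of Lemma~\ref{lem:ETR:sol:size} (a satisfiable instance of bit-length $L$ in $n$ variables has a solution of norm at most $2^{2^{O(n\log L)}}$), construct $\epsilon=2^{-2^{K}}$ for a suitable polynomially large $K$ by repeated squaring starting from the constant $\sfrac{1}{8}$, and scale each variable $x$ to $x'=\epsilon x$, with an auxiliary $x''=\epsilon^2 x$ so that $yz=x$ becomes the pair $y'z'=x''$ and $\epsilon\cdot x'=x''$. That explicit ball lemma is the missing ingredient in your plan; your ``magnitude bookkeeping'' concern about sums leaving the box is a secondary issue that disappears once the scaling is done this way.
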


%
%
%

Let $\succETRc{c}$ and $\succETRcR{c}{a,b}$ denote the succinct  versions of 
$\ETRc{c}$ and $\ETRcR{c}{a,b}$, respectively. We assume that their instances are 
represented as seven Boolean circuits 
$C_0,C_1,\myldots,C_6: \{0,1\}^M \to \{0,1\}^N$
such that 
$C_0(j)$ gives the index of the variables in the $j$th equation of type $x_i=\sfrac{1}{8}$,
$C_1(j),C_2(j),C_3(j)$ give the indices of variables in the $j$th equation of the type $x_{i_1}+x_{i_2}= x_{i_3}$, and 
$C_4(j),C_5(j),C_6(j)$ give the indices of variables in the $j$th equation of the type $x_{i_1} x_{i_2}= x_{i_3}$.
Without loss of generality, we can assume that an instance has the same number $2^M$ of 
equations of each type; If not, one of the equations can be duplicated as many times as needed.


\begin{lemma}\label{lemma:interv:succ:complete}
   $\succETRcR{\sfrac{1}{8}}{-\sfrac{1}{8},{\sfrac{1}{8}}}$ is $\csuccETR$-complete. 
\end{lemma}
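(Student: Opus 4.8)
The plan is to establish membership and hardness separately. For membership, I would argue that $\succETRcR{\sfrac{1}{8}}{-\sfrac{1}{8},{\sfrac{1}{8}}} \lep \succETR$: given the seven circuits $C_0,\dots,C_6$, I build a single circuit $C$ that encodes the obvious $\ETR$ sentence asserting that there is an assignment $x_1,\dots,x_{2^N}\in[-\sfrac18,\sfrac18]$ satisfying all $3\cdot 2^M$ equations plus the $2^N$ range constraints. The underlying formula is a conjunction tree over these clauses; each clause is of bounded size (a single equation $x_i=\sfrac18$, $x_{i_1}+x_{i_2}=x_{i_3}$, $x_{i_1}x_{i_2}=x_{i_3}$, or $-\sfrac18\le x_i\wedge x_i\le\sfrac18$), so node $v$ of the conjunction tree can be decomposed as a ``level'' in a balanced binary tree of $\wedge$-nodes together with an index $j$ into one of the equation lists and an offset inside the corresponding constant-size clause gadget. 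From $v$ the circuit $C$ computes which case it is in, calls the appropriate $C_t(j)$ to fetch the variable indices, and outputs the label, parent, and children of $v$; all of this is computable by a polynomial-size circuit since $C_0,\dots,C_6$ are. The one technical point (cf.~Remark~\ref{remark:pushdown}) is that the output must already be negation-free and $\le$-free, but we only introduce $\le$ in the range constraints, so we just expand $x_i\le \sfrac18$ to $(x_i<\sfrac18)\vee(x_i=\sfrac18)$ inside the gadget before encoding. Hence membership in $\csuccETR$ follows by Theorem~\ref{thm:three:complete:problems} (or directly from the definition of $\csuccETR$).

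For hardness I would reduce from $\succQUAD$, which is $\csuccETR$-complete by Theorem~\ref{thm:three:complete:problems}, and in fact from the special form produced by Lemma~\ref{lem:succQUAD-hard}: quadratic polynomials with at most four monomials and coefficients $\pm1$. The idea is to mimic, \emph{at the circuit level}, the classical (non-succinct) reductions underlying the $\cETR$-completeness of $\ETRc{1}$ and $\ETRcR{\sfrac18}{-\sfrac18,\sfrac18}$ of \citeauthor{abrahamsen2017artTR}. Concretely: first reduce $\succQUAD$ to $\succETRc{1}$ by introducing, for each equation and each monomial slot, a constant number of auxiliary variables and equations of the allowed types $x_i=1$, $x_{i_1}+x_{i_2}=x_{i_3}$, $x_{i_1}x_{i_2}=x_{i_3}$ that compute the monomial (a product of two input variables times $\pm1$) and accumulate the (at most four) monomials into the value $0$ — which is encoded as $x_{\mathrm{zero}}$ with $x_{\mathrm{zero}}+x_{\mathrm{zero}}=x_{\mathrm{zero}}$ forcing it to $0$, and negation handled via $x+x'=x_{\mathrm{zero}}$. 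Because the new variable indices are simple affine functions of $(x,y)$ (the equation index and monomial index) together with a constant-size local offset, and the monomial data come from $C$, the seven circuits $C_0,\dots,C_6$ for the $\succETRc{1}$ instance are polynomial-size and computable from $C$. Then compose with the succinct analogue of the $\ETRc1 \lep \ETRcR{\sfrac18}{-\sfrac18,\sfrac18}$ rescaling step: substituting $x_i = \sfrac18 + \sfrac18\,y_i$ (or whatever affine substitution \citeauthor{abrahamsen2017artTR} use) turns each equation of the three allowed types into a constant number of equations of the same types over the rescaled variables and forces every variable into $[-\sfrac18,\sfrac18]$; again this is a purely local transformation, so it lifts to circuits.

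The main obstacle I expect is bookkeeping the \emph{index arithmetic} so that it stays polynomial-time computable and the variable-index map remains injective. Each classical gadget step blows up the variable set by a constant factor and introduces fresh names; in the succinct setting I must fix, once and for all, an explicit polynomial-size addressing scheme — e.g.\ a variable of the new instance is addressed by a tuple (phase, original equation index $x$, original monomial index $y$, constant-size local tag) — and verify that distinct gadget invocations never collide and that, given such an address, the circuit can decide in polynomial time which equation of which type it participates in and recover the addresses of its neighbours. Once this addressing discipline is pinned down, each individual reduction step is the same constant-size local replacement as in the non-succinct proofs, and correctness (a solution of one instance lifts to a solution of the other, and conversely) is inherited verbatim from \citeauthor{abrahamsen2017artTR}; the only genuinely new content is that the transformations are \emph{local enough} to be realized by polynomial-size circuits, which is what makes them $\lep$-reductions between succinct problems. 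Combining membership and hardness gives $\csuccETR$-completeness.
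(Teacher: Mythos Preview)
Your membership argument and the first hardness step ($\succQUAD \lep \succETRc{1}$) are fine and match the paper's approach. The gap is in your second hardness step, $\succETRc{1} \lep \succETRcR{\sfrac18}{-\sfrac18,\sfrac18}$.

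A fixed affine substitution such as $x_i = \sfrac18 + \sfrac18\,y_i$ cannot work: instances of $\ETRc{1}$ carry no range constraint, and it is known (see Lemma~\ref{lem:ETR:sol:size}, due to Schaefer and \v{S}tefankovi\v{c}) that a satisfiable instance may have \emph{only} solutions whose coordinates are as large as $2^{2^{8n\log L}}$. No constant-coefficient substitution maps such a solution into $[-\sfrac18,\sfrac18]$; you must rescale by a factor $\epsilon$ at least this small. The actual Abrahamsen--Adamaszek--Miltzow reduction is therefore not the constant affine map you suggest: it \emph{constructs} $\epsilon = 2^{-2^K}$ inside the target instance via a chain of repeated squarings $v_{i+1}=v_i^2$, and then replaces $x=1$ by $x'=\epsilon$, $y+z=x$ by $y'+z'=x'$, and $yz=x$ by the pair $y'z'=x''$, $\epsilon\, x'=x''$ (so $x'=\epsilon x$, $x''=\epsilon^2 x$).

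In the succinct setting this is precisely where the argument ceases to be ``purely local'': since the encoded formula has bit length $L\le 2^S$ and up to $2^S$ variables (where $S$ is the total size of $C_1,\dots,C_7$), one needs $K=\Theta(S\cdot 2^S)$ squarings, i.e.\ an \emph{exponential} number of new variables $v_1,\dots,v_K$. The paper's key observation is that this squaring chain, although exponentially long, is so regular that the equations $v_{i}v_i=v_{i+1}$ can be output by circuits of size $\poly(S)$ (the indices $i$ are just $O(S)$-bit strings and the circuit does $i\mapsto i+1$). Your ``purely local, so it lifts to circuits'' sentence sweeps exactly this under the rug; once you identify the need for the exponential squaring chain and verify that it can be succinctly encoded, the rest of your plan goes through.
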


To establish the lemma,
we first show that $\succQUAD \lep \succETRc{1}$ 
and then prove that 
$\succETRc{1}\lep  \succETRcR{\sfrac{1}{8}}{-\sfrac{1}{8},{\sfrac{1}{8}}}$
(see 
 \citep{zander2023ijcai-arxiv}).

%
%


Now we are ready to prove the $\csuccETR$-hardness of  $\SATprobpolysum$ through the following reduction:

\begin{proposition}\label{prop:red:to:satprob}
  $\succETRcR{\sfrac{1}{8}}{-\sfrac{1}{8},{\sfrac{1}{8}}} \lep \SATprobpolysum$.
\end{proposition}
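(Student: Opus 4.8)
The goal is to reduce $\succETRcR{\sfrac{1}{8}}{-\sfrac{1}{8},{\sfrac{1}{8}}}$ to $\SATprobpolysum$. An instance is given by seven circuits $C_0,\ldots,C_6:\{0,1\}^M\to\{0,1\}^N$ describing $2^M$ equations of each of the three types ($x_i=\sfrac18$, $x_{i_1}+x_{i_2}=x_{i_3}$, $x_{i_1}x_{i_2}=x_{i_3}$) over variables $x_1,\ldots,x_{2^N}$ all constrained to lie in $[-\sfrac18,\sfrac18]$. The plan is to build, in polynomial time, a single formula $\varphi\in\Lprobpolysum$ that is satisfiable by some SCM if and only if the equation system has a solution. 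The encoding trick is standard for this line of work: introduce a random variable, say $A$, whose domain is large enough to index all $2^N$ real variables (so we need $\lceil N/\log\maxvaluecount\rceil$ copies of an indexing variable, or simply assume $\maxvaluecount$ chosen so one variable $A$ with an auxiliary tuple of bits suffices), and represent the value of $x_i$ by a probability expression $p(i)$ built from $\PP{\cdot}$-terms that depends on $i$ in an arithmetically simple way — e.g. $x_i \;\leftrightarrow\; \PP{W=1\mid A=i} - \sfrac18$ after a shift, so that the range constraint $x_i\in[-\sfrac18,\sfrac18]$ is automatic because conditional probabilities lie in $[0,1]$. The shift by $\sfrac18$ is exactly why the problem was stated with the interval $[-\sfrac18,\sfrac18]$ and constant $\sfrac18$: it makes the affine change of variables to a genuine probability painless.

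The heart of the construction is to turn each of the three exponentially large families of equations into a single $\Sigma$-quantified (in)equality of polynomial size. For the addition equations I would write something like
\[
\sum_{j} \bigl(p(C_1(j)) + p(C_2(j)) - p(C_3(j)) - \tfrac18\bigr)^2 \;=\; 0,
\]
where the sum over $j$ ranges over the $2^M$ equation indices, realised by the syntactic summation operator $\sum_x$ over a tuple of dummy variables encoding $j\in\{0,1\}^M$; analogously a single squared-sum equality for the multiplication family and one for the $x_i=\sfrac18$ family. The subtlety is that $C_0,\ldots,C_6$ are circuits, not closed-form functions, so $p(C_\ell(j))$ is not directly expressible: the dummy variable $j$ must be routed through the circuit. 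I would handle this exactly as in the membership direction and in Lemma~\ref{lemma:seq:of:reductions} — simulate the Boolean circuit $C_\ell$ by arithmetic over the summation variables (each gate becomes an arithmetic identity on $0/1$-valued sub-expressions), using the $x_{n(x_{j_1},\ldots,x_{j_m})}$-style bit-indexing idea to read off the variable whose index is the circuit's output bitstring; all of this is polynomial in the circuit size. Concretely, one builds an auxiliary formula (another squared $\Sigma$-equality) asserting, for all inputs $j$, that a designated family of "wire" probabilities equals the circuit's computation, and then uses those wire values as the indices $C_\ell(j)$ inside $p(\cdot)$.

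The remaining steps are bookkeeping. First, one must pin down the semantics so that the chosen $\PP{\cdot\mid A=i}$-terms are well-defined, i.e. force $\PP{A=i}>0$ for every relevant $i$; this can be done by a polynomial-size $\Sigma$-equality forcing $A$ to be (say) uniform, or by adding $\bigwedge$-style lower bounds, and it is harmless. Second, completeness: given a real solution $(x_i)$ of the system, define an SCM in which $A$ is uniform over the index range, $W$ is an independent coin with $\PP{W=1\mid A=i}=x_i+\sfrac18\in[0,1]$, and the circuit-wire variables deterministically compute $C_\ell$; this model satisfies $\varphi$. Third, soundness: any model satisfying $\varphi$ yields reals $x_i:=\PP{W=1\mid A=i}-\sfrac18\in[-\sfrac18,\sfrac18]$, and because every squared-$\Sigma$ term vanishes, each individual summand vanishes, so all three equation families hold. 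Finally, the size of $\varphi$ is $O(\mathrm{poly}(M,N,|C_0|,\ldots,|C_6|))$ and it is computable in polynomial time, and $\varphi\in\Lprobpolysum$ (no do-operators needed), completing the reduction; combined with Lemma~\ref{lemma:interv:succ:complete} this gives $\csuccETR$-hardness of $\SATprobpolysum$.

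The main obstacle — and the place where real care is required — is the faithful, polynomial-size arithmetization of the seven circuits inside a purely probabilistic formula: one must encode "the $i$th variable" via dummy summation bits, thread those bits through an arithmetic simulation of each $C_\ell$, extract the output index, and use it to select $p(C_\ell(j))$, all while keeping every intermediate quantity a legitimate sub-expression of $T(\Lprop)$ and controlling that the nested summations do not blow up. The squaring-and-summing-to-zero gadget for converting a universally quantified family of equalities into one equality is routine (it already appeared in Eq.~\eqref{eq:factor} and \eqref{eqn:front:orientation}); the circuit-threading is the technical crux, and it is essentially the same machinery that powers the $\sigmaETR$ reductions, so I would lean on that.
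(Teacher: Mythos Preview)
Your plan is essentially the paper's proof: encode each real unknown by a shifted/scaled conditional probability indexed by $N$ binary random variables, arithmetize each circuit $C_i$ gate-by-gate via auxiliary random variables whose probabilities are pinned down by squared-$\Sigma$ constraints, build a selector expression $\alpha(i,j)=\sum_{e_1,\ldots,e_N} q_{e_1,\ldots,e_N}\prod_k[\text{$k$th output bit of $C_i(j)$ equals }e_k]$ so that $\alpha(i,j)=q_{C_i(j)}$, and bundle the three equation families into a single sum-of-squares equal to $0$.

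One point needs correction. Defining $x_i:=\PP{W=1\mid A=i}-\sfrac18$ gives $x_i\in[-\sfrac18,\sfrac78]$, not $[-\sfrac18,\sfrac18]$; the shift enforces the lower bound but not the upper one, so soundness of the reduction fails: a satisfying model could encode a solution with some $x_i>\sfrac18$, and the $\succETRcR{\sfrac{1}{8}}{-\sfrac{1}{8},{\sfrac{1}{8}}}$ instance may well admit such a solution while having none in the required interval. The paper takes $q_{e_1,\ldots,e_N}:=\sfrac{2}{8}\cdot\PP{X_0=0\mid X_1=e_1,\ldots,X_N=e_N}-\sfrac18$, and the factor $\sfrac{2}{8}=\sfrac14$ is precisely what forces $q\in[-\sfrac18,\sfrac18]$ automatically. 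With that scaling fix, your outline goes through.
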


\begin{proof}
Let us assume that the instance of $\succETRcR{\sfrac{1}{8}}{-\sfrac{1}{8},{\sfrac{1}{8}}} $ is
represented by seven Boolean circuits $C_0,C_1,\myldots,C_6: \{0,1\}^M \to \{0,1\}^N$ as described above.
Let  the variables of the instance 
be indexed as $x_{e_1,\ldots,e_N}$, with $e_i\in \{0,1\}$ for $i\in[N]$. 
Below, we often identify the bit sequence $b_1,\myldots, b_L$ by an integer $j$, with $0\le j \le 2^L -1$,
the binary representation of which is $b_1\myldots b_L$ and vice versa. 

The instance of the problem $\succETRcR{\sfrac{1}{8}}{-\sfrac{1}{8},{\sfrac{1}{8}}} $ is satisfiable if and only if: 
\begin{align}\label{eq:sat:cond:etr}
&\hspace*{-10mm} \exists x_0,\myldots, x_{2^N-1} \in [-\sfrac{1}{8},\sfrac{1}{8}]    \nonumber \\
     \sum_{j=0}^{2^M -1} \Big( &(x_{C_0(j)} - \sfrac{1}{8})^2 +
    (x_{C_1(j)} + x_{C_2(j)} - x_{C_3(j)})^2 +\nonumber \\
  & \ \  (x_{C_4(j)} \cdot  x_{C_5(j)} - x_{C_6(j)})^2 \Big) \ =  \ 0.
\end{align}
We construct a system of equations in the language $\Lprobpolysum$ and prove that 
there exists a model which satisfies the equations if and only if the formula~\eqref{eq:sat:cond:etr}
is satisfiable.

Let $X_0, X_1,\myldots,X_N$ be binary random variables. We will 
model each real variable 
$x_{e_1,\ldots,e_N}$ as a term involving the conditional probability  $\PP{X_0=0 \mid X_1 = e_1,\myldots, X_N = e_N} $ 
as follows:
$$
 \mbox{$ q_{e_1,\ldots, e_N}\  := \   \sfrac{2}{8} \cdot \PP{X_0=0 \mid X_1 = e_1,\myldots, X_N = e_N}  - \sfrac{1}{8}.$}
$$
This guarantees that 
$
   \mbox{$q_{e_1,\ldots, e_N} \in [ -\sfrac{1}{8} , \sfrac{1}{8}]$.}
$
In our construction, the existential quantifiers in the formula~\eqref{eq:sat:cond:etr} correspond to the 
existence of a probability distribution $P(X_0, X_1,\myldots,X_N)$ which determines the values $ q_{e_1,\ldots, e_N}$.
This means that the formula~\eqref{eq:sat:cond:etr} is satisfiable if and only if there exists a model for the equation:
\begin{align}\label{eq:sat:cond:etr:prob}
    \sum_{j=0}^{2^M -1} \Big( & (q_{C_0(j)} - \sfrac{1}{8})^2 +
    (q_{C_1(j)} + q_{C_2(j)} - q_{C_3(j)})^2 +\nonumber \\
  &  \ \ (q_{C_4(j)} \cdot  q_{C_5(j)} - q_{C_6(j)})^2 \Big) \ =  \ 0.
\end{align}

The challenging task which remains to be solved is to express the terms $q_{C_i(j)}$
in the language $\Lprobpolysum$. Below we provide a system of equations that achieves this goal.

To model a Boolean formula encoded by a node of $C_i$, with $i=0,1,\myldots, 6$, we use arithmetization 
to go from logical formulas to polynomials over terms involving probabilities of the form $\PP{\delta}$.
We start with input nodes $v\in\{0,1\}$ and model them as events $\delta_v$ such that $\PP{\delta_v}=1$ 
if and only if $v=1$. For every internal node $v$ of $C_i$, we proceed as follows.
If $v$ is labeled with $\neg$ and $u$ is a child of $v$, then we define an event $\delta_v$ such 
that $\PP{\delta_v}=1-\PP{\delta_u}$. If $v$ is labeled with $\wedge$ and $u$ and $w$ are children  
of $v$, then we specify an event $\delta_v$ such that $\PP{\delta_v}=\PP{\delta_u}\PP{\delta_w}$.
Finally, if $v$ is labeled with $\vee$ and $u$ and $w$ are children  
of $v$, then $\PP{\delta_v}=1-(1-\PP{\delta_u})(1-\PP{\delta_w})$.
Thus, if $v$ is an output node of a circuit $C_i$, then, 
for  $C_i$ fed with input $j=b_{1}\myldots b_{M}\in \{0,1\}^M$, we have $v=1$ if and only if
$\PP{\delta_v}=1$. We define the events as follows.

Let $u_{i,1},\myldots, u_{i,M}$ be the input nodes of $C_i$ and let, for every node $v$ of $C_i$, 
the sequence ${\ell^v_{1}}, \myldots, {\ell^v_{k_v}}$ denotes the indices of  the leaves 
$u_{i,\ell^v_{1}}, \myldots, u_{i,\ell^v_{k_v}}$ of the sub-circuit with root $v$.
Note that, for an input $u_{i,k}$, the index $\ell^{u_{i,k}}_{1}=k$ and $k_{u_{i,k}}=1$.
To define the events,  for every node $v$, we introduce a sequence of $M$ new binary random 
variables $X_{v,1}\myldots, X_{v,M}$ and provide an equation that enforces the properties described 
above. The probabilities used in the equations involve, for a node $v$, only the variables 
indexed with $\ell^v_{1}, \myldots, \ell^v_{k_v}$; The remaining variables are irrelevant. 

For the $k$th input node $u_{i,k}$ 
and the variable $X_{u_{i,k},k}$,  we require
  \begin{align} \label{eq:constr:1}
    &     \PP{X_{u_{i,k},k} =0}=0 \ \mbox{and}\  \PP{X_{u_{i,k},k} =1}=1 .
  \end{align}
 For every internal node $v$ of $C_i$, if $v$ is labeled with $\neg$ and $u$ is a child of $v$, 
 then we define the equation:
  \begin{align}\label{eq:constr:2}
    \sum_{{b_1},\ldots,b_{k_v}} & \big( 
   \PP{X_{v,\ell^v_1}=b_{1},\myldots,X_{v,\ell^v_{k_v}}=b_{k_v}} -  \nonumber \\
 &\ \  ( 1 -   \PP{X_{u,\ell^v_1}=b_1,\myldots,X_{u,\ell^v_{k_v}}=b_{k_v}})\big)^2 
 \  = \ 0. 
 \end{align}
 If $v$ has label $\wedge$ and $w$ and $z$ are  children  of $v$, 
 then we define 
  \begin{align}\label{eq:constr:3}
  \sum_{b_{1},...,b_{k_v}} 
  & \big(\PP{X_{v,\ell^v_1}=b_{1},\myldots,X_{v,\ell^v_{k_v}}=b_{k_v}}  \ -  \nonumber \\
  & \ \ \PP{X_{w,\ell^w_1}=b_{j_1},\myldots,X_{w,\ell^w_{k_w}}=b_{j_{k_w}}}\  \times \nonumber\\
 &  \ \    \PP{X_{z,\ell^z_1}=b_{j'_1},\myldots,X_{z,\ell^z_{k_z}}=b_{j'_{k_z}}} \big)^2 
 \ = \ 0 , 
 \end{align}
 where the indices $j_k,j'_{k'}\in \{ 1,\myldots k_v\}$ are defined as 
 $j_k:=i$ for $\ell^w_{k}=\ell^v_i$ and $j'_{k'}:=i$ for $\ell^z_{k'}=\ell^v_i$.
%
%

If $v$ is labeled with $\vee$, we encode it as a negation of $\wedge$ after negating the children of $v$.

Let $\hat{u}_{i,1},\myldots, \hat{u}_{i,N}$ be the  output nodes of $C_i$.
For the $k$th output $v=\hat{u}_{i,k}$ and for an $C_i$'s  input $j=b_{1}\myldots b_{M}$, we denote by 
$o_{i,k}(j)$ the expression
$$
   o_{i,k}(j) := \PP{X_{v,\ell^v_1}=b_{\ell^v_1},\myldots,X_{v,\ell^v_{k_v}}=b_{\ell^v_{k_v}}} 
$$ 
which represents the value  of the $k$th output bit of $C_i(j)$ in such a way that the bit is equal to $e_k$ 
if and only if  the probability $o_{i,k}(j)=e_k$. We illustrate this concept in Example~\ref{ex:out:prob}.



Finally, for $i=0,1,\myldots,6$ and $j=b_{1}\myldots b_{M}\in \{0,1\}^M$, let
\begin{align*}
 & \alpha(i,j):=
  \sum_{e_1,\dots,e_N \in \{0,1\}}  q_{e_1,\dots,e_N} \times \\
 &  \prod_{k = 1}^N (o_{i,k}(j) \PP{E=e_k} + (1 - o_{i,k}(j))(1 - \PP{E=e_k})),
\end{align*}
where 
$E$ is a new binary random variable with $\PP{E=0}=0$ and  $\PP{E=1}=1$.
The crucial property of this encoding is that:
For all $i=0,1,\myldots,6$ and $j=b_{1}\myldots b_{M}\in \{0,1\}^M$, it is true that 
$
  \alpha(i,j) = q_{C_i(j)}.
$

Now, we replace every term $q_{C_i(j)}$ in the Eq.~\eqref{eq:sat:cond:etr:prob} by $\alpha(i,j)$
and get the following final equation in the language $\Lprobpolysum$:
 \begin{align}\label{eq:sat:cond:prob:final}
     \sum_{j=0}^{2^M -1} \Big( &(\alpha(0,j) - \sfrac{1}{8})^2 +
    (\alpha(1,j)+ \alpha(2,j) - \alpha(3,j))^2 \ +\nonumber\\
&\ \    (\alpha(4,j) \cdot  \alpha(5,j) - \alpha(6,j))^2 \Big) \ =  \ 0.
\end{align}
This completes the proof since it is true that the formula~\eqref{eq:sat:cond:etr}
is satisfiable if and only if there exists a model which satisfies  Eq.~\eqref{eq:constr:1}--\eqref{eq:constr:3}
and Eq.~\eqref{eq:sat:cond:prob:final}.
Obviously, the size of the resulting system of equations is polynomial in the size $|C_0|+|C_1|+\myldots +|C_6|$ 
of the input instance and the system can be computed in polynomial time.
\end{proof}

\begin{example}\label{ex:out:prob}
Consider a Boolean circuit $C$ with three input nodes $u_1,u_2,u_3$, internal nodes $v,w$,  
and one output node $\hat u$ as shown in Fig.~\ref{fig:circ:ex}.
The relevant variables used to encode the output bit are:
$X_{u_1,1},X_{u_2,2},X_{u_3,3}$, assigned to the input nodes, as well as $X_{v,1},X_{v,2}$ and 
$X_{w,2},X_{w,3}$ and $X_{\hat{u},1}, X_{\hat{u},2}, X_{\hat{u},3}$, assigned to $v$, to $w$, and $\hat u$,
respectively. 
Then, assuming the constraints expressed in Eq.~\eqref{eq:constr:1}--\eqref{eq:constr:3} are satisfied, 
for any $b_1,b_2,b_3\in \{0,1\}$,  the probability $o(j=b_1b_2b_3) := \  \PP{X_{\hat{u},1}=b_1, X_{\hat{u},2}=b_2, X_{\hat{u},3}=b_3} $ 
can be estimated as follows: 
\begin{align*}
  o(j) & = \PP{X_{\hat{u},1}=b_1, X_{\hat{u},2}=b_2, X_{\hat{u},3}=b_3}  \\
       & = \PP{X_{v,1}=b_1, X_{v,2}=b_2} \cdot  \PP{X_{w,2}=b_2, X_{w,3}=b_3}  \\
       & = \PP{X_{u_1,1}=b_1}\cdot \PP{X_{u_2,2}=b_2}\cdot   \\
       &
      \quad \quad  \big(1 - (1- \PP{X_{u_2,2}=b_2})\cdot (1- \PP{X_{u_3,3}=b_3})) \big)  \\
       & = b_1 \cdot b_2 \cdot (1-(1-b_2) (1-b_3)),
 \end{align*}
which is $0$ if $C(b_1,b_2,b_3)=(b_1\wedge b_2)\wedge (b_2 \vee b_3)=0$, and it is 
equal to $1$ if $C(b_1,b_2,b_3)=1$.
\end{example}

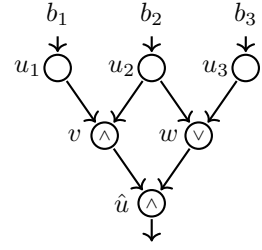
\begin{figure}
\begin{center}
\begin{tikzpicture}[xscale=1.25,yscale=0.9]
\tikzstyle{every node}=[circle, draw=black, thick, fill=white, inner sep=1pt,outer sep=1pt, minimum size=10pt,];
\tikzstyle{every edge}=[draw,->,thick]
\node (u1) at (0,0) {};
  \node[fill=none,draw=none]  (b1) at (0,0.8) {$b_1$};
  \node[xshift=-.4cm,fill=none,draw=none] at (0,0) {$u_1$};
\node (u2) at (1,0) {};
  \node[fill=none,draw=none]  (b2) at (1,0.8) {$b_2$};
  \node[xshift=-.4cm,fill=none,draw=none] at (1,0) {$u_2$};
\node (u3) at (2,0) {};
  \node[fill=none,draw=none]  (b3) at (2,0.8) {$b_3$};
  \node[xshift=-.4cm,fill=none,draw=none] at (2,0) {$u_3$};
\node (v) at (0.5,-1) {\tiny$\wedge$};
  \node[xshift=-.4cm,fill=none,draw=none] at (0.5,-1) {$v$};
\node (w) at (1.5,-1) {\tiny$\vee$};
  \node[xshift=-.4cm,fill=none,draw=none] at (1.5,-1) {$w$};
\node  (o) at (1,-2) {\tiny$\wedge$};
  \node[fill=none,draw=none]  (out) at (1,-2.8) {$ $};
  \node[xshift=-.4cm,fill=none,draw=none] at (1,-2) {$\hat u$};
\draw (u1) edge (v);
\draw (u2) edge (v);
\draw (u2) edge (w);
\draw (u3) edge (w);
\draw (v) edge (o);
\draw (w) edge (o);
\draw (b1) edge (u1);\draw (b2) edge (u2);\draw (b3) edge (u3);
\draw (o) edge (out);
\end{tikzpicture}
\vspace{-6mm}
\end{center}
\caption{An example Boolean circuit with  input nodes $u_1,u_2,u_3$, internal nodes $v,w$,  
and one output node $\hat u$.}\label{fig:circ:ex}
\end{figure}

\section{Discussion}
We have analyzed the complexity of deciding whether a system of (in)equalities
involving probabilistic and causal formulas is satisfiable using standard languages,
allowing formulas with summations. We have shown that the problems are complete 
for a new complexity class $\csuccETR$. Using these results, we could conclude
that the complexity of the validity problem, asking whether a Boolean combination of 
(in)equalities involving pure probabilities and interventional expressions is valid,
is complete for the class ${\tt co}\mbox{-}\csuccETR$.

%

Any $\exists\R$-complete problem can probably be turned into a $\csuccETR$-complete one by encoding the input succinctly. Although we do not know other published $\csuccETR$-complete problems, the situation is similar to the relation of $\ccNP$ and $\ccNEXPTIME$. The succinct versions of almost all $\ccNP$-complete problems are $\ccNEXPTIME$-complete.
We think that $\csuccETR$ is so natural that further complete problems not coming from $\exists\R$-complete ones will be found in the future.

\paragraph{Acknowledgments}
This work was supported by the Deutsche Forschungsgemeinschaft (DFG) grant 471183316 (ZA 1244/1-1).

\bibliographystyle{named}
\bibliography{main} 

\begin{thebibliography}{}

\bibitem[\protect\citeauthoryear{Abadi and
  Halpern}{1994}]{abadi1994decidability}
Martin Abadi and Joseph~Y Halpern.
\newblock Decidability and expressiveness for first-order logics of
  probability.
\newblock {\em Information and computation}, 112(1):1--36, 1994.

\bibitem[\protect\citeauthoryear{Abrahamsen \bgroup \em et al.\egroup
  }{2017}]{abrahamsen2017artTR}
Mikkel Abrahamsen, Anna Adamaszek, and Tillmann Miltzow.
\newblock The art gallery problem is {$\exists\mathbb{R}$}-complete.
\newblock {\em arXiv preprint arXiv:1704.06969}, 2017.

\bibitem[\protect\citeauthoryear{Abrahamsen \bgroup \em et al.\egroup
  }{2018}]{abrahamsen2018art}
Mikkel Abrahamsen, Anna Adamaszek, and Tillmann Miltzow.
\newblock The art gallery problem is {$\exists\mathbb{R}$}-complete.
\newblock In {\em Proc. of the 50th ACM SIGACT Symposium on Theory of
  Computing}, pages 65--73, 2018.

\bibitem[\protect\citeauthoryear{Abrahamsen \bgroup \em et al.\egroup
  }{2021}]{abrahamsen2021training}
Mikkel Abrahamsen, Linda Kleist, and Tillmann Miltzow.
\newblock Training neural networks is $\exists \mathbb{R}$-complete.
\newblock {\em Advances in Neural Information Processing Systems},
  34:18293--18306, 2021.

\bibitem[\protect\citeauthoryear{Achilleos}{2015}]{schoenfinkelLogicBinaryNEXP2015}
Antonis Achilleos.
\newblock {NEXP}-completeness and universal hardness results for justification
  logic.
\newblock In {\em International Computer Science Symposium in Russia}, pages
  27--52. Springer, 2015.

\bibitem[\protect\citeauthoryear{Aleksandrowicz \bgroup \em et al.\egroup
  }{2017}]{aleksandrowicz2017computational}
Gadi Aleksandrowicz, Hana Chockler, Joseph~Y Halpern, and Alexander Ivrii.
\newblock The computational complexity of structure-based causality.
\newblock {\em Journal of Artificial Intelligence Research}, 58:431--451, 2017.

\bibitem[\protect\citeauthoryear{Arora and
  Barak}{2009}]{arora2009computational}
Sanjeev Arora and Boaz Barak.
\newblock {\em Computational complexity: a modern approach}.
\newblock Cambridge University Press, 2009.

\bibitem[\protect\citeauthoryear{Bareinboim \bgroup \em et al.\egroup
  }{2022}]{bareinboim2022pearl}
Elias Bareinboim, Juan~D. Correa, Duligur Ibeling, and Thomas Icard.
\newblock {\em On Pearl’s Hierarchy and the Foundations of Causal Inference},
  pages 507--556.
\newblock Association for Computing Machinery, New York, NY, USA, 2022.

\bibitem[\protect\citeauthoryear{Canny}{1988}]{existentialTheoryOfRealsCanny1988some}
John Canny.
\newblock Some algebraic and geometric computations in {PSPACE}.
\newblock In {\em Proceedings of the twentieth annual ACM symposium on Theory
  of computing}, pages 460--467. ACM, 1988.

\bibitem[\protect\citeauthoryear{Eiter and
  Lukasiewicz}{2002}]{eiter2002complexity}
Thomas Eiter and Thomas Lukasiewicz.
\newblock Complexity results for structure-based causality.
\newblock {\em Artificial Intelligence}, 142(1):53--89, 2002.

\bibitem[\protect\citeauthoryear{Elwert}{2013}]{Elwert2013}
Felix Elwert.
\newblock {\em Graphical Causal Models}, pages 245--273.
\newblock Handbooks of Sociology and Social Research. Springer, 2013.

\bibitem[\protect\citeauthoryear{Fagin \bgroup \em et al.\egroup
  }{1990}]{fagin1990logic}
Ronald Fagin, Joseph~Y Halpern, and Nimrod Megiddo.
\newblock A logic for reasoning about probabilities.
\newblock {\em Information and computation}, 87(1-2):78--128, 1990.

\bibitem[\protect\citeauthoryear{Garey and
  Johnson}{1979}]{michael1979computersNP}
Michael Garey and David Johnson.
\newblock {\em Computers and intractability: a guide to the theory of
  NP-completeness}.
\newblock WH Freeman \& Co., San Francisco, 1979.

\bibitem[\protect\citeauthoryear{Garg \bgroup \em et al.\egroup
  }{2018}]{garg2018r}
Jugal Garg, Ruta Mehta, Vijay~V Vazirani, and Sadra Yazdanbod.
\newblock $\exists \mathbb{R}$-completeness for decision versions of
  multi-player (symmetric) nash equilibria.
\newblock {\em ACM Transactions on Economics and Computation (TEAC)},
  6(1):1--23, 2018.

\bibitem[\protect\citeauthoryear{Glymour \bgroup \em et al.\egroup
  }{2014}]{glymour2014discovering}
Clark Glymour, Richard Scheines, and Peter Spirtes.
\newblock {\em Discovering causal structure: Artificial intelligence,
  philosophy of science, and statistical modeling}.
\newblock Academic Press, 2014.

\bibitem[\protect\citeauthoryear{Grigoriev and
  Vorobjov}{1988}]{grigoriev1988solving}
Dima Grigoriev and Nicolai Vorobjov.
\newblock Solving systems of polynomial inequalities in subexponential time.
\newblock {\em J. Symb. Comput.}, 5(1/2):37--64, 1988.

\bibitem[\protect\citeauthoryear{Halpern}{2000}]{halpern2000axiomatizing}
Joseph~Y Halpern.
\newblock Axiomatizing causal reasoning.
\newblock {\em Journal of Artificial Intelligence Research}, 12:317--337, 2000.

\bibitem[\protect\citeauthoryear{Ibeling and
  Icard}{2020}]{ibeling2020probabilistic}
Duligur Ibeling and Thomas Icard.
\newblock Probabilistic reasoning across the causal hierarchy.
\newblock In {\em The 34th {AAAI} Conference on Artificial Intelligence, {AAAI}
  2020}, pages 10170--10177. {AAAI} Press, 2020.

\bibitem[\protect\citeauthoryear{Koller and
  Friedman}{2009}]{koller2009probabilistic}
Daphne Koller and Nir Friedman.
\newblock {\em Probabilistic graphical models: principles and techniques}.
\newblock MIT press, 2009.

\bibitem[\protect\citeauthoryear{Lewis}{1980}]{schoenfinkelLogicNEXPLewis1980}
Harry~R. Lewis.
\newblock Complexity results for classes of quantificational formulas.
\newblock {\em Journal of Computer and System Sciences}, 21(3):317 -- 353,
  1980.

\bibitem[\protect\citeauthoryear{Miltzow and
  Schmiermann}{2022}]{miltzow2022classifying}
Tillmann Miltzow and Reinier~F Schmiermann.
\newblock On classifying continuous constraint satisfaction problems.
\newblock In {\em 2021 IEEE 62nd Annual Symposium on Foundations of Computer
  Science ({FOCS})}, pages 781--791. IEEE, 2022.

\bibitem[\protect\citeauthoryear{Moss\'{e} \bgroup \em et al.\egroup
  }{2022}]{ibeling2022mosse}
Milan Moss\'{e}, Duligur Ibeling, and Thomas Icard.
\newblock Is causal reasoning harder than probabilistic reasoning?
\newblock {\em The Review of Symbolic Logic}, pages 1 -- 24, 2022.

\bibitem[\protect\citeauthoryear{Papadimitriou}{1994}]{papadimitriou}
Christos Papadimitriou.
\newblock {\em Computational Complexity}.
\newblock Addison Welsey, 1994.

\bibitem[\protect\citeauthoryear{Pearl and Mackenzie}{2018}]{pearl2018book}
Judea Pearl and Dana Mackenzie.
\newblock {\em The book of why: the new science of cause and effect}.
\newblock Basic books, 2018.

\bibitem[\protect\citeauthoryear{Pearl}{1995}]{pearl1995causal}
Judea Pearl.
\newblock Causal diagrams for empirical research.
\newblock {\em Biometrika}, 82(4):669--688, 1995.

\bibitem[\protect\citeauthoryear{Pearl}{2009}]{Pearl2009}
Judea Pearl.
\newblock {\em Causality}.
\newblock Cambridge University Press, 2009.

\bibitem[\protect\citeauthoryear{Schaefer and
  \v{S}tefankovi\v{c}}{2017}]{DBLP:journals/mst/SchaeferS17}
Marcus Schaefer and Daniel \v{S}tefankovi\v{c}.
\newblock Fixed points, {N}ash equilibria, and the existential theory of the
  reals.
\newblock {\em Theory Comput. Syst.}, 60(2):172--193, 2017.

\bibitem[\protect\citeauthoryear{Schaefer}{2009}]{existentialTheoryOfRealsSchaefer2009complexity}
Marcus Schaefer.
\newblock Complexity of some geometric and topological problems.
\newblock In {\em International Symposium on Graph Drawing}, pages 334--344.
  Springer, 2009.

\bibitem[\protect\citeauthoryear{Shpitser and
  Pearl}{2006}]{ShpitserIDCAlgorithm}
Ilya Shpitser and Judea Pearl.
\newblock Identification of conditional interventional distributions.
\newblock In {\em Proceedings of the 22nd Conference on Uncertainty in
  Artificial Intelligence}, pages 437--444. {AUAI} Press, 2006.

\bibitem[\protect\citeauthoryear{Shpitser and
  Pearl}{2008}]{shpitser2008complete}
Ilya Shpitser and Judea Pearl.
\newblock Complete identification methods for the causal hierarchy.
\newblock {\em Journal of Machine Learning Research}, 9(Sep):1941--1979, 2008.

\bibitem[\protect\citeauthoryear{Speranski}{2013}]{speranski2013complexity}
Stanislav~O Speranski.
\newblock Complexity for probability logic with quantifiers over propositions.
\newblock {\em Journal of Logic and Computation}, 23(5):1035--1055, 2013.


\end{thebibliography}


\newpage
\appendix
\onecolumn

\section*{Appendix}

In this part, we present the missing proofs of the results that are included within the main paper. 

\section{Missing Proofs in Section~\ref{sec:sat:problems}}
\subsection{Proof of Theorem~\ref{thm:exist:equal:nexptime:hard}}

Theorem~\ref{thm:exist:equal:nexptime:hard} follows immediately from Theorem~\ref{thm:main:result} and Theorem~\ref{thm:NEXPTIME:sub:succETR:sub:EXPSPACE}.

Nevertheless, here we give an independent proof of Theorem~\ref{thm:exist:equal:nexptime:hard}, which does not depend on those theorems or the class $\csuccETR$.

\def\ccPP{\complexityclass{PP}}
\def\ccSHARPP{\complexityclass{\#P}}

\begin{proof}
We will reduce satisfiability of a Sch\"{o}nfinkel-Bernays sentence to satisfiability of $\SATprobpolysum$. 
The class of Sch\"{o}nfinkel--Bernays  sentences (also called Effectively Propositional Logic, EPR) is a fragment of first-order logic formulas where satisfiability is decidable. Each  sentence in the class is of the form $\exists \bx  \forall \by \psi$ whereby $\psi$ can contain logical operations $\wedge, \vee, \neg$, variables $\bx$ and $\by$, and relations $R_i(\bx,\by)$ which depend on a set of variables, but $\psi$ cannot contain any quantifier or functions. Determining whether a Sch\"{o}nfinkel-Bernays sentence is satisfiable is an $\ccNEXPTIME{}$-complete problem \cite{schoenfinkelLogicNEXPLewis1980} even if all variables are restricted to binary values \cite{schoenfinkelLogicBinaryNEXP2015}.  

\def\consistencyterm#1{f_C\llbracket\compactEquals{#1}\rrbracket}

We will represent Boolean values as the value of the random variables, with $ \pfalse=0 $ meaning \false{} and $ \ptrue=1 $ meaning \true{}.  
We will assume that $c=2$, so that all random variables are binary, i.e. $\mathit{Val} = \{\pfalse,\ptrue\}$. 
One could ensure such a setting by defining  $\PP{X=\ptrue} + \PP{X=\pfalse} = 1$ for each variable $X$ in the model.

We use random variables  
$ \bX=\{X_1,\myldots X_n\}$ and $\bY= \{Y_1,\myldots Y_m\}$ for the  quantified Boolean variables $\bx,\by$
in the sentence $\exists \bx  \forall \by \psi$ and define $n+m$ constraints: 
 $\PP{V=\ptrue} = \sfrac{1}{2}$ for every variable $V\in \bX \cup \bY$. (We use fractions in this proof to improve its readability; they can be removed  by multiplying the entire equation with a sufficiently large integer.)

 Besides  $ \bX$ and $\bY $, there will be identity testing variables~$ \bI $, relations variables~$ \bR $, logical operation variables~$ \bL $, and one final variable $ C $.
We denote all the variables as $\bV$. 
Variable $ C $ is used to verify that the values of all other variables are consistent, i.e., the probability $ \PP{C=\ptrue\mid \bx,\by,\bi,\br,\bl} $ should be $ \sfrac{2}{3} $ if the values are consistent and $ \sfrac{1}{3} $ otherwise. 
As an abbreviation for consistency that, e.g.,~$A=a,B=b,\myldots$,
we will write $\consistencyterm{A=a,B=b,\myldots}$ for 
$$
  \begin{array}{l}
     \sum_\bo \left( \sfrac{1}{3}  -  \PP{C=\ptrue \mid \bo,A=a,B=b,\myldots } \right)^2  
  \end{array}
 $$ 
where $A, B,\myldots$ can be any variables and $\bO=\bV\setminus \{C,A,B,\myldots\}$ are all other variables. 
The purpose of $\consistencyterm{\myldots}$ is to mark some variable values as \emph{inconsistent}.
Moreover, we will use the  abbreviation $f_C(c)$ for 
$$ 
\sum_{\bx,\by,\bi,\br,\bl} \big( \PP{c \mid \bx,\by,\bi,\br,\bl} - \sfrac{1}{3} \big)^2 
   \big( \PP{c \mid \bx,\by,\bi,\br,\bl}- \sfrac{2}{3} \big)^2 .
$$

Identity variable $ I_{i,j} \in \bI$ tests the equality of two variables $ V_i$ and $V_j $ in $\bX\cup \bY$. 
The value of $ I_{i,j} $ is consistent if $ v_i = v_j $ and $ i_{i,j } = \ptrue $, or 
$ v_i \neq v_j $ and $ i_{i,j} = \pfalse $. 
Let $f_{I_{i,j}} $ denote
\begin{align*}
& \consistencyterm{I_{i,j} = \pfalse, V_i = V_j = \ptrue}
+ \consistencyterm{I_{i,j} = \pfalse, V_i = V_j = \pfalse }\  + 
\consistencyterm{I_{i,j} = \ptrue, V_i = \pfalse, V_j = \ptrue }
+ \consistencyterm{I_{i,j} = \ptrue, V_i = \ptrue, V_j = \pfalse }.
 \end{align*}



For an $l$-ary relation $R_i(V_1,\myldots,V_l)$, we define a variable $R_i$.
If $\PP{r_i \mid v_1,\myldots,v_l} = \sfrac{2}{3}$, then it is understood as the value of the relation $R_i(V_1,\myldots,V_l)$ being $r_i$. 
$\PP{r_i \mid v_1,\myldots,v_l}$ is not constrained further than being in $ \{\sfrac{1}{3}, \sfrac{2}{3}\} $, which allows the relation to be chosen freely.
The probability table of variable $ C $ is tied to the probability of $ R_i $, such that it encodes inconsistency for values $r_i, v_1,\myldots,v_l$ when $\PP{r_i \mid v_1,\myldots,v_l}$   is not $ \sfrac{2}{3} $.
That means intuitively, that the existentially chosen probability distribution encodes for all $v_1,\myldots,v_l$ a chosen value $r_i$ and the consistency ensures that the value of $R_i$ is the chosen $r_i$.
Denote by $f_{R_i}(r_i) $ the formula
\begin{align*}
& \sum_\bv \left( \PP{r_i | v_1,\myldots,v_l} - \sfrac{2}{3} \right)^2 \left(  \PP{r_i | v_1,\myldots,v_l} - \sfrac{1}{3} \right)^2 +
\sum_{\bv} \left( \PP{r_i | v_1,\myldots,v_l} - \sfrac{2}{3} \right)^2 \left(  \PP{C=\ptrue | r_i, v_1,\myldots, v_l} - \sfrac{1}{3} \right)^2 .
\end{align*}



However, this construction only works, if this relation  $ R_i $ is always only applied to the same variables in the same order in the sentence $ \psi $. Hence, if the same relation occurs with different parameters 
as, e.g.,  $ R_i(V'_1,\myldots,V'_l) $, we consider this relation as a different relation $ R'_i $ 
and add a consistency constraint that $ v_1=v'_1, \myldots, v_l=v'_l $ implies $ r_i = r'_i $.
Let $ \bI_i= \{I_1,\myldots,I'_l \}$ be the identity testing variables 
for $ V_1, V'_1,\myldots,V_l,V'_l $. 
Then the subformula $ f_{R'_i} $ is defined as $f_{R_i}$ above 
with two additional terms 
$\consistencyterm{\bI_i = {\bf \ptrue}, R_i = \pfalse, R'_i = \ptrue}  
+ \consistencyterm{\bI_i = {\bf \ptrue}, R_i = \ptrue, R'_i = \pfalse}$.

If the inputs $V_1,\myldots,V_l$ and $V'_1,\myldots,V'_l$ to the relation contain some of the same variables in a different order, the consistency constraint would not cover the entire range of inputs. Nevertheless, we can assume that the same variables always occur in the same order by introducing additional variables in the Boolean formula before constructing the probabilistic formula, e.g., replacing $R(a, b) \wedge R(b, a)$ with 
$\forall s,t:  (s=b\wedge t=a) \Rightarrow R(a,b)\wedge R(s,t)$ where $\forall s,t$ and the implication are added immediately after the prefix~$\forall\by$.


Now that we have encoded identities and relations, we need to evaluate the logical operations of the sentence $ \psi $ on these atoms.  The sentence $\psi$ can be considered as a tree with relations/identities as leaves and logical operators as inner nodes. 
Leaf nodes correspond to the relation/identity nodes described above, and for the $i$th inner tree node, we add a logical node labeled by a variable $L_i$. The values of these variables are required to be consistent with the results of the operations in the tree.

Let $ L_i $ be a variable corresponding to a logical operation. If this operation is $ \neg L_j $ (with $ L_j $ corresponding to a variable $ L_j \in \bI\cup\bR\cup\bL $), the constraint $ l_i \neq l_j $ is encoded as
$$
\begin{array}{rcl}
f_{L_i} &\termdef& \consistencyterm{L_i = \ptrue, L_j = \ptrue}
+ \consistencyterm{L_i = \pfalse, L_j = \pfalse } .
\end{array}
$$
If the operation is $ L_j \wedge L_k $,  it is encoded as $f_{L_i} \termdef$
\begin{align*} 
&\consistencyterm{L_i = \pfalse, L_j = \ptrue, L_k = \ptrue } 
+ \consistencyterm{L_i = \ptrue, L_j = \pfalse, L_k = \ptrue } 
+\\[1mm]
&\consistencyterm{L_i = \ptrue, L_j = \ptrue, L_k = \pfalse } 
+ \consistencyterm{L_i = \ptrue, L_j = \ptrue, L_k = \pfalse  } .
\end{align*}

%
%
Other logical operations can be encoded analogously 
or be replaced by operations $ \neg$ and $\wedge $  in a preprocessing step.

Let $ L_q $ be the variable of the last logical node, i.e., 
the root of the tree and the final truth value of $ \psi $
and 
let 
$f_{\bi\br\bl} (x_1,\myldots,x_n, y_1,\myldots,y_m) \termdef$
\begin{align*}
& \sum_{\bi,\br,\bl\setminus l_q} 
\left( 3 \PP{C=\ptrue | x_1,\myldots,x_n, y_1,\myldots,y_m,\bi,\br,\bl\setminus l_q, L_q = \ptrue } -1 \right)^2, 
\end{align*}
and the full term as sum of all constraints $f(x_1,\myldots,x_n) \termdef$
\begin{align} \label{eq:full:term}
&   \sum_c f_C(c) + \sum_{i,j} f_{I_{i,j}} + \sum_{i,r_i} f_{R_i}(r_i) +   \nonumber \\ 
  &
   \sum_i f_{L_i} + (2^{m} - \sum_{\by}f_{\bi\br\bl}(x_1,\myldots,x_{n}, \by) )^2 .
\end{align}

We claim that the given Sch\"{o}nfinkel--Bernays sentence  $\exists \bx  \forall \by \psi$ 
is \true{} if and only if, for the expression $f$, 
there  exists a model $\fM$ and a sequence of values
$x_1,\myldots,x_{n}$ such that 
\begin{equation}\label{eq:model:final}
 f(x_1,\myldots,x_{n}) = 0   \ \wedge \bigwedge_{ V\in \bX \cup \bY}  \PP{V=\ptrue} = \sfrac{1}{2} .
\end{equation}

Now we add random variables to represent this sequence of values in the model: For $\bX=\{ X_1,\myldots, X_n \}$, define new random variables $\hat{X}_1,\myldots, \hat{X}_n$.
We claim that the given Sch\"{o}nfinkel--Bernays sentence  $\exists \bx  \forall \by \psi$ 
is \true{} if and only if   
there  exists a model $\fM$ such that 
\begin{align} \label{eq:model:final}
& \sum_{x_1,\ldots, x_n} \left(\PP{\hat{X_1}=x_1,\myldots, \hat{X_n}=x_n } \right)^2 =1  \ \wedge\nonumber \\[0mm]
& \sum_{x_1,\ldots, x_n}  \left(\PP{\hat{X_1}=x_1,\myldots, \hat{X_n}=x_n } \cdot f(x_1,\myldots,x_n) \right)= 0   \ \wedge \nonumber \\[0mm]
 & \bigwedge_{ V\in \bX \cup \bY}  \PP{V=\ptrue} = \sfrac{1}{2} .
\end{align}
is satisfied by   $\fM$.
Before giving a  proof that the claim is true, let us consider an example to illustrate our construction.
\begin{example}\label{ex:Sch-B}
Consider the following Sch\"{o}nfinkel--Bernays sentence:  
$
    \exists x \forall y\  \neg R(x,y).
$
It is \true{} since, e.g., the relation $x<y$ is a model which satisfies the sentence.
According to our construction, in the reduction, we use the following random variables: $X,Y,R,L,$ and $C$.
In this case, no $\bI$ variables are needed.
A model $\fM=(\cF,P,\bU,\bV)$ which satisfies the formula~\eqref{eq:full:term}, consists of five 
functions $x=u_X$, $y=u_Y$, $r=F_R(x,y,u_R)$, $l=F_L(x,y,r,u_L)$, and $C=F_C(x,y,r,l,u_C)$
and the probability distribution $P$ over $\bU$ such that it gives 
$P(X=\ptrue)=P(Y=\ptrue)=1/2$, and
$$
P(R=r\mid x,y)=\begin{cases}
2/3 & \text{if $r = \ptrue$ and $x<y$} \\
2/3 & \text{if $r = \pfalse$ and $x \ge y$} \\
1/3 & \text{otherwise}
\end{cases}
$$
(we define that $\pfalse < \ptrue$).

Next, if $P(R=r \mid x,y)=2/3$, then  $P(L= \neg r \mid x,y,r)=2/3$ and otherwise 
$P(L= l \mid x,y,r)=1/3$; and finally the conditional probabilities $P(C= c \mid x,y,r,l)$ are shown 
in Fig.~\ref{fig:example:dist:exaple}. It is easy to see that for this model and for $x=\ptrue$, 
the relation~\eqref{eq:model:final} is true.

\begin{figure*}[h]
$$
\begin{array}{cccc|cc}
X& Y & R &L&C=\pfalse& C=\ptrue\\ \hline
\rowcolor{lightgray}
\pfalse& \pfalse &  \pfalse & \ptrue & \sfrac{1}{3} & \sfrac{2}{3}\\
\pfalse& \ptrue &  \pfalse & \ptrue & \sfrac{2}{3} & \sfrac{1}{3}\\
\rowcolor{lightgray}
\ptrue& \pfalse & \pfalse & \ptrue & \sfrac{1}{3} & \sfrac{2}{3}\\
\rowcolor{lightgray}
\ptrue& \ptrue & \pfalse & \ptrue& \sfrac{1}{3} & \sfrac{2}{3}\\ \hline
\pfalse& \pfalse & \ptrue & \pfalse & \sfrac{2}{3} & \sfrac{1}{3}\\
\rowcolor{lightgray}
\pfalse& \ptrue & \ptrue & \pfalse & \sfrac{1}{3} & \sfrac{2}{3}\\
\ptrue& \pfalse& \ptrue & \pfalse & \sfrac{2}{3} & \sfrac{1}{3}\\
\ptrue& \ptrue & \ptrue & \pfalse & \sfrac{2}{3} & \sfrac{1}{3}
\end{array}
\hspace*{6mm}
\begin{array}{cccc|cc}
X& Y & R &L&C=\pfalse& C=\ptrue\\ \hline
\pfalse& \pfalse &  \pfalse & \pfalse & \sfrac{2}{3} & \sfrac{1}{3}\\
\pfalse& \ptrue &  \pfalse & \pfalse & \sfrac{2}{3} & \sfrac{1}{3}\\
\ptrue& \pfalse & \pfalse & \pfalse & \sfrac{2}{3} & \sfrac{1}{3}\\
\ptrue& \ptrue & \pfalse & \pfalse& \sfrac{2}{3} & \sfrac{1}{3}\\ \hline
\pfalse& \pfalse & \ptrue & \ptrue& \sfrac{2}{3} & \sfrac{1}{3}\\
\pfalse& \ptrue & \ptrue & \ptrue & \sfrac{2}{3} & \sfrac{1}{3}\\
\ptrue& \pfalse& \ptrue & \ptrue & \sfrac{2}{3} & \sfrac{1}{3}\\
\ptrue& \ptrue & \ptrue & \ptrue& \sfrac{2}{3} & \sfrac{1}{3}
\end{array}
$$
\caption{Conditional probabilities $P(C= c \mid x,y,r,l)$  satisfying the constraint \eqref{eq:full:term} 
for the reduction from the sentence 
$
    \exists x \forall y\  \neg R(x,y)
$
in Example~\ref{ex:Sch-B}.
In light-gray, we mark rows with  $P(C= c \mid x,y,r,l)=2/3$, i.e.~rows where the 
values of variables $X,Y,R,$ and $L$ consistently encode the relation $R(x,y)$, resp. $\neg R(x,y)$
for  $R(x,y)\equiv x<y$.
}
\label{fig:example:dist:exaple}
\end{figure*}

\end{example}

If $ f(\fx_1,\myldots,\fx_{|\bX|}) = 0 $ for some variable assignment and 
a model $\fM=(\cF,P,\bU,\bV)$, the probability distribution $P$ 
satisfies all consistency constraints and 
$\sum_{\by}  f_{\bi\br\bl}(\fx_1,\myldots,\fx_{|\bX|},\by) = 2^{|\bY|}$. 
For each variable in $ \bI,\bR,\bL $, half the possible assignments are inconsistent, and thus have $ \PP{C=\ptrue \mid   \bx, \by, \bi, \br, \bl } = \sfrac{1}{3} $. 
Hence, $ f_{\bi\br\bl} $ is at most 1. Since the formula encodes the logical operations, 
$ \left( 3 \PP{C=\ptrue \mid   \bx, \by, \bi, \br, \bl \setminus L_q, L_q =\ptrue} -1 \right)^2 $ is only 1, if the evaluation is consistent and the sentence $ \psi $ is \true{}. The sum $\sum_{\by} f_{\bi\br\bl}(\fx_1,\myldots,\fx_{|\bX|},\by)$ is $ 2^{|\bY|} $ if and only if $ \exists \bx \forall \by \psi $ is \true{}.

For the other direction of the reduction, we need to show: if $ \exists \bx \forall \by \psi $ is \true{}, then 
$ f(\fx_1,\myldots,\fx_{|\bX|}) = 0 $ for some variable assignment and 
a model $\fM=(\cF,P,\bU,\bV)$.
Let $ \bx $ be the Boolean values chosen by the existential operator  and $ P $ such that 
$ P(R_i =\ptrue\mid  v_1,\myldots,v_l) = \sfrac{2}{3}$ if relation $ R_i(v_j) $ 
is \true{} and $P(C=\ptrue \mid   \bx, \by, \bi, \br, \bl ) =\sfrac{2}{3} $ 
whenever possible without violating a constraint. Such a $ P $ exists because there are no contradicting constraints, each constraint only forces	 $ P $ to be $ \sfrac{1}{3} $ for some values.
Since $ \psi $ evaluates to \true{} for all $ \by $, there is an evaluation of the logical operators in $ \psi $ that returns \true{}, so setting $ \bI,\bR,\bL $ to the values that correspond to such an evaluation results in $ f_{\bi\br\bl} $ being $ 1 $ for all $ \by $. Then the sum $\sum_{\by}f_{\bi\br\bl}(\bx,\by)$ is $ 2^{|\bY|} $, and thus $ f (\bx) = 0 $.
\end{proof}

The proof shows that the $\ccNEXPTIME$-hardness occurs because the probability distribution can store a large amount of data and the formulas can read all of it. Even though our constructions use only very simple probabilities for primitives, mostly $1/3$ or   $2/3$, and simple formulas 
the $\SATprobpolysum$ problem is $\ccNEXPTIME$-hard. The study of the class $\csuccETR$ shows the increased hardness resulting from probabilities that are real numbers.

\newcommand{\VALprobpolysum}{\mbox{\sc Val}_{\textit{prob}}}
\newcommand{\coVALprobpolysum}{\overline{\mbox{\sc Val}}_{\textit{prob}}}
\newcommand{\coSATprobpolysum}{\overline{\mbox{\sc Sat}}_{\textit{prob}}}
\newcommand{\VALcausalpolysum}{\mbox{\sc Val}_{\textit{\subscript}}}

\subsection{Proof of Corollary~\ref{corr:main}}
\begin{proof}
We show the ${\tt co}\mbox{-}\csuccETR$-completeness of the validity problem for the 
language $\Lprobpolysum$. The proof for $\Lcausalpolysum$ is analogous and we skip it.

Let us denote the language containing the \emph{yes} instances of the validity problem as 
$$
   	\VALprobpolysum = \{\varphi \in \Lprobpolysum : \   \mbox{for all $\fM$  we have $\fM \models \varphi$} \}.
$$ 
Obviously, it is true, that for all $\varphi \in \Lprobpolysum$
$$
	\varphi \in \VALprobpolysum \quad  \Leftrightarrow \quad \neg \varphi \in  \coSATprobpolysum .
$$

We prove first the ${\tt co}\mbox{-}\csuccETR$-hardness of $\VALprobpolysum$.
Let $A$ be any language in ${\tt co}\mbox{-}\csuccETR$. By definition, we have 
$\overline{A}\in \csuccETR$ and,  from Theorem~\ref{thm:main:result}, we know that 
$\overline{A} \lep \SATprobpolysum$. Let $f$ denote the function which transforms
(in polynomial time) an instance $x$ of  $\overline{A}$ to an $\Lprobpolysum$-formula 
$f(x)=\varphi$ such that 
$$
	x\in \overline{A}\quad  \Leftrightarrow \quad \varphi\in  \SATprobpolysum
$$
or equivalently that 
$$
	x \in {A} \quad  \Leftrightarrow \quad
	\varphi \in  \coSATprobpolysum \quad \Leftrightarrow \quad
	\neg\varphi\in  \VALprobpolysum.
$$
Thus,  for the reduction $A \lep \VALprobpolysum$, we can use the transformation 
$x \mapsto \neg f(x)= \neg \varphi$.

To prove  that $\VALprobpolysum\in {\tt co}\mbox{-}\csuccETR$, we need to show that 
the complement  $\coVALprobpolysum$ is in $\csuccETR$.  To this end, we use 
$\SATprobpolysum$ which,  due to Theorem~\ref{thm:main:result}, is complete for 
$\csuccETR$ and conclude that $\coVALprobpolysum\lep \SATprobpolysum$ via 
a straightforward transformation: $ \varphi \mapsto \neg\varphi$. Obviously, we have
$$
  	\varphi\in \coVALprobpolysum \quad  \Leftrightarrow \quad \neg\varphi\in \SATprobpolysum
$$
which completes the proof that $\VALprobpolysum $ is ${\tt co}\mbox{-}\csuccETR$-complete.

Finally, to show the inclusions 
$$
	{\tt co}\mbox{-}\ccNEXPTIME \subseteq {\tt co}\mbox{-}\csuccETR\subseteq \ccEXPSPACE,
$$
assume first $A\in {\tt co}\mbox{-}\ccNEXPTIME$. Then, by definition, $\overline{A}\in \ccNEXPTIME$ and, due to Theorem~\ref{thm:NEXPTIME:sub:succETR:sub:EXPSPACE}, we get $\overline{A}\in \csuccETR$
which means $A\in {\tt co}\mbox{-}\csuccETR$. If $B\in {\tt co}\mbox{-}\csuccETR$, then by definition 
$\overline{B}\in \csuccETR\subseteq \ccEXPSPACE$, and since the last class is closed under the complement,
we conclude $B\in \ccEXPSPACE$.
\end{proof}

\section{Missing Proofs in Section~\ref{sec:succETR}}
\subsection{Proof of Theorem~\ref{thm:NEXPTIME:sub:succETR:sub:EXPSPACE}}

\begin{proof}
The succinct version of the satisfiability problem is
known to be $\ccNEXPTIME$-complete. A (Boolean) satisfiability instance
can be easily embedded into an $\ETR$ instance. We only need to constrain the real
variables to be $\{0,1\}$-valued. This is achieved by adding the equations $x_i(x_i-1) = 0$.
This embedding is local. Therefore, given the circuit of a succinct satisfiability instance,
we can transform it into a circuit of an equivalent ETR instance.

The upper bound follows from the inclusion $\cETR \subseteq \ccPSPACE$.
Given the circuit $C$, we can construct the formula explicitly,
which has an exponential size in the size of $C$. Running the $\ccPSPACE$
algorithm on this instance yields an $\ccEXPSPACE$ algorithm.
\end{proof}

\section{Missing Proofs in Section~\ref{sec:first:compl:problems}}

\subsection{Proof of Lemma~\ref{lem:succQUAD-hard}}

\begin{proof}
\citeauthor{DBLP:journals/mst/SchaeferS17}~[\citeyear{DBLP:journals/mst/SchaeferS17}]
give a similar reduction for $\ETR \lep \QUAD$ based on a trick by Tseitin.
Our reduction follows their construction, but we have to deal with the succinctness.

Let $C$ be an instance of $\succETR$, assume that the variables are $x_i$
with $i \in \{0,1\}^K$ and the nodes of the formulas are given by bit strings of
length $N$. The quadratic polynomials are polynomials in the variables $x_i$ 
and additional variables $y_j$, $j \in \{0,1\}^{N+O(1)}$.
(Technically, the variables in the $\succQUAD$ instance are all given by bit strings
and the first bit will decide whether it is an $x$- or a $y$-variable.)

Our reduction will have the following property:
Whenever $\xi \in \Rset^{\{0,1\}^K}$ satisfies the formula given by $C$,
then there is an $\eta \in \Rset^{\{0,1\}^{N + O(1)}}$  
such that $(\xi,\eta)$ is a root of all polynomials of the $\succQUAD$ instance.
And on the other hand, whenever 
$(\xi,\eta)$ is a root of all polynomials of the $\succQUAD$ instance,
then $\xi$ will be a satisfying assignment of the formula given by $C$.
Our reduction will even ensure that each quadratic polynomial
has at most four monomials.

Given the circuit $C$, we construct a circuit $D$ that encodes
a list of quadratic polynomials.
We will construct one or two 
quadratic polynomials for each node $v$
of the encoded formula. Since we need to construct a circuit $D$,
we want the structure to be as regular as possible.
So we will always add two polynomials, the second one being potentially
the zero polynomial.

For each node, we introduce a constant number
of new variables $y_v$, $y_v'$, and maybe $y_v''$. The latter two variables
are only for intermediate calculations. (Formally, we index these three variables
with strings $v00$, $v01$, and $v10$, for instance.) 
Let $v_1$ and $v_2$ be the children of $v$ and $v_0$ its parent.

The reduction will have the following property: In any assignment $(\xi,\eta)$
that is a root of all quadratic polynomials, the value assigned to $y_v$
by $\eta$ will correspond to the value of the node in the given formula
when we assign the values $\xi$ to the $x$-variables.
For the Boolean part of the formula, this means that if the value assigned to $y_v$
is zero, then the subformula rooted at $v$ is true. The reverse statement needs not to be
true, however, since the Boolean part of the formula is monotone, this
does not matter. (This is the part in which we need that the negations are pushed down,
cf.\ Remark \ref{remark:pushdown}).

The input of the circuit $D$ is of the form $(vb,m)$:
$v$ is a bit string of length $K$ and $b$ is a single bit. 
This means that we want the first or second polynomial of the two polynomials
belonging to node $v$. The bit string $m$ addresses the monomial
of the polynomial given by $vb$.
Since all our polynomials have at most four monomials,
$m$ will only have two bits. 

The circuit $D$ 
first checks whether the labeling of the parent $v_0$ is consistent with the labeling
of $v$, that is, if $v$ is labeled with an arithmetic operation,
then $v_0$ is labeled with an arithmetic operation or comparison.
If $v$ is labeled with a comparison, then $v_0$ is labeled with a Boolean operation.
And if $v$ is labeled with a Boolean operation, so is $v_0$.
If this is not the case, then the quadratic polynomial associated with 
$v$ will be $1$. (In this case, $C$ does not encode a valid $\ETR$ instance. 
A non-valid encoding is a no-instance, so $D$ should encode
a no-instance.) 

If a node $v$ is labeled with a constant $c$, then the polynomial is $y_v - c$.
(In $\succETR$, we only start with the constants $0$ and $1$, so $c$ is either $0$
or $1$. The proof would however also work, if we allow more constants in $\succETR$
as syntactic sugar.)
If $v$ is labeled with a variable $x_i$, then the polynomial is $y_v - x_i$.

If the node $v$ is labeled with an arithmetic operation $\circ$, then the 
polynomial will be $y_v - y_{v_1} \circ y_{v_2}$. 

If the label of $v$ is $=$, then the polynomial is $y_v - (y_{v_1} - y_{v_2})$.
If the label of $v$ is $<$, then we have two polynomials,
$y_v' - (y_v'')^2$ and $(y_{v_2} - y_{v_1}) y_v' - (1 - y_v)$. 
When both polynomials are zero and $y_v$ is assigned zero, then
since $y_v' \ge 0$, $y_{v_2} > y_{v_1}$. (The reverse might not be true
and cannot be achieved in general with polynomials, since
the characteristic function of $\ge$ has an infinite number of zeroes.)

If $v$ is labeled with $\vee$,
then the polynomial is $y_v - y_{v_1} y_{v_2}$. If $v$ is labeled
with $\wedge$, then the polynomial is $y_v - (y_{v_1}^2 + y_{v_2}^2)$.

Finally, if $v$ is the root, then we also add the polynomial $y_v$.
This will ensure that any common root of the quadratic polynomials
yields a satisfying  assignment.

If we have a satisfying assignment of the $\ETR$ instance, then it is obvious
that we can extend it to a common root $(\xi,\eta)$ of the quadratic system
such that $\eta$ corresponds to the values of intermediate nodes in the formula.

On the other hand, if we have a common root $(\xi,\eta)$ of the quadratic system,
then $\xi$ is a satisfying assignment. The value of $\eta$ corresponding to a $<$-node
could be nonzero although the Boolean value in the formula is true.   
However, since the Boolean part of the formula is monotone, the result
will be true if we flip the value to the right one.

Given $C$, we can construct the circuit $D$ in polynomial time (using $C$
as a subroutine).
\end{proof}

\subsection{Proof of Lemma~\ref{lemma:seq:of:reductions} and Theorem~\ref{thm:three:complete:problems}}

In this section, we show the reductions:
	$$
	\succQUAD \lep \sigmaETR 
	\lep \succETR .
	$$
To this purpose, we define two new problems: $\succFEASfour$ and $\sigmapiETR$
and show in Lemmas~\ref{lem:succ4-feas-hard} to \ref{lem:sigmapiETR}
below, the problems are reducible to each other in a ring:
	$$
	\succQUAD  \lep 
	\succFEASfour \lep 
	\sigmaETR \lep 
	\sigmapiETR \lep 
	\succETR
	$$
which, together with Lemma~\ref{lem:succQUAD-hard} showing that 
$\succETR \lep \succQUAD$, completes the proof of  Lemma~\ref{lemma:seq:of:reductions} and  Theorem~\ref{thm:three:complete:problems}.

We start with $\FEASfour$ which is the following problem: Given a multivariate polynomial $p$ 
of degree at most four with rational coefficients, has it a root? 
$\FEASfour$ is a complete problem for the class $\cETR$ \cite{DBLP:journals/mst/SchaeferS17}.
The succinct version $\succFEASfour$ 
is defined as follows: For $\succFEASfour$, we get
a Boolean circuit $C$ as input computing a function 
$\{0,1\}^N \to \{0,1\}^M$. On input $x$ (which we interpret as a number in binary),
$C(x)$ outputs a description of the $x$th monomial of $p$, that is, it outputs
the coefficient as a quotient of two integers given
in binary as well as the names of the four variables occurring in the monomial
(repetitions are allowed and if the degree is lower than four, then we output $1$
instead of a variable).

\begin{lemma}\label{lem:succ4-feas-hard}
$\succQUAD \lep \succFEASfour$.
\end{lemma}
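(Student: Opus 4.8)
The plan is to adapt the classical sum-of-squares reduction $\QUAD \lep \FEASfour$ to the succinct setting. In the non-succinct case one replaces a family $q_1,\dots,q_s$ of quadratic polynomials by the single polynomial $p = \sum_i q_i^2$ of degree at most four; since $p$ is a sum of squares, $p$ has a real root if and only if the $q_i$ have a common real root. The task is therefore to build, from the circuit $C$ of a $\succQUAD$ instance, a polynomial-size circuit describing such a $p$.

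First I would set up notation. Let $C\colon \{0,1\}^K \times \{0,1\}^N \to \{0,1\}^M$ be the given instance, so that the $x$th quadratic polynomial is $q_x = \sum_{y \in \{0,1\}^N} m_{x,y}$, where $m_{x,y}$ is the monomial encoded by $C(x,y)$: a rational coefficient together with a list of at most two variable names (padded with the constant $1$ when the degree is smaller). Expanding the square gives $q_x^2 = \sum_{y,y' \in \{0,1\}^N} m_{x,y}\, m_{x,y'}$, and the product of two monomials of degree at most two is again a monomial, of degree at most four, whose coefficient is the product of the two coefficients and whose variable list is the concatenation of the two lists. Hence
\[
   p \;=\; \sum_{x \in \{0,1\}^K} q_x^2 \;=\; \sum_{(x,y,y') \in \{0,1\}^{K+2N}} m_{x,y}\, m_{x,y'}
\]
is a degree-four polynomial presented as a list of $2^{K+2N}$ monomials indexed by triples $(x,y,y')$.

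Next I would describe the circuit $D$ for the $\succFEASfour$ instance. On input (a binary encoding of) a triple $(x,y,y')$, $D$ invokes $C(x,y)$ and $C(x,y')$ as subroutines, extracts the two coefficients $a/b$ and $a'/b'$ and the two variable lists, and outputs the monomial with coefficient $(aa')/(bb')$ and variable list the concatenation of the two lists --- using the ``$1$ instead of a variable'' convention to pad to four slots, and outputting a zero monomial if either call reports an unused position. Since $C$ is called twice and the remaining work is multiplication of binary integers and concatenation of bit strings, $D$ has size polynomial in $|C|$ and is produced in polynomial time. Correctness is immediate: $p(\xi) = \sum_x q_x(\xi)^2 \ge 0$ for every real assignment $\xi$, with equality exactly when all $q_x(\xi) = 0$; so $D$ is a yes-instance of $\succFEASfour$ iff $C$ is a yes-instance of $\succQUAD$.

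I do not anticipate a real obstacle: the mathematical content is the one-line sum-of-squares identity, and the only care needed is the routine bookkeeping of monomial encodings --- choosing the index lengths so that triples $(x,y,y')$ fit the circuit's input format, and checking that the coefficient arithmetic and variable-name concatenation stay polynomial. The one point that could require a short extra remark is the treatment of repeated monomials in the output list; but, exactly as in the definition of $\succQUAD$, repeated monomials simply cause their coefficients to add up, so under that convention this is a non-issue.
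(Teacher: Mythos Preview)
Your proposal is correct and follows exactly the paper's approach: both use the sum-of-squares identity $p=\sum_x q_x^2$ and build the output circuit $D$ so that on input $(x,y,y')\in\{0,1\}^{K+2N}$ it returns the product of the monomials $C(x,y)$ and $C(x,y')$. Your write-up is in fact more detailed than the paper's (you spell out the coefficient arithmetic, the padding convention, and the repeated-monomial issue), but the underlying reduction is identical.
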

\begin{proof}
The basic idea is very simple, a common root of polynomials $p_1,\dots,p_\ell$ over 
$\Rset$ is a root of $p_1^2 + \dots + p_\ell^2$ and vice versa.

Let $C$ be a circuit representing a $\succQUAD$ instance,
computing a function $\{0,1\}^K \times \{0,1\}^N \to \{0,1\}^M$.
The circuit $D$ representing the sum of squares computes
a function $\{0,1\}^{K + 2N} \to \{0,1\}^{M'}$. 
On input $(k,u,v)$, $D$ outputs the products of the monomials $C(k,u)$ and 
$C(k,v)$.
\end{proof}

\begin{lemma}\label{lem:succFEASfour:to:sigmaETR}
$\succFEASfour \lep \sigmaETR$.
\end{lemma}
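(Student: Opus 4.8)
The plan is to reduce an instance of $\succFEASfour$, given as a Boolean circuit $C$ computing a function $\{0,1\}^N\to\{0,1\}^M$ whose value $C(x)$ encodes the $x$th monomial of a degree-$\le 4$ polynomial $p$ (the coefficient as a quotient of two binary integers, together with four variable indices), to an instance of $\sigmaETR$. The key observation is that $\sigmaETR$ has an explicit summation operator, so an exponentially long sum of monomials can be written down as a single term of polynomial size, and that the variable-indexing feature $x_{n(x_{j_1},\dots,x_{j_m})}$ lets us access the variable whose index is computed from summation variables. The target $\sigmaETR$ sentence will be of the form $\exists x_1,\dots,x_{2^n}\ \sum_{x=0}^{2^N-1} (\text{monomial}_x) = 0$, where $\text{monomial}_x$ is an arithmetic term that, when $x$ is bound by the summation operator, evaluates to the $x$th monomial of $p$ as determined by $C$.

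First I would arithmetize the circuit $C$: introduce $N$ binary-valued summation variables $e_1,\dots,e_N$ (summing each over $\{0,1\}$) so that $(e_1,\dots,e_N)$ ranges over all monomial indices, and build a polynomial-size arithmetic term $\widehat C$ over these $e_i$ simulating $C$ gate by gate — a $\neg$-gate with input $t$ becomes $1-t$, an $\wedge$-gate with inputs $t,t'$ becomes $tt'$, an $\vee$-gate becomes $1-(1-t)(1-t')$; since $e_i\in\{0,1\}$ these terms take Boolean values. From $\widehat C$ we can extract, by selecting the appropriate output bits, a term $\mathrm{coef}(e_1,\dots,e_N)$ for the coefficient's numerator and denominator (here I would move the division to the other side, i.e.\ multiply the whole equation through so only the numerators appear, or keep divisions if $\sigmaETR$ permits them as discussed for the language; to be safe I would clear denominators by a standard trick since coefficients are rational with binary-encoded parts) and terms giving the four variable-index bit-strings. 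The crucial step is then: each of the four variable factors of the $x$th monomial is the $\sigmaETR$ variable $x_{n(\cdot)}$ whose index is the bit-string of length $n$ output by $C$ for that slot; since that bit-string is itself a function of $e_1,\dots,e_N$, I would use the indexing construct $x_{n(f_1,\dots,f_n)}$ where each $f_\ell$ is the Boolean-valued arithmetic term for the $\ell$th output bit — but the indexing rule in the excerpt requires the index entries to be summation variables, so I would introduce auxiliary $\{0,1\}$-ranged summation variables $d_1,\dots,d_n$ pinned to the bit values via equality constraints $\sum_{d_\ell=0}^1 (d_\ell - f_\ell)^2\cdot(\cdots)$ inside the term, or more cleanly use a double sum $\sum_{\mathbf d}$ with an indicator factor $\prod_\ell (d_\ell f_\ell + (1-d_\ell)(1-f_\ell))$ that is $1$ exactly when $\mathbf d$ equals the output bit-string, multiplied by $x_{n(\mathbf d)}$ — exactly the $\alpha(i,j)$-style encoding trick used later in Proposition~\ref{prop:red:to:satprob}. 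Assembling: the $x$th monomial term is $\mathrm{coef}(e)\cdot\prod_{s=1}^4\big(\sum_{\mathbf d^{(s)}} (\text{indicator that }\mathbf d^{(s)}=\text{$s$th index bits})\cdot x_{n(\mathbf d^{(s)})}\big)$, and the whole sentence is $\exists x_1,\dots,x_{2^n}\ \sum_{e_1=0}^1\cdots\sum_{e_N=0}^1 (\text{that term}) = 0$.

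For correctness I would argue both directions: an assignment $\xi$ to the real variables is a root of $p$ iff the big sum evaluates to $0$, because the summand at index $e$ literally equals the $e$th monomial of $p$ evaluated at $\xi$ (the indicator products collapse the inner sums to pick exactly the right variables, and $\mathrm{coef}$ reproduces the rational coefficient), and summing over all $e$ reconstructs $p(\xi)$; the existential quantifiers over $x_1,\dots,x_{2^n}$ match the free variables of $p$ (padding with unused variables up to the next power of two). I would also note the construction is computable in polynomial time from $C$, since all the arithmetized-circuit terms have size linear in $|C|$ and there are only a constant number (four) of index slots.

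The main obstacle I anticipate is the variable-indexing mechanics: the $\sigmaETR$ syntax only allows $x_{n(x_{j_1},\dots,x_{j_m})}$ where the $x_{j_\ell}$ are variables currently captured by $\{0,1\}$-summation operators, so I cannot directly feed an arithmetic term (a circuit output bit) as an index. Circumventing this cleanly — by introducing fresh $\{0,1\}$-ranged summation variables together with indicator-product gadgets that force them to equal the computed bits, so that the offending $x_{n(\cdot)}$ is applied only to genuine summation variables — is the technical heart of the reduction, and getting the scoping of the nested summations right (inner index-sums nested inside the outer monomial-index sum, with the indicator correctly referencing both the $e_i$'s via $\widehat C$ and the fresh $\mathbf d$'s) is where care is needed; everything else is a routine gate-by-gate arithmetization.
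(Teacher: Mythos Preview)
Your proposal is correct and takes essentially the same approach as the paper: arithmetize the circuit, sum over all monomial indices with an outer $\sum_{e}$, and for each variable slot use an inner sum over fresh $\{0,1\}$-summation variables together with the indicator product $\prod_\ell \big(d_\ell f_\ell + (1-d_\ell)(1-f_\ell)\big)$ to pick out the variable whose index equals the circuit's output---this is exactly the paper's ``variable selector'' $\sum_{e_1,\dots,e_N} x_{n(e_1,\dots,e_N)}\prod_k (o_{i,k}(j)e_k+(1-o_{i,k}(j))(1-e_k))$. The one detail the paper adds that you do not mention explicitly is that the arithmetized circuit $\widehat C$ is a circuit (DAG), not a formula, so a final application of Tseitin's trick (introducing a new real variable per gate) is used to turn the $o_{i,k}$'s into genuine $\sigmaETR$ terms of polynomial size.
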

\begin{proof}
We can think of having five circuits $C_0,\dots,C_4: \{0,1\}^M \to \{0,1\}^N$. 
$C_0(j)$ gives the coefficients of the $j$th monomial
and $C_1(j),\dots,C_4(j)$ are the indices of the four variables (repetitions
are allowed and the variables can be~$1$ if the degree of the monomial
is smaller). Let $\hat C_0,\dots \hat C_4$ be the arithmetizations of 
$C_0,\dots,C_4$, that is, we consider the Boolean variables
as variables over $\Rset$ and replace $\neg x$ by $1 - x$ and $x \wedge y$
by $x \cdot y$. 
Let $o_{i,k}(j)$ be the $k$th output bit of $\hat C_i$ on input $j$.  
The expression
\[
\sum_{e_1=0}^1\ldots\sum_{e_N=0}^1 
x_{n(e_1,\dots,e_N)}
   \prod_{k = 1}^N (o_{i,k}(j) e_k + (1 - o_{i,k}(j))(1 - e_k))
\]
acts as a ``variable selector'', it is exactly the variable given by $C_i(j)$, with $i=1,\ldots,4$.
Note that the inner product is not an exponential one.
In the same way, we can build a ``constant selector'' of $C_0$ by
replacing $x_{n(e_1,\dots,e_N)}$ by $\sum_{s = 1}^N 2^s e_s$. 
(It is enough to consider integer constants). 
Now we get an expression for the $\succFEASfour$ by summing over all $j$
and multiplying the constant selector with the four variable selectors.

The output is almost an $\sigmaETR$ instance, except that we use arithmetic
circuits for the computation of the $o_{i,k}(j)$. 
They can be replaced by formulas using
Tseitin's trick and introducing a new variable for every gate.
\end{proof}

$\sigmapiETR$ is defined in a similar way as $\sigmaETR$. 
Besides exponential sums, we allow exponential products, too.

\begin{lemma}\label{lem:sigmaETR:tosigmapiETR}
	$\sigmaETR \lep \sigmapiETR$.
\end{lemma}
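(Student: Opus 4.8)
The plan is to observe that this reduction is the identity map, so that essentially nothing needs to be done. The problem $\sigmapiETR$ is defined exactly like $\sigmaETR$ except that its signature additionally contains the exponential product operator $\prod$; in particular, every arithmetic term and every sentence that is syntactically well formed for $\sigmaETR$ is also syntactically well formed for $\sigmapiETR$, so $\sigmaETR$ is simply a sub-problem of $\sigmapiETR$. First I would note that the grammar of $\sigmapiETR$ contains the grammar of $\sigmaETR$ as a sub-grammar, so that the input requires no transformation at all. Then I would verify that the two semantics agree on this common fragment: the value of a term and the truth value of a sentence are given by the same recursion in both problems, and a $\sigmaETR$-sentence contains no $\prod$-node, so its evaluation under the $\sigmapiETR$ semantics is the same as under the $\sigmaETR$ semantics. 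Consequently $\varphi \in \sigmaETR$ if and only if $\varphi \in \sigmapiETR$ for every $\sigmaETR$-instance $\varphi$, and the map $\varphi \mapsto \varphi$ is a polynomial-time (in fact trivial) many-one reduction.

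The only subtlety worth a sentence is the side condition on the variable-indexing construct $x_{n(x_{j_1},\dots,x_{j_m})}$: in $\sigmaETR$ this is licensed only when $x_{j_1},\dots,x_{j_m}$ are bound by summation operators of range $\{0,1\}$. Since $\sigmapiETR$ is defined in a similar way, its licensing condition is the natural relaxation that allows either summation or product binders of range $\{0,1\}$, and in particular it still permits the summation-bound case; hence no $\sigmaETR$-instance becomes ill formed under the embedding. I expect no genuine obstacle here: this lemma serves only to spell out the link $\sigmaETR \lep \sigmapiETR$ in the chain $\succQUAD \lep \succFEASfour \lep \sigmaETR \lep \sigmapiETR \lep \succETR$, postponing the actual technical work — collapsing exponential sums and products into a single succinct Boolean circuit via Tseitin-style gate variables — to the subsequent reduction $\sigmapiETR \lep \succETR$.
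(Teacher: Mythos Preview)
Your proposal is correct and matches the paper's approach: the paper itself simply states that the proof is obvious, and your identity-map argument is precisely the intended triviality, since $\sigmapiETR$ extends $\sigmaETR$ by an additional product operator and hence every $\sigmaETR$ instance is already a well-formed $\sigmapiETR$ instance with the same semantics.
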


The proof of this lemma is obvious.

\begin{lemma} \label{lem:sigmapiETR}
	$\sigmapiETR \lep \succETR$.
\end{lemma}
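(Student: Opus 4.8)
The plan is to read a $\sigmapiETR$ instance $\Phi$ — an explicitly given formula tree of polynomial size — as a \emph{succinct description} of the ordinary $\ETR$ formula $\Phi'$ obtained by unfolding every $\sum$- and $\prod$-operator. The unfolded formula $\Phi'$ has exponential size but is highly regular, so it can be described by a polynomial-size Boolean circuit $C$, which is exactly what a $\succETR$ instance is. As a (cheap) preprocessing step, since $\Phi$ is explicit and polynomial-size, I would first normalize it so that it meets the conventions attached to $\succETR$: push all negations to the bottom by De Morgan together with $\neg(s<t)\to t\le s$ and $\neg(s=t)\to (s<t)\vee(t<s)$, and eliminate $\le$ via $s\le t\to (s<t)\vee(s=t)$; since $<$- and $=$-nodes cannot be nested these rewrites do not compound and $|\Phi|$ stays polynomial. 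This resolves the issue flagged in Remark~\ref{remark:pushdown}.

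Next I would fix the addressing scheme for the nodes of $\Phi'$. Conceptually, expand each $\sum_{x_j=a}^{b}$ (resp.\ $\prod_{x_j=a}^{b}$) node $v$ of $\Phi$ into a balanced binary tree of $+$- (resp.\ $\times$-) gates with $2^{\lceil\log_2(b-a+1)\rceil}$ leaves, where leaf $\ell$ is the subtree $t[(a+\ell)/x_j]$ for $\ell\le b-a$ and the constant $0$ (resp.\ $1$) otherwise. A node of $\Phi'$ is then specified by: a node $v$ of $\Phi$ ($O(\log|\Phi|)$ bits); an \emph{instantiation} $\sigma$ giving, for every $\sum/\prod$-operator of $\Phi$ that is an ancestor of $v$, the value currently bound to its summation variable (an element of its range, hence polynomially many bits, and a single bit for an index-type operator whose range is $\{0,1\}$); and, when $v$ itself is a $\sum/\prod$-node, a \emph{position} $p$ inside its expanded binary tree, ranging over the internal nodes and the padding leaves only (a genuine leaf is instead addressed directly as the root-copy of $v$'s $\Phi$-child under the extended instantiation). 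Reserving one fixed-width slot per $\sum/\prod$-operator of $\Phi$ and demanding that slots of non-ancestors be zero makes the encoding canonical; the total width $N$ is polynomial in $|\Phi|$ and the bit-lengths of the range bounds, and $2^N$ exceeds the number of variables used by $\Phi'$ (an index $n(e_1,\dots,e_m)$ has only $m\le|\Phi|$ bits, and the few free variables of $\Phi$ can be placed in a disjoint range of indices).

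Then I would build $C$: on input the encoding of a node $w$ of $\Phi'$, parse it as $(v,\sigma,p)$ and output $w$'s label together with its parent and two children, all computable in polynomial time from the hard-wired description of the normalized $\Phi$. If $v$ is a constant or a free-variable leaf, copy the label; if $v$ is an index-type leaf $x_{n(x_{j_1},\dots,x_{j_m})}$, read $\sigma(e_{j_1}),\dots,\sigma(e_{j_m})$, interpret them as a binary index $\iota$, and label $w$ with $x_\iota$; if $v$ is an ordinary arithmetic/comparison/Boolean gate, copy its label and hand $\sigma$ to the children unchanged; if $v$ is a $\sum$- (resp.\ $\prod$-) gate with $p$ an internal position, the label is $+$ (resp.\ $\times$) and the children are the two sub-positions, a genuine-leaf child being reconnected to $(v',\sigma\cup\{e_v\mapsto a+\ell\})$; if $p$ is a padding leaf, the label is the constant $0$ (resp.\ $1$); parents are obtained by the inverse bookkeeping. $C$ also checks local well-formedness (consistency of a node's label with its parent's label, in-range slot values, zero non-ancestor slots) exactly as in the proof of Lemma~\ref{lem:succQUAD-hard}, marking any violating string as an unused node. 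It is then immediate that $C$ has polynomial size, is constructible from $\Phi$ in polynomial time, and encodes a true $\ETR$ sentence if and only if $\Phi\in\sigmapiETR$, since $\Phi'$ has exactly the same variables and the same real satisfying assignments as $\Phi$. I expect the only genuine work to be precisely this addressing-and-pointer bookkeeping — nested operators, non-power-of-two ranges with their padding constants, and the genuine-leaf/subtree-root identification — rather than anything conceptually deep.
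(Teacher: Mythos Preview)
Your approach is correct but genuinely different from the paper's. The paper uses Tseitin's trick: for an operator such as $\prod_{e_1=0}^1\cdots\prod_{e_N=0}^1 F(e_1,\dots,e_N)$ it introduces \emph{new existential variables} $y_z$ for every $z\in\{0,1\}^{\le N}$ and replaces the product by the regular family of equations $y_z = y_{z0}\cdot y_{z1}$ (internal nodes) and $y_z = F(z)$ (leaves), with the analogous thing for sums. The resulting $\ETR$ instance is a flat conjunction of equations of one of two fixed shapes, so the circuit describing it is almost trivial to write down. By contrast, you keep the variable set of $\Phi$ unchanged and directly encode the node set of the unfolded tree $\Phi'$ via the $(v,\sigma,p)$ addressing scheme, letting the circuit compute labels and neighbors from the hard-wired description of $\Phi$.

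What each buys: the paper's route avoids essentially all of your bookkeeping (nested scopes, non-power-of-two ranges, the genuine-leaf/subtree-root identification, parent pointers crossing an operator boundary) at the price of blowing up the number of quantified variables exponentially---harmless here, since $\succETR$ allows $2^N$ variables anyway. Your route is more elementary in that it needs no auxiliary variables and makes the equivalence $\Phi\leftrightarrow\Phi'$ literal (same variables, same satisfying assignments), but you pay for it with exactly the pointer bookkeeping you flagged. Both are valid polynomial-time reductions.
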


\begin{proof}
The idea is again to use Tseitin's trick: 
If we have, for instance, an exponential product
$\prod_{e_1=0}^1\dots\prod_{e_N=0}^1  F(e_1,\dots,e_N)$,
we create new variables $y_z$, $z \in \{0,1\}^{\le N}$.
($F$ might depend on more, free variables, but we suppress them here
for a simpler presentation.)
The product can be replaced by the equations
$y_z = F(z_1,\dots,z_n)$ for $z \in \{0,1\}^N$
and $y_z = y_{z0} y_{z1}$ for $z \in \{0,1\}^{< N}$.
In the same way, we get an exponential sum by 
taking the equations $y_z = y_{z0} + y_{z1}$ for $z \in \{0,1\}^{< N}$
instead. Since these equations are very regular, we can easily design
a circuit representing them.
\end{proof}

\section{Missing Proofs in Section~\ref{sec:completeness}}

We start with the following:
\begin{remark} \label{remark:extended-sigma-ETR}
In the definition of $\sigmaETR$ and $\sigmapiETR$, we allow 
variable indexing of the form $x_{n(x_{j_1},\dots,x_{j_m})}$
where $n(x_{j_1},\dots,x_{j_m})$ is the number given by the binary input string.
We can think of more complicated indexing functions,
namely $x_{g(x_{j_1},\dots,x_{j_m})}$ where $g$ is any Boolean function computed
by a circuit $D$ of polynomial size, which is part of the input. 
We can have several such index functions and corresponding 
circuits, more precisely, polynomially many.
The proof of the upper bound in Lemma~\ref{lem:sigmapiETR}
still remains valid, since at the leaf $y_z$ in the proof, we can evaluate the
circuit for $g$ to compute the index of the variables occurring in $F(z_1,\dots,z_n)$.
\end{remark}

This allows us to index variables in a more convenient and clear way
as we will use in the proof below.

\subsection{Proof of Proposition~\ref{prop:SAT:causal:in:sigmaETR}}

\begin{proof}
To prove that $\SATcausalpolysum \lep \sigmaETR$, 
suppose $\varphi \in \Lcausalpolysum$. 
We construct an instance $\psi$ of $\sigmaETR$
such that $\varphi$ is satisfied if and only if 
the sentence  $\psi$ is true.

Let $B$ be a Boolean formula over $\wedge, \vee, \neg$ and arithmetic terms 
$t_1,\ldots,t_k$ of the form 
$s_i < s'_i, s_i \le s'_i, s_i = s'_i$, where $s_i, s'_i$
are polynomials over probabilities 
such that 
$\varphi=B(t_1,\ldots,t_k)$.  W.l.o.g., we will assume that there is 
no negation in $B$:
one can note that eliminating the negations can be done by pushing them to the bottom
 of the formula via De Morgan’s laws and replacing $\neg(t\le t')$ by $t > t'$, etc.

Denote, by $\Ext(s)$, the extension of a polynomial $s$ in which all sub-expressions 
of the form $\sum_{\ell} e(\ell)$ are replaced by the sum 
$e(0)+e(1)+\ldots +e(\maxvaluecount - 1)$ and let  
$\Ext_{\varphi} = \bigcup_{i=1}^k (\Ext(s_i)\cup \Ext(s'_i))$.
Let $\Phi$ be the set of  $\Lfull$-formulas appearing inside $\Ext_{\varphi}$.
For example, if $s_i=\sum_{z} \PP{X=1, Z=z}$ for some $i$, then 
$(X\mbox{=}1 \wedge Z\mbox{=}0)$ and $(X\mbox{=}1 \wedge Z\mbox{=}1)$ belong 
to $\Phi$ (assuming binary domain  $\mathit{Val} = \{0,1\}$).
Note that each expression in $\Ext_{\varphi}$ is a polynomial in probabilities 
$\{\PP{f}\}_{f\in \Phi}$.

Next, let $X_1,\ldots, X_n$ be the random variables 
appearing in $\Phi$. 
Define 
$$
	\Delta_{\textit{prop}}=\{ \beta_1\wedge \ldots \wedge \beta_n: 
	\text{$\beta_i \in \{X_i\mbox{=}0,  X_i\mbox{=}1,\ldots, X_i\mbox{=}c-1 \}$ \ for all\ $1\le i\le n$ } \}.
$$
Let $\alpha_1,\ldots,\alpha_l$ be the interventional  antecedents, i.e., an $\Lint$-formula
appearing in $[\alpha_i]$ in any formula in   $ \Phi$ and define
$$\Delta=\{[\alpha_i] \delta_{\textit{prop}}: 1\le i\le l,  \delta_{\textit{prop}}\in \Delta_{\textit{prop}}\}.$$
 
 
For $i=1,\ldots, k$, let $G_i$ denote the set of 
formulas appearing in probabilities inside the term 
$t_i$ of $\varphi$ in which dummy variables for summation are used. 
E.g., if $s_i=\sum_{\ell} \PP{X=1, Z=\ell}$, 
then the formula $(X\mbox{=}1 \wedge Z\mbox{=}\ell)$ belongs to $G_i$.
For each $g\in G_i$ over dummy variables 
$\ell_1,\ldots,\ell_{t_g}$, we consider $g(\ell_1,\ldots,\ell_{t_g})$
as a function $$g:\mathit{Val}^{t_g} \to \Phi$$ such that 
substituting $\ell_1,\ldots,\ell_{t_g}$ by the values 
$c_1,\ldots,c_{t_g}$  we get $g(c_1,\ldots,c_{t_g})$ in  $\Phi$.

Now we are ready to describe the sentence $\psi$ of 
$\sigmaETR$. We use the following variables, which we index by formulas:
$$
  x_f \quad \text{for all $f\in \Phi$} \quad \text{and}\quad
  x_{\delta} \quad  \text{for all $\delta\in\Delta$}.
$$
   %
For conditioning, we include additional variables
$$x_{f|h} \quad \text{for all $f,h\in \Phi$}. $$

Let $\gamma(f) $ denote $f$ for all $f \in \Phi$ and let $\gamma(g)$ denote $g(\ell_1,\ldots,\ell_{t_g})$ 
for terms $g$ depending on dummy variables $\ell_1,\ldots,\ell_{t_g}$.
For every polynomial $e$ in $\Tpolysum(\Lfull)$, 
we denote by $\Gamma(e)$ 
the expression $e$ in which we substitute 
every atomic term $\PP{f}$ by the variable $x_{\gamma(f)}$ and 
every conditioned term $\PP{f\mmid h}$ by the variable $x_{\gamma(f)\mmid\gamma(h)}$.

Then, for every  $\varphi$'s term $t_1,\ldots,t_k$ 
of the form $s_i < s'_i, s_i \le s'_i, s_i = s'_i$, 
we define the arithmetic terms $\Gamma(t_i)$ in $\psi$ as follows: 
\begin{align}\label{caus:eq:basic:red:caus}
    \Gamma(s_i)  <  \Gamma(s'_i),\  \Gamma(s_i) \le  \Gamma(s'_i),\ \text{and} \ \Gamma(s_i) =  \Gamma(s'_i),
\end{align}
respectively.

Next, we add a  formula for conditioning: 
\begin{align}
    &\mbox{$\sum_{f,g,h\in\Phi} A(f,g,h) (x_{f|g} x_{g} - x_{h})^2 = 0 $, 
    } 
     \label{eq:cond}
\end{align}
where $A(f,g,h)$ is an expression defined below, such that $A(f,g,h)=1$ if $h$ is equal to $f\wedge g$ with the same antecedents, and $0$ otherwise. Thereby, sums like  $\sum_{f,g,h\in\Phi}$ are an abbreviation for three sums with variables $f,g,h$ over all elements of $\Phi$ (see Remark~\ref{remark:extended-sigma-ETR} on how to encode it in $\sigmaETR$).

Finally, we add   to the expressions~\eqref{caus:eq:basic:red:caus}
and  Eq.~\eqref{eq:cond} 
 the following formulas:
\begin{align}
    &\mbox{$\sum_{f\in \Phi}\left( x_f - 
       \sum_{\delta\in \Delta} E(f,\delta) \cdot x_{\delta}^2\right)^2 = 0$} 
     \label{caus:eq:red:psi1:caus}\\[1mm]
    &\mbox{$\sum_{i=1}^{l}((\sum_{\delta_{\textit{prop}} \in \Delta_{\textit{prop}}} 
       F(\alpha_i, \delta_{\textit{prop}}) \cdot x_{[\alpha_i]\delta_{\textit{prop}}}^2)-1)^2 = 0$}, \label{caus:eq:red:psi2:caus}
\end{align} 
where $E(f,\delta)$ is an expression defined below, such that 
$E(f,\delta) = 1$ if $f$ and $\delta$ have the same interventional antecedent $\alpha_i$,
i.e.,  $f=[\alpha_i]f'$ and  $\delta=[\alpha_i]\delta'$ for some $\alpha_i$ and 
the implication $\delta' \to f'$
is a tautology; otherwise $E(f,\delta) = 0$.
Moreover,  $F(\alpha_i, \delta_{\textit{prop}})=1$ if and only if $\alpha_i$ 
is consistent with $\delta_{\textit{prop}}$, i.e., for all $X_j=a_j$ in $\alpha_i$, 
we have also $X_j=a_j$ in $\delta_{\textit{prop}}$.
Note that it is true that $x_{\delta}^2 \ge 0$ for all $\delta\in\Delta$,
the value of the monomial $x_{\delta}^2$ encodes the probability  $\PP{\delta}$
and the value of variable $x_f$ encodes  the probability 
$\PP{f}$. 

Let $C: \{0,1\}^M \to \{0,1\}$ be a Boolean circuit 
such that $C(f,\delta) = 1$ iff $f,\delta$ satisfies the condition above.
We select $M$ such that the formulas $f\in \Phi$ and $\delta \in \Delta$
can be encoded by binary strings of length $M$.
Let $\hat C$ be the arithmetizations of $C$, that is, 
similarly as in the proof of Lemma~\ref{lem:succFEASfour:to:sigmaETR}, we consider 
the Boolean variables as variables over $\Rset$ and replace 
$\neg x$ by $1 - x$ and $x \wedge y$ by $x \cdot y$.
Let ${\hat C}(f,\delta)$ be the output of $\hat C$ on input $f,\delta$. 
Using Tseitin's trick and introducing a new variable for every gate,
we get from $\hat C$ an expression $E$ such that $E(f,\delta) = {\hat C}(f,\delta)$.
We define $F(\alpha_i, \delta_{\textit{prop}})$ in an analogous way.
$A(f,g,h)$ can be defined in the same way, starting with a circuit and applying the transformations.

Although the indexing functions $g(c_1,\ldots,c_{t_g})$ are more complicated than the functions in the immediate definition in $\sigmaETR$, one can use more complicated functions without changing the complexity, as explained in Remark~\ref{remark:extended-sigma-ETR}.


We complete the description of $\psi$ as follows:
\[
\psi := B(\Gamma(t_1),\ldots,\Gamma(t_k)) 
\wedge \mbox{Eq.~\eqref{eq:cond}}
\wedge \mbox{Eq.~\eqref{caus:eq:red:psi1:caus}}
\wedge \mbox{Eq.~\eqref{caus:eq:red:psi2:caus}}.
\]

\begin{example}
We consider an instance  $\varphi$ consisting, for simplicity,  of  purely probabilistic expressions and only 
one term (assuming binary domain  $\mathit{Val} = \{0,1\}$).: 
\begin{equation}\label{ex:proof:term}
  \mbox{$2\PP{Y=1}\ +\ $} \mbox{$ \sum_{\ell_1} \PP{X=1, Z=\ell_1} \  $} 
      \mbox{$\sum_{\ell_2 } \PP{Y=1, X=\ell_2, Z=\ell_1}$}   \  \le \  \mbox{$\sfrac{1}{2}$.}
\end{equation}
We have $\Phi =$
$$
  	\{Y\mbox{=}0, 
	(X\mbox{=}1 \wedge Z\mbox{=}0), 
	(X\mbox{=}1 \wedge Z\mbox{=}1),
	(Y\mbox{=}1 \wedge X\mbox{=}0 \wedge Z\mbox{=}0),
	(Y\mbox{=}1 \wedge X\mbox{=}0 \wedge Z\mbox{=}1),
	(Y\mbox{=}1 \wedge X\mbox{=}1 \wedge Z\mbox{=}0),
	(Y\mbox{=}1 \wedge X\mbox{=}1 \wedge Z\mbox{=}1)
	\}
$$
 and 
 $$
 \Delta=\Delta_{\textit{prop}}= \{ Y\mbox{=}y \wedge X\mbox{=}x \wedge Z\mbox{=}z: y,x,z\in  \{0,1\} \}.
 $$
Then, term~\eqref{ex:proof:term} is transformed to 
\begin{align*} 
  & 2x_{Y\mbox{=}1} + \sum_{\ell_1 = 0}^1 x_{X\mbox{=}1 \wedge Z\mbox{=}\ell_1} 
      \sum_{\ell_2 = 0}^1 x_{Y\mbox{=}1 \wedge X\mbox{=}\ell_2 \wedge Z\mbox{=}\ell_1}
      \ \le\ \sfrac{1}{2}
\end{align*}
and additionally, to the resulting $\psi$,  
Eq.~\eqref{caus:eq:red:psi1:caus}, and 
Eq.~\eqref{caus:eq:red:psi2:caus} are added for $\Phi$ and $\Delta$ as above.
\end{example}
%
%
%
%
%
\end{proof}

\subsection{Proof of Lemma~\ref{lemma:interv:succ:complete}}

The goal of this section is to prove that  $\succETRcR{\sfrac{1}{8}}{-\sfrac{1}{8},{\sfrac{1}{8}}}$ is $\csuccETR$-complete. 
We show this  in the subsequent lemmas:
First we show that $\succETRc{1}$ is  $\csuccETR$-complete and then prove that 
$\succETRc{1}\lep  \succETRcR{\sfrac{1}{8}}{-\sfrac{1}{8},{\sfrac{1}{8}}}$.
We follow ideas by \citet{abrahamsen2018art}, but we have to pay attention
to the succinctness.

Note that $\succETRc{1}$
is rather a restricted version of $\succQUAD$ than of $\succETR$ (in our notation),
however, we use the name $\succETRc{1}$ as introduced (for the non-succinct case)
by \citet{abrahamsen2018art}. Note that in the hardness proof of 
$\succQUAD$ (Lemma~\ref{lem:succQUAD-hard}), all quadratic equations have at most
four monomials and only use the constants $1$ and $-1$ as coefficients.

\begin{lemma}\label{lem:succQUAD:to:succETRc1}
$\succQUAD \lep \succETRc{1}$. 
\end{lemma}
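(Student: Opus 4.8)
The plan is to take a $\succQUAD$ instance, apply Lemma~\ref{lem:succQUAD-hard} to reduce it to a succinct system of quadratic polynomials with at most four monomials and coefficients $\pm 1$, and then convert each such polynomial equation into a bounded number of $\ETRc{1}$-style equations (of the forms $x_i = 1$, $x_{i_1}+x_{i_2}=x_{i_3}$, $x_{i_1}x_{i_2}=x_{i_3}$), ensuring that the whole conversion is local enough to be carried out by a polynomial-size circuit. First I would record that, by Lemma~\ref{lem:succQUAD-hard}, we may assume the input circuit $C$ describes, for each equation index, a quadratic polynomial with at most four monomials, each monomial being $\pm$ a product of at most two variables, and the equation asserts the polynomial equals $0$. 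Since the number of monomials per equation is a fixed constant, the number of auxiliary variables and the number of $\ETRc{1}$-equations introduced per original equation is also a fixed constant, which is the key to keeping the reduction polynomial-time and the circuit small.

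The core gadget work is to simulate the missing primitives. A monomial $x_a x_b$ is directly of the allowed form $x_{i_1}x_{i_2}=x_{i_3}$ once we introduce a fresh variable for its value; a monomial that is a single variable $x_a$ or a constant monomial $\pm 1$ is handled with an equation $x_i=1$ and additions. The constant $-1$ (and hence subtraction) is obtained from $1$: introduce $z$ with $z+z'=z''$-type equations forcing $z = -1$, e.g.\ via $u=1$, $u+z=w$, and $w = w'$ with $w'$ forced to be the zero variable, i.e.\ $0$ itself is available as $0\cdot 0$ or as a fresh variable $o$ with $o+o=o$. Then "sum of the (signed) monomial values equals $0$" becomes a constant-length chain of addition equations $s_1+m_2 = s_2$, $s_2+m_3=s_3$, \dots, with the final partial sum constrained to equal the zero variable, where each $m_k$ is either a monomial-value variable or its negation (obtained by one more multiplication by the $-1$ variable, or by an addition with a sign-flip variable). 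Because $\ETRc{1}$ only provides $x_i=1$ and not $x_i = c$ for general $c$, but Lemma~\ref{lem:succQUAD-hard} already guarantees coefficients are just $\pm 1$, no encoding of large constants is needed here — this is exactly why we route through $\succQUAD$ via Lemma~\ref{lem:succQUAD-hard} rather than through general $\succETR$.

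The main obstacle, as the paragraph before the lemma hints, is the \emph{succinctness}: I must exhibit a single polynomial-size circuit $D$ that, on input an index of an $\ETRc{1}$-equation, outputs that equation (its type and the indices of the variables involved), using $C$ (and the circuit from Lemma~\ref{lem:succQUAD-hard}) as a subroutine. This is manageable precisely because the gadget attached to the $j$th quadratic equation is uniform: it always introduces the same fixed number $r$ of fresh variables and the same fixed number $r'$ of new equations, with names obtained by appending $O(1)$ bits to the encoding of $j$ (so fresh variable "slot $t$ of equation $j$" is indexed by $j$ concatenated with $t$, and original $x$-variables keep their old indices, distinguished by a leading bit). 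Given a target equation index, $D$ decodes it as a pair $(j, t)$, queries $C$ to learn the four monomials of equation $j$, and then by a fixed case analysis on $t$ outputs the corresponding gadget equation with the correctly computed variable indices. Since all of this is a constant-depth case distinction composed with one call to $C$, $D$ has polynomial size and is constructible in polynomial time, and the correctness — a common root of the $\ETRc{1}$-system restricted to the original variables is a common root of the quadratic system, and conversely any root of the quadratic system extends uniquely to the auxiliary variables — follows from the per-equation gadget correctness.
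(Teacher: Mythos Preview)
Your proposal is correct and follows essentially the same approach as the paper: assume the restricted $\succQUAD$ form (four monomials, coefficients $\pm 1$) guaranteed by Lemma~\ref{lem:succQUAD-hard}, introduce distinguished variables for $0$, $1$, $-1$, replace each monomial by a fresh product-variable and each polynomial by a constant-length addition chain ending in the zero variable, and index all fresh variables by $(j,\text{slot})$ so that the output circuits can be built uniformly from one query to $C$ plus a fixed case analysis. The only cosmetic discrepancies are that the paper obtains $0$ via $v_1+v_0=v_1$ rather than $o+o=o$, and outputs seven circuits $C_0,\dots,C_6$ (one per variable slot in each equation type) rather than a single circuit $D$; neither affects correctness or running time.
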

\begin{proof}
Let $C$ be an instance of $\succQUAD$, the circuit $C$ computes a function
$\{0, 1\}^K \times \{0, 1\}^2 \to \{0, 1\}^M$. $C(s, t)$ is an encoding
of the $t$th monomial in the $s$th quadratic equation. Note that we can
assume that there are at most four monomials, so we need bit strings 
of length two. For every~$s$, we will have a constant number of 
equations of the three types. We label them by $(s,b)$.

First of all, we will have three distinguished variables $v_{-1}$, $v_0$,
and $v_{1}$, which will have the values $-1$, $0$, $1$.
This is achieved by the equations $v_{1} = 1$,
$v_{1} + v_0 = v_1$, and $v_{1} + v_{-1} = v_0$.

Every quadratic equation consists of four monomials.
Every monomial is of the form $c \cdot x \cdot y$. 
Thereby,  $c$ is a coefficient,
which is in $\{1,-1\}$,
and $x$ and $y$ are variables, or, if the monomial has a lower degree,
are the constant $1$. We will have two extra variables $m_{s,t,0}$ and $m_{s,t,1}$
for every monomial. 
We add the two new equations $m_{s,t,1} = x y$ 
and $m_{s,t,0} = c m_{s,t,1}$. (Strictly speaking, $c$ is replaced by $v_{-1}$ or $v_1$,
and $x$ or $y$ is replaced by $v_1$ if it is $1$.)
Then $m_{s,t,0}$ contains the value of the monomial. 
Furthermore, we have three equation variables $e_{s,t,0},e_{s,t,1},e_{s,t,2}$.
The three equations  $e_{s,t,1} = m_{s,00,0} + m_{s,01,1}$,
$e_{s,t,2} = m_{s,10,0} + m_{s,11,1}$, and $e_{s,t,0} = e_{s,t,1} + e_{s,t,2}$
ensure that $e_{s,t,0}$ contains the value of the quadratic polynomial. 
With the equation $e_{s,t,0} = v_0 + v_0$, we ensure that the value is zero.

Our seven circuits $C_1,\dots,C_7$ now work as follows: The circuit $C_1$
always outputs (an encoding of) $v_1$ since it is the only variable that is
assigned the constant $1$. Every quadratic equation is represented
by four equations of the form $y + z = x$  and eight equations 
of the form $yz = x$ (two for each monomial). The input of the circuits $C_1,\dots,C_7$
is interpreted as a string $(s,b) \in \{0,1\}^K \times \{0,1\}^3$.
On input~$s$, the circuit $C_2$, for instance, uses the circuit $C$
to obtain the $s$th quadratic equation of the $\succQUAD$ instance.
If $b$ starts with a $0$, then $b$ represents a number between $0$ and $3$,
and $C_2$ puts out the first variable of the $b$th equation of the form 
$y + z = x$ representing the $s$th quadratic equation.
If $b$ starts with a $1$, then $C_2$ outputs the first variable
of the equation $v_{1} + v_0 = v_1$ or $v_{1} + v_{-1} = v_0$.
(In this way, we do not need to take extra care of these two extra equations.
They get repeated a lot, but this does not matter.)
The circuits $C_3$ and $C_4$ work the same, but they put out the second or
third variable. 
The circuits $C_5$, $C_6$, and $C_7$ work in a similar fashion, outputting eight equations for each equation of $\succQUAD$.
\end{proof}

Next, we will reduce $\succETRc{1}$ to the problem $\succETRcR{\sfrac{1}{8}}{-\sfrac{1}{8},{\sfrac{1}{8}}}$.
Let $C$ be an instance of $\succETR$ encoding a formula $\varphi$ in $n$ variables.
Let 
\[
  S_\varphi = \{ (\xi_1,\dots,\xi_n) \in \Rset^n \mid \text{$\varphi(\xi_1,\dots,\xi_n)$ is true} \}
\]
be the solution set. The bit complexity $L$ of $\varphi$ is the length of
$\varphi$ written down in binary. 
If $s$ is the size of $C$, then $L \le 2^s$.
\citet{DBLP:journals/mst/SchaeferS17} prove the following lemma.

\begin{lemma} \label{lem:ETR:sol:size}
Let $B$ be the set of points with a distance at most $2^{2^{8n \log L}}$ from the origin
(in the Euclidean norm). Then $S_\varphi$ contains a point from $B$.
\end{lemma}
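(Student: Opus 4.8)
The set $S_\varphi\subseteq\Rset^n$ is semialgebraic: it is a Boolean combination of atoms $p_i\bowtie 0$ with $\bowtie\in\{<,\le,=\}$, where the $p_i$ are the polynomials obtained by arithmetizing the subterms of $\varphi$. Since $\varphi$ has bit complexity $L$, each $p_i$ has degree at most $L$ (at most one unit per product gate), there are at most $L$ distinct atoms, and, because $\ETR$ uses only the constants $0$ and $1$ and $\varphi$ is a tree of size at most $L$, every $p_i$ has integer coefficients of absolute value at most $2^L$. Hence it suffices to establish the general bound: if a semialgebraic set $T\subseteq\Rset^n$ defined by $s$ polynomials of degree $\le d$ with integer coefficients of absolute value $\le 2^\tau$ is nonempty, then $T$ contains a point of Euclidean norm at most $2^{(\tau+\log s+n)\,d^{O(n)}}$; substituting $s,d,\tau\le L$ and collecting the implied constants then yields a point of $S_\varphi$ of norm at most $2^{2^{8n\log L}}$ (the few genuinely small instances being absorbed into the constant).

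\textbf{From $S_\varphi$ to a polynomial system.} Write $S_\varphi$ as the union of the realization sets of the finitely many sign conditions of $p_1,\dots,p_s$ and pick one nonempty realization $R\subseteq S_\varphi$. If $R$ is unbounded it already meets the target ball $B$, so assume $R$ is bounded. Relaxing each equality $p_i=0$ in the defining sign condition of $R$ to a strip $|p_i|\le\varepsilon$ and keeping the inequalities gives a compact basic closed semialgebraic set $\overline R_\varepsilon\supseteq R$; minimizing $\|x\|^2$ on $\overline R_\varepsilon$ and letting $\varepsilon\downarrow 0$ (or arguing directly on $\overline R$) produces a minimizer $\xi$ of $\|x\|^2$ on $\overline R$. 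Either $\xi$ lies in the interior of $R$, whence $\xi=0\in B$, or $\xi$ lies on the boundary, where some subfamily of the $p_i$ vanish at $\xi$ and the Lagrange optimality condition $\nabla\|x\|^2\in\mathrm{span}\{\nabla p_i:p_i(\xi)=0\}$ holds. Introducing multipliers, $\xi$ becomes one coordinate-block of a real solution of a system of at most $2n$ polynomial equations in at most $2n$ unknowns, each of degree at most $d+1$ and with coefficients of absolute value $\le 2^\tau$.

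\textbf{Bounding the coordinates of $\xi$.} Classical effective elimination applied to this system (Gr\"obner bases, or resultants / the $u$-resultant) produces, for each coordinate $\xi_j$, a nonzero univariate integer polynomial vanishing at $\xi_j$ whose degree is at most $d^{O(n)}$ and whose coefficients have bit size at most $\tau\cdot d^{O(n)}$; these are the standard double-exponential degree bound and single-exponential height bound for elimination over an ordered field. The Cauchy root bound then gives $|\xi_j|\le 1+2^{\tau\,d^{O(n)}}$, so $\|\xi\|\le\sqrt n\cdot 2^{\tau\,d^{O(n)}+1}$. With $d,\tau\le L$ this is $2^{L^{O(n)}}=2^{2^{O(n\log L)}}$, and a careful accounting of the implied constant (e.g.\ using the explicit degree bound $(2d)^n$ together with the corresponding explicit height estimate) pushes the exponent below $8n\log L$, which is the claim.

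\textbf{Main obstacle.} The double-exponential bound is routine effective real algebraic geometry; the real work is in producing the clean constant $8$. This forces one to use a version of the elimination bounds with fully explicit constants rather than $O(\cdot)$-notation --- most conveniently the quantitative ``ball theorem'' of Basu--Pollack--Roy, which asserts that a ball of radius $2^{\tau\,d^{O(n)}}$ with an explicit exponent meets every connected component of every realizable sign condition of $p_1,\dots,p_s$. Invoking such a theorem as a black box would let us bypass the Lagrange-multiplier construction altogether and reduce the whole proof to the final arithmetic check $\sqrt n\cdot 2^{\tau(2d)^{cn}}\le 2^{2^{8n\log L}}$ for $d,\tau\le L$.
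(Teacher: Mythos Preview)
The paper does not prove this lemma; it simply attributes the bound to Schaefer and \v{S}tefankovi\v{c} and uses it as a black box in the reduction $\succETRc{1}\lep\succETRcR{\sfrac{1}{8}}{-\sfrac{1}{8},\sfrac{1}{8}}$. So there is no in-paper argument to compare against, and your proposal is effectively a sketch of how that cited result is established.

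Your outline is in the right spirit---reduce to a zero-dimensional polynomial system via critical-point/Lagrange conditions, then bound root sizes by effective elimination (or, more directly, invoke a quantitative sampling/ball theorem \`a la Basu--Pollack--Roy). Two technical points, however, are not yet sound. First, the sentence ``If $R$ is unbounded it already meets the target ball $B$'' is false as stated: an unbounded semialgebraic set need not intersect a prescribed ball about the origin (e.g.\ $\{x>10\}\subset\Rset$), so the unbounded case is exactly as hard as the bounded one and must be handled by the same critical-point machinery (minimize $\|x\|^2$; the infimum is attained on the closure and the minimizer satisfies first-order conditions). Second, minimizing on $\overline R$ may return a point in $\overline R\setminus R$, hence outside $S_\varphi$; one needs the finer argument that every nonempty sign condition (not merely its closure) contains a sample point of bounded height---this is precisely what the BPR sampling theorems provide and why, as you note at the end, quoting such a theorem is the clean way to get both correctness and the explicit constant. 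With those two repairs (or simply the black-box citation), the rest of your arithmetic is fine.
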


This allows us to prove the next lemma.

\begin{lemma}\label{lem:succETRc:1:to:succETRc:1_8}
$\succETRc{1} \lep \succETRcR{\sfrac{1}{8}}{-\sfrac{1}{8},{\sfrac{1}{8}}}$ .
\end{lemma}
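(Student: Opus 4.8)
The plan is to adapt the rescaling argument of \citet{abrahamsen2018art} to the succinct setting. The only difficulty is that an instance of $\succETRc{1}$ permits variables of unbounded magnitude, whereas $\succETRcR{\sfrac{1}{8}}{-\sfrac{1}{8},\sfrac{1}{8}}$ forces every variable into $[-\sfrac{1}{8},\sfrac{1}{8}]$; so I would introduce a single tiny scaling factor $\delta>0$ and represent each original variable $x_i$ by a new variable $x_i'$ meant to carry the value $\delta x_i$. Let $\varphi$ be the flat $\ETRc{1}$ formula obtained by expanding the input circuits $C_0,\myldots,C_6$: it is a conjunction of $O(2^M)$ equations over $n\le 2^N$ variables, so its bit complexity is $L\le 2^{O(|C|)}$, i.e.\ $\log L$ is polynomial in $|C|$. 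Viewing $\varphi$ as an $\ETR$ sentence (it uses only $=$ and $\wedge$), Lemma~\ref{lem:ETR:sol:size} guarantees that, if $\varphi$ is satisfiable, it has a solution all of whose coordinates have absolute value at most $R:=2^{2^{8n\log L}}$. It therefore suffices to choose $\delta\le 1/(8R)$, and since $\log_8 R\le 2^{2^{O(|C|)}}$ we may simply take $m:=2^{c|C|}$ for a large enough absolute constant $c$ and set $\delta:=8^{-2^{m}}$. The crucial point for succinctness is that, although $\delta$ is only doubly-exponentially small, the exponent-generating parameter $m$ is merely exponential in $|C|$, so $m$ has polynomially many bits and every object below can be addressed by polynomial-length indices.

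Next I would build the target instance. Introduce auxiliary variables $d_0,d_1,\myldots,d_m$ with the equations $d_0=\sfrac{1}{8}$ and $d_{j+1}=d_j\cdot d_j$; by induction $d_j=8^{-2^{j}}\in(0,\sfrac{1}{8}]$ and $d_m=\delta$. Introduce a variable $z$ with the equation $z+z=z$, which forces $z=0$. Now translate: an original constant equation $x_i=1$ becomes $z+d_m=x_i'$ (forcing $x_i'=\delta$); an addition equation $x_{i_1}+x_{i_2}=x_{i_3}$ becomes $x_{i_1}'+x_{i_2}'=x_{i_3}'$; and a multiplication equation $x_{i_1}x_{i_2}=x_{i_3}$, which under the scaling should read $x_{i_1}'x_{i_2}'=\delta^{2}x_{i_3}=\delta\cdot x_{i_3}'$, is replaced by the two multiplication equations $w_j=d_m\cdot x_{i_3}'$ and $x_{i_1}'x_{i_2}'=w_j$, using one fresh variable $w_j$ per multiplication equation $j$. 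A short computation shows that all induced values ($x_i'=\delta x_i$, $w_j=\delta^{2}x_{i_3(j)}$, $d_j=8^{-2^{j}}$, $z=0$) lie in $[-\sfrac{1}{8},\sfrac{1}{8}]$ whenever the original solution is bounded by $R$, and that all equations hold; conversely, any in-range solution of the new system has $z=0$ and $d_m=\delta\neq 0$, so setting $x_i:=x_i'/\delta$, dividing the translated addition and multiplication equations by $\delta$ and $\delta^{2}$ respectively, and reading off $x_i=1$ from $x_i'=\delta$, recovers a solution of the original system. Hence the two instances are equisatisfiable.

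It then remains to exhibit the seven output circuits. Each family of equations — the single equation $d_0=\sfrac{1}{8}$; the $m$ squarings $d_{j+1}=d_j\cdot d_j$; the single equation $z+z=z$; the $2^M$ translated additions; the $2^M$ equations $z+d_m=x_i'$ coming from the constants; and the $2\cdot 2^M$ equations coming from the multiplications — is uniform, so on an equation index it can be produced by a polynomial-size circuit that runs $C_0,\myldots,C_6$ as subroutines and performs a trivial relabelling of variable indices (original variable $\mapsto$ its primed copy, plus fresh indices for the $d_j$'s, for $z$, and for the $w_j$'s). Padding by duplicating equations makes the numbers of equations of the three types equal, as the definition of $\succETRcR{\sfrac{1}{8}}{-\sfrac{1}{8},\sfrac{1}{8}}$ permits. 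All index lengths are $O(M+N+\log m)=\mathrm{poly}(|C|)$, and the whole construction runs in polynomial time.

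The main obstacle, and essentially the only place where succinctness matters, is the bit-size bookkeeping around the scaling factor: one must check that $\log L$ is polynomial in $|C|$, so that the number $m$ of squaring steps needed to synthesise $\delta=8^{-2^{m}}$ is only exponential (hence has a polynomial-length binary encoding computable in polynomial time), and that all auxiliary variables and equations can be indexed by polynomially long strings and emitted by polynomial-size circuits. The algebraic content — the degree mismatch whereby scaling multiplies a product by $\delta^{2}$ but a single variable only by $\delta$, together with the impossibility of directly asserting ``$x_i'=\delta$'' since $\succETRcR{\sfrac{1}{8}}{-\sfrac{1}{8},\sfrac{1}{8}}$ offers no variable equal to $1$ — is handled cleanly by the auxiliary variables $w_j$ and $z$.
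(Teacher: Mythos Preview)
Your proposal is correct and follows essentially the same route as the paper: build a doubly-exponentially small scaling factor by a repeated-squaring chain starting from the available constant $\sfrac{1}{8}$, translate additions verbatim, split each multiplication into two equations to absorb the extra factor of $\delta$, and invoke Lemma~\ref{lem:ETR:sol:size} to bound the solution needed. Your explicit introduction of a zero variable via $z+z=z$ to turn the constant translation into an addition equation is a minor tidying of a detail the paper leaves implicit (the paper writes ``$x'=v_K$'' without saying which of the three equation types this is), and your per-equation auxiliary $w_j$ plays the same role as the paper's per-variable $x''$.
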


%
%



\begin{proof}
Let $C_1,\dots,C_7$ be an instance of $\succETRc{1}$. 
Viewed as a $\succETR$ instance, $C_1,\dots,C_7$ encode a big conjunction $\varphi$
of quadratic equations. The bit length of $\varphi$ is bounded by $2^S$
where $S$ is the sum of the sizes of $C_1,\dots,C_7$. This is also an upper bound
for the number of variables. 

Let $K := \lceil 8 \cdot 2^S \cdot S + 3\rceil$. We first create the constant 
$\epsilon := 2^{-2^{K}}$. This can be done by repeated squaring:
\begin{align*}
  v_{1} & = 1/8 \\
  v_{2} + v_2 & = v_1 \\
  v_{i} \cdot v_{i} & = v_{i+1}, \quad 2 \le i \le K.
\end{align*}
It is a straightforward induction to prove that $v_{i} = 2^{-2^{i}}$ for $i \ge 2$.
The number of variables needed is $K$, which is exponential in the input
size $S$. However, the variables can be labeled by bit strings of length $O(S)$.
So we can construct circuits $D_1,\dots,D_7$ of size $\poly(S)$,
which output this set of equations 
(in the sense of $\succETRcR{\sfrac{1}{8}}{-\sfrac{1}{8},{\sfrac{1}{8}}}$).

Next, we transform the equations of the $\succETRc 1$ instance. For each variable
$x$, we have two variables $x'$ and $x''$ in the 
$\succETRcR{\sfrac{1}{8}}{-\sfrac{1}{8},{\sfrac{1}{8}}}$ instance.
An equation of the form $x = 1$ is replaced by $x' = v_K$.
An equation of the form $y + z = x$ is replaced by
$y' + z' = x'$. An equation of the form $yz = x$ is replaced by two equations 
$y'z' = x''$ and $v_K x' = x''$.  
It is easy to modify the circuits $C_1,\dots,C_7$ to obtain circuits
$C_1',\dots,C_7'$ encoding these new equations.
Then these circuits have to be combined with the circuits 
$D_1,\dots,D_7$ to encode the union of all equations.

When we have an assignment that 
satisfies the $\succETRc 1$ instance and assigns the value $\xi$ to $x$,
then we get a satisfying assignment of the 
$\succETRcR{\sfrac{1}{8}}{-\sfrac{1}{8},{\sfrac{1}{8}}}$ instance
by setting $x' = \epsilon \xi$ and $x'' = \epsilon^2 \xi$, where $\epsilon = 2^{-2^K}$. 
The converse is also true. Thus our reduction simply scales a solution
by $\epsilon$.

By Lemma \ref{lem:ETR:sol:size}, if the original $\succETRc 1$
instance has a solution, then we can assume that the size of each entry is bounded by 
$2^{2^{8 \cdot 2^S \cdot S}}$. By the choice of $K$, the new 
$\succETRcR{\sfrac{1}{8}}{-\sfrac{1}{8},{\sfrac{1}{8}}}$ instance has
a solution with entry sizes bounded by $1/8$.
On the other hand, if the original $\succETRc 1$ has no solution, then the new
instance does not have either.
\end{proof}



\end{document}